
\documentclass[nohyperref]{article}

\usepackage{microtype}
\usepackage{graphicx}
\usepackage{subcaption}
\usepackage{booktabs} 

\usepackage{hyperref}



\usepackage[accepted]{icml2022}


\usepackage{amsmath,amsfonts,bm}









\def\eqref#1{equation~\ref{#1}}









\def\1{\bm{1}}




\def\rvx{{\mathbf{x}}}







\DeclareMathAlphabet{\mathsfit}{\encodingdefault}{\sfdefault}{m}{sl}
\SetMathAlphabet{\mathsfit}{bold}{\encodingdefault}{\sfdefault}{bx}{n}













\usepackage{hyperref}
\usepackage{url}
\usepackage{amsmath}
\usepackage{amsthm}
\usepackage{algorithm}
\usepackage{enumitem}
\usepackage{algorithmic}
\usepackage{mathtools}
\usepackage{tabularx}
\usepackage{wrapfig}
\usepackage{capt-of}
\usepackage{booktabs}
\usepackage{float}
\usepackage{comment}
\usepackage{varwidth}

\usepackage[capitalize,noabbrev]{cleveref}

\theoremstyle{plain}
\newtheorem{theorem}{Theorem}[section]
\newtheorem{proposition}[theorem]{Proposition}

\theoremstyle{definition}
\newtheorem{definition}[theorem]{Definition}

\theoremstyle{remark}

\usepackage[textsize=tiny]{todonotes}

\definecolor{Red}{rgb}{0.768, 0.054, 0.054}
\definecolor{Green}{rgb}{0,0.4,0.7}


\hypersetup{
    colorlinks=true,
    citecolor=teal,
    linkcolor=Red,
    urlcolor=Green,
    }

\icmltitlerunning{Set Based Stochastic Subsampling}

\begin{document}

\twocolumn[
\icmltitle{Set Based Stochastic Subsampling}



\icmlsetsymbol{equal}{*}

\begin{icmlauthorlist}
\icmlauthor{Bruno Andreis}{KA}
\icmlauthor{Seanie Lee}{KA}
\icmlauthor{A. Tuan Nguyen}{OX}
\icmlauthor{Juho Lee}{KA,AI}
\icmlauthor{Eunho Yang}{KA,AI}
\icmlauthor{Sung Ju Hwang}{KA,AI}
\end{icmlauthorlist}

\icmlaffiliation{KA}{Graduate School of AI, Korea Advanced Institute of Science and Technology (KAIST), Seoul, South Korea}
\icmlaffiliation{AI}{AITRICS, Seoul, South Korea}
\icmlaffiliation{OX}{University of Oxford, Oxford, United Kingdom}
\icmlcorrespondingauthor{Bruno Andreis}{andries@kaist.ac.kr}
\icmlcorrespondingauthor{Sung Ju Hwang}{sjhwang82@kaist.ac.kr}

\icmlkeywords{Machine Learning, ICML}

\vskip 0.3in
]



\printAffiliationsAndNotice{}  

\begin{abstract}
Deep models are designed to operate on huge volumes of high dimensional data such as images. In order to reduce the volume of data these models must process, we propose a set-based two-stage end-to-end neural subsampling model that is jointly optimized with an \textit{arbitrary} downstream task network (e.g. classifier). In the first stage, we efficiently subsample \textit{candidate elements} using conditionally independent Bernoulli random variables by capturing coarse grained global information using set encoding functions, followed by conditionally dependent autoregressive subsampling of the candidate elements using Categorical random variables by modeling pair-wise interactions using set attention networks in the second stage. We apply our method to feature and instance selection and show that it outperforms the relevant baselines under low subsampling rates on a variety of tasks including image classification, image reconstruction, function reconstruction and few-shot classification. Additionally, for nonparametric models such as Neural Processes that require to leverage the whole training data at inference time, we show that our method enhances the scalability of these models. 
\end{abstract}
\section{Introduction}
Deep models operate on large volumes of high-dimensional dense inputs such as the pixels of an image~\citep{imagenet,cifar,celeba}. Training or evaluating models with such data is computationally expensive and several works~\citep{concreteautoencoders,dps,invase} have proposed subsampling techniques to subsample such dense inputs. Subsampling methods have the potential to drastically reduce the data acquisition effort and reduce the inference time of algorithms that operate on dense inputs. Additionally, subsampling techniques have found applications in medical research for the purpose of interpretation~\citep{lime}.

However, these methods have a major drawback in that they require a fixed input structure. For instance, some of these methods are only applicable when each feature (e.g. a pixel) of the input is 1-dimensional. 
This restriction is imposed by the way the subsampling methods are designed: the input (e.g. an image) is flattened to a single vector and a model predicts a binary mask for each feature (e.g. pixels). This becomes problematic when we consider
a 3-channel image. Flattening out the image results in ambiguities as to which pixels to select since channels are treated independently.
For instance, in order to perform subsampling on CIFAR10 images, INVASE~\citep{invase} and DPS~\citep{dps} convert the images into single channel images (e.g. grey-scaled images) before subsampling pixels. In more extreme cases such as subsampling of training instances (different from instance-wise feature selection), each feature is itself an image (each possibly multi-channel) and hence those subsampling techniques are inapplicable. Finally, we show in our experiments that
most of these methods fail under extremely low subsampling rates and their performance is similar to random sampling in this setting.

\begin{figure*}
    \centering
    \includegraphics[width=1.0\textwidth]{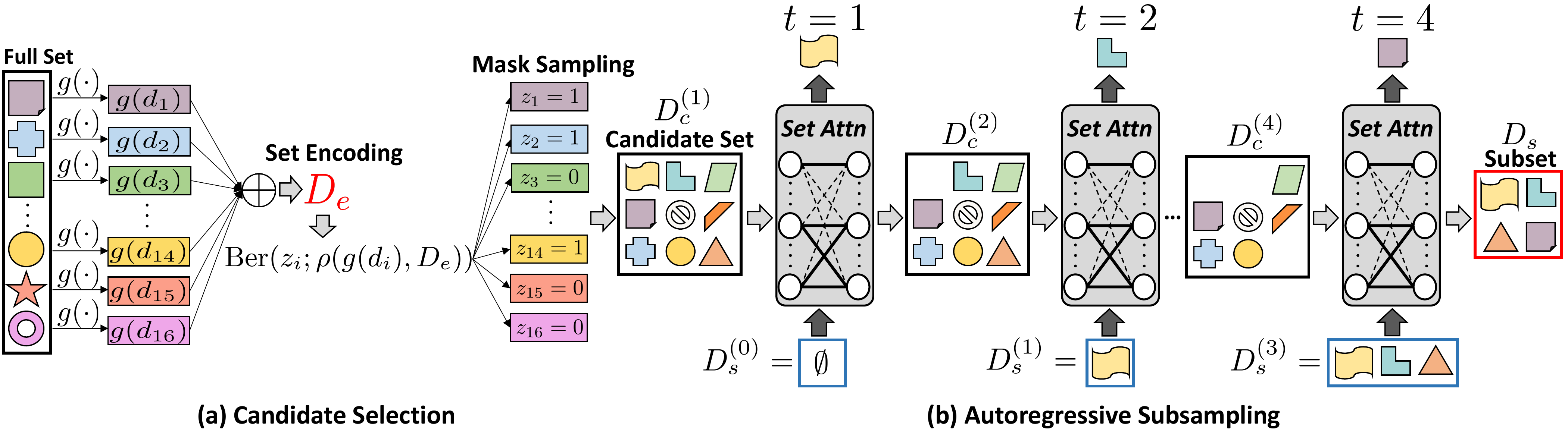}
    \vspace{-0.3in}
    \caption{\small \textbf{Concept}: Two-stage set-based stochastic subsampling process. \textbf{(a)} In the first stage, we screen out less important samples to construct the candidate subset. \textbf{(b)} In the second stage, we autoregressively subsample from the candidate subset.}
    \label{concept-figure}
    \vspace{-0.15in}
\end{figure*}

In order to tackle these limitation, we propose to consider each feature or instance as an element of a set.
We formulate the subsampling problem as selecting a subset of features or instances that minimizes the performance
degradation of an arbitrary model on an arbitrary task such as image classification, regression or instance subsampling
for target tasks such as few-shot classification. As a result, there are several advantages compared to the previous works.
First, we can handle arbitrary input structure using set functions~\citep{zaheer2017deep, lee2018set} parameterized with
expressive neural networks. Second, a subsampling model with set functions can
process arbitrary number of elements. As a result, the model is robust to a wide range of subsampling rates at test time even when trained with a
fixed sampling rate. Lastly, the set-based formulation unifies the feature and instance subsampling tasks under a single framework.

However, it is prohibitively expensive to process all the set elements (e.g. all the pixels in an image) with expressive set function such as Set Transformer~\citep{lee2018set} due to the self-attention among elements. Hence we propose an efficient two-stage subsampling method. In the first stage, as shown in Fig.~\ref{concept-figure}-(a), we learn the sampling rate for individual samples and efficiently screen out less important ones resulting in a subset which we call the \emph{candidate set}. The second stage is more fine-grained and designed to select a smaller subset from the candidate set by considering the relative importance of the samples in the candidate set using a conditionally dependent Categorical distribution through an autoregressive procedure as shown in Fig.~\ref{concept-figure}-(b). Once optimized, the resulting subsampling model can perform stochastic subsampling of a given input with linear time complexity. We call the resulting model \emph{Set based Stochastic Subsampling} (SSS) which is a general subsampling framework that is applicable to both \emph{feature} and \emph{instance} selection.

We validate SSS on multiple datasets and tasks such as 1D function regression, 2D image reconstruction and classification for both feature and instance selection. The experimental results show that SSS is able to subsample with minimal degradation on the target task performance under extremely low subsampling  rates, largely outperforming the relevant baselines. We summarize our contribution as follows.

\vspace{-0.1in}

\begin{itemize}
	\item {We reformulate the feature and instance subsampling problem by treating all the features or instances as members of a \textit{set}. This allows us to apply set based functions to the subsampling problem and extend its range of applicability.}
	
	\item {We propose a set based two-stage stochastic subsampling method that learns to efficiently subsample a set with minimal performance degradation on a target task.}
	
	\item {We verify the efficacy and generality of our method on various datasets for feature selection in the input space (e.g. pixels) and instance selection from a dataset, and show that it significantly outperforms the relevant baselines.}
\end{itemize}
\section{Related Work}\label{related-work}
\textbf{Set Functions}
Recently, extensive research efforts have been made in the area of set representation learning with the goal of obtaining
order-invariant (or equivariant) and size-invariant representations. Many propose simple methods to obtain set representations
by applying non-linear transformations to each element before a pooling layer~\citep{ravanbakhsh2016deep,qi2017pointnet,zaheer2017deep,sannai2019universal}. However, these models have limited
expressive power. Other approaches such as Set Transformer~\citep{lee2018set} consider the pairwise interactions among set
elements and hence can capture more complex statistics of set distributions. For large sets, the computational cost is expensive due to the self-attention operation. 

\textbf{Deep Learning Based Subsampling} 
Interest in deep learning based subsampling methods has produced many works mostly applied to feature selection. In ~\citet{concreteautoencoders}, continuous approximation of the Concrete Distribution~\citep{maddison2016concrete} is used for
\textit{global} feature selection where a fixed set of features are sampled across an entire dataset. In ~\citet{learningtoexplain},
instance-wise feature selection is used for interpretation of deep learning models applied to medical data. 
~\citet{dovrat2019learning} propose learning to subsample by generating virtual points, then matching them back to the original input. 
Several works~\citep{pointnet,pointnet++,pointcnn,fps,fastfps} also propose \emph{farthest point sampling}, which selects $k$ 
points from an input by ensuring that the selected samples are far from each other on a metric space. However our work is most 
similar to the recent works of ~\citet{invase} and ~\citet{dps} which learn a subsampling model conditioned on an given task.
However these models have limitations both in terms of their range of applicability and poor performance under extremely low subsampling
rates. Our method on the other hand is flexible, performs well under extremely low subsampling rates, and applicable to a wide range of subsampling problems.

\textbf{Image Compression} Due to the huge demand for data transfer over the internet, some works attempt to compress images with minimal
distortion. These models~\citep{toderici2017full,rippel2017real,mentzer2018conditional,li2018learning} typically consist of an encoder and
decoder, where the encoder transforms the image with a compact matrix and the decoder reconstructs the image. These methods, while highly
successful for the image compression problem, are less flexible than ours. Our model can be applied to arbitrary set structured data 
while the aforementioned models mainly work for images represented in tensor form.

\textbf{Active Learning} Active learning  aims to select data points for labeling given a small labeled set. This domain is
different from ours  since active learning does not consider the label information for the selected data points but our method does utilize label information. Also, our
motivation is quite different. We focus on optimal subsampling conditioned on an arbitrary task and this greatly differs from the goal of active learning. Methods such as~\citep{sener2017active,coleman2019selection,wei2015submodularity} all tackle the data selection problem in the active learning setting.

\textbf{Core-set Selection} Core-set~\citep{feldman2020introduction} methods aim at selecting a small \textit{weighted} subset of a given dataset that approximates the full \textit{dataset} with theoretical guarantees. They are mostly  targeted at instance selection and not, in generally, applicable to feature selection. Although we can utilize our subsampling method, SSS, for some instance selection tasks, our subsamplimg method, as well as those of ~\citet{dps} and ~\citet{invase}, is $\emph{not}$ a core-set selection method. Ours is based on a data driven approach, where we leverage expressive neural networks to learn to subsample the most representative subset for various downstream tasks. 
\section{Approach}\label{sec:approach}

\subsection{Preliminaries}
We consider a set $D = \{d_i\}_{i=1}^n$ as an input where each $d_i$ either represents an instance-label pair $(x_i, y_i)$ or a \textit{feature} such as the pixel value of an image.
We cast the subsampling problem as the selection of a subset $D_s = \{s_{j}\}_{j=1}^k \subset D$ with $k \ll n$ such that $\ell(\cdot, D) \approx \ell(\cdot, D_s)$ for an arbitrary loss function $\ell(\cdot, D)$  over the full set $D$.
In order to apply set functions to the subsampling problem, we need to properly design the neural network components to have some symmetrical properties such as permutation invariance (Definition~\ref{def-inv}), equivariance (Definition~\ref{def-equiv}), and exchangeability (Definition~\ref{def-exchange}).

\begin{definition}[Permutation]
We say a function $\pi$ is a permutation iff  $\pi \in \mathfrak{S}_n = \{f:[n]\rightarrow [n] \mid f\text{ is bijective}\}$.
\end{definition}

\begin{definition}[Permutation Invariance]
We say a function $f:X^n\rightarrow Y$ is permutation invariant iff $f(\pi(\rvx)) = f(\rvx)$ for all $\pi \in \mathfrak{S}_n$ and for all $\rvx \in X^n$.
\label{def-inv}
\end{definition}

\begin{definition}[Permutation Equivariance]
We say $f: X^n \rightarrow Y^n$ is permutation equivariant iff $\pi(f(\rvx)) = f(\pi(\rvx))$ for all $\pi \in \mathfrak{S}_n$ and for all $\rvx \in X^n$.
\label{def-equiv}
\end{definition}

\begin{definition}[Exchangeability]
A distribution for a set of random variables $X=\{\rvx_i\}_{i=1}^n$ is exchangeable iif $p(X) = p(\pi(X))$ for all $\pi \in \mathfrak{S}_n$
\label{def-exchange}
\end{definition}

In the following sections, we propose a two-stage Set based Stochastic Subsampling (SSS) method that leverages permutation invariant and equivariant set functions parameterized by $\theta$ to learn the conditional distribution $p_\theta(D_s|D)$. The first stage, \textit{candidate selection}, and the second stage, \textit{autoregressive subset selection}, are illustrated in Fig.~\ref{concept-figure}. In general, we estimate the parameters of the subsamping model $\theta$ by minimizing the following loss: $\mathbb{E}_{p(D)}[\mathbb{E}_{p_\theta(D_s|D)} [\ell(\cdot, D_s)]]$, where  $p(\cdot)$ denotes some unknown data distribution.

\subsection{Set based Stochastic Subsampling}
To select $D_s$, we propose to model the pairwise interactions among the elements of $D$ and then choose a few representative elements in $D$ based on the relative sample importance computed from the interaction scores. However, when the cardinality of $D$ is large, 
modeling pairwise interactions becomes computationally infeasible since we need to \textit{compare each element in $D$ with all the other elements}. This computational bottleneck motivates the first stage of SSS of which the goal is to construct a smaller subset $D_c$, which we refer to as the \emph{candidate set}, at a coarse level without considering pairwise interaction. We call the first stage \textit{candidate selection} and the second stage, which is more fine-grained, \textit{autoregressive subset selection} and selects $D_s$ from $D_c$.

\subsection{Candidate Selection}
We formulate the candidate selection problem as a random Bernoulli process where the parameters of the Beronulli distribution are conditioned on the \textit{set representation} of $D$ and the individual elements $d_i \in D$. Specifically, we first encode the set $D$ to a single representation $D_e$ with a set encoding function (see Fig.~\ref{concept-figure}-(a)) as follows:
\begin{equation}
    D_e = \frac{1}{n} \sum_{i=1}^{n} g (d_i), \quad n=|D|
\label{set-enc}
\end{equation}
where $g$ is a neural network which projects each element in $D$ independently to a lower
dimension. $D_e$ captures coarse-grained global information in $D$ with computational efficiency.
This encoding scheme is similar to DeepSets~\citep{zaheer2017deep} except that we do not perform message-passing, which is computationally expensive, between the set elements. 
\begin{proposition}
Given the set $D$ and the affine transformation with non-linearity $g$, the set encoding $D_e$ in Eq.~\ref{set-enc} is permutation invariant.
\label{prop:invariance}
\end{proposition}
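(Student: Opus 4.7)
The plan is to verify the defining property directly from \Cref{def-inv}: show that for every permutation $\pi \in \mathfrak{S}_n$ and every input set $D = \{d_1, \ldots, d_n\}$, applying $\pi$ to the input before computing $D_e$ yields the same value as computing $D_e$ on $D$ itself. Concretely, I would fix an arbitrary $\pi \in \mathfrak{S}_n$ and write out $D_e(\pi(D))$ by substituting $d_{\pi(i)}$ for $d_i$ in Eq.~\ref{set-enc}.

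The key step is then to observe that $g$ acts element-wise and independently on each $d_i$, so after passing through $g$ we obtain the indexed family $\{g(d_{\pi(i)})\}_{i=1}^n$, which is exactly a reindexing of $\{g(d_i)\}_{i=1}^n$ by the bijection $\pi$. Because $\pi$ is a bijection on $[n]$ and finite sums in a vector space are commutative and associative, the reindexed sum coincides with the original sum:
\begin{equation*}
\frac{1}{n}\sum_{i=1}^n g(d_{\pi(i)}) \;=\; \frac{1}{n}\sum_{j=1}^n g(d_j) \;=\; D_e.
\end{equation*}
No property of $g$ beyond being a well-defined function is actually needed for this step, so the assumption that $g$ is affine with non-linearity plays no essential role; it is invoked merely to specify the form of $g$ used in the paper.

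There is essentially no obstacle here: the statement is a direct consequence of the commutativity of summation combined with the elementwise application of $g$, and it mirrors the standard invariance argument used in DeepSets. The only thing worth being careful about is the distinction between applying $\pi$ to the tuple of inputs versus to the set itself; I would briefly note that since sets are unordered and we have defined $\pi$ to act on the index set $[n]$, both interpretations yield the same conclusion. The proof therefore fits in a few lines and closes the proposition.
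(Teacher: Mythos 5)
Your proof is correct and follows essentially the same route as the paper's: fix an arbitrary $\pi \in \mathfrak{S}_n$ and note that the sum $\frac{1}{n}\sum_{i=1}^n g(d_{\pi(i)})$ is just a reindexing of $\frac{1}{n}\sum_{i=1}^n g(d_i)$, so the two coincide by commutativity of finite summation. Your added remarks (that no property of $g$ beyond well-definedness is needed, and the tuple-versus-set caveat) are accurate but do not change the argument, which matches the paper's.
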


We then concatenate every $g(d_i)$ with $D_e$, denoted as $\overline{d_i}$. That is, $\overline{d_i} = [d_i, D_e]$, where $[\:]$ is the concatenation operation.
This ensures that each element of $D$ has a \textit{global} view of all the other elements in the set at a coarse level.
For each $d_i \in D$, we sample a mask $z_i\sim p_\theta (z_i|d_i,D)$ with
\begin{equation}
     p_\theta(z_i | d_i, D) = \text{Ber}(z_i ; \rho (\overline{d_i})),\quad \rho(\overline{d_i}) = \sigma(h(\overline{d_i}))
\label{can-ber}
\end{equation}
where $h$ is a neural network that outputs the logits for the probability that $d_i$ is in the candidate set $D_c$ and $\sigma(\cdot)$
is the sigmoid function, and Ber denotes the Bernoulli distribution. $z_i$ is a binary random variable where $z_i = 1$ indicates that
$d_i$ is an element in $D_c$. We concatenate all $z_i$'s to obtain a single vector $Z = [z_1, \ldots, z_n]$. 
Since sampling from the Bernoulli distribution is not differentiable, during training, we use the continuous relaxations of the Bernoulli distribution~\citep{maddison2016concrete,jang2016categorical, gal2017concrete} to sample $z_i$ for each $d_i$. This is illustrated as Mask Sampling in Fig.~\ref{concept-figure}-(a). 
Although pairwise interactions are not considered in this stage, the ablation studies (Appendix~\ref{app:ablation}) show that learning $p_\theta(z_i | d_i, D)$
leads to selecting highly informative samples compared to random selection of the candidate set $D_c$. 

\textbf{Constraining the size of $D_c$} For computational efficiency, we want to restrict the size of $D_c$ to save computational cost when constructing $D_s$. 
Hence we introduce a sparse Bernoulli prior $p(Z)=\prod_{i=1}^n\text{Ber}(z_i;r)$ with small $r>0$ and minimize the KL divergence along with a target downstream task loss $\ell(\cdot,D_s)$ w.r.t $\theta$ as follows:
\begin{equation}
  \mathbb{E}_{p(D)} \left[ \mathbb{E}_{p_\theta(D_s | D)} [\ell(\cdot, D_s)] + \beta \text{KL}[p_\theta(Z|D) || p(Z)]\right]
  \label{eq:genearl-obj}
\end{equation}
where $p_\theta(Z|D) = \prod_{i=1}^n p_\theta(z_i|d_i,D)$ and $\beta >0$ is a hyperparmeter used to control the sparsity level in $Z$.

\begin{proposition}
The candidate selection function which outputs the probability for each element $D$ is permutation equivariant and the probability $p_\theta(Z|D)$ is exchangeable.
\label{prop-candidate}
\end{proposition}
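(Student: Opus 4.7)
The plan is to prove the two claims in sequence, both by direct verification using Proposition~\ref{prop:invariance} which already establishes that $D_e$ is permutation invariant.

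For equivariance of the candidate selection function, I would denote the function of interest as $F(D) = \bigl(\rho(\overline{d_1}), \ldots, \rho(\overline{d_n})\bigr)$, i.e.\ the vector of per-element Bernoulli probabilities. First I would argue that for any $\pi \in \mathfrak{S}_n$, the encoded set representation satisfies $D_e(\pi(D)) = D_e(D)$ by Proposition~\ref{prop:invariance}, since $D_e$ is just an average of $g(d_i)$'s. Consequently the concatenation $\overline{d_i}(\pi(D)) = [d_{\pi(i)}, D_e(\pi(D))] = [d_{\pi(i)}, D_e(D)] = \overline{d_{\pi(i)}}(D)$, so the $i$-th coordinate of $F(\pi(D))$ equals $\rho(\overline{d_{\pi(i)}}(D))$, which is exactly the $\pi(i)$-th coordinate of $F(D)$. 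This is the equation $F(\pi(D)) = \pi(F(D))$ required by Definition~\ref{def-equiv}.

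For exchangeability of $p_\theta(Z \mid D)$, I would start from the factorization $p_\theta(Z \mid D) = \prod_{i=1}^n \mathrm{Ber}(z_i;\, \rho(\overline{d_i}))$ stated in Eq.~\ref{can-ber} and shown below it. Applying the same permutation $\pi$ jointly to $Z$ and to $D$ and using the equivariance established above, I obtain
\begin{equation*}
p_\theta(\pi(Z) \mid \pi(D)) = \prod_{i=1}^n \mathrm{Ber}\bigl(z_{\pi(i)};\, \rho(\overline{d_{\pi(i)}})\bigr) = \prod_{j=1}^n \mathrm{Ber}\bigl(z_j;\, \rho(\overline{d_j})\bigr) = p_\theta(Z \mid D),
\end{equation*}
where the middle equality reindexes the product via $j = \pi(i)$ (a bijection on $[n]$). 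This is the conditional analogue of Definition~\ref{def-exchange} and matches how exchangeability is used elsewhere in the paper for a distribution indexed by a set.

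The argument contains no real obstacle; it is essentially a bookkeeping exercise that leverages (i) the invariance of $D_e$ already proved in Proposition~\ref{prop:invariance}, (ii) the fact that $\rho$ is applied elementwise, and (iii) commutativity of the finite product over independent Bernoulli factors. The only mild care point is to state clearly that exchangeability of the conditional distribution means joint permutation of the conditioning set $D$ and the output mask vector $Z$, rather than permuting $Z$ alone while holding $D$ fixed; I would flag this in one sentence so that the identity $p_\theta(\pi(Z)\mid \pi(D)) = p_\theta(Z\mid D)$ is unambiguous.
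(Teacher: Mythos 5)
Your proposal is correct and follows essentially the same route as the paper: equivariance is obtained from the permutation invariance of $D_e$ (Proposition~\ref{prop:invariance}) together with the elementwise application of $\rho=\sigma\circ h$ to the concatenations $\overline{d_i}=[d_i,D_e]$, and exchangeability follows from the factorized Bernoulli form by reindexing the product. The only presentational differences are that the paper routes the first part through a small composition-of-equivariant-maps lemma rather than comparing coordinates directly, and states exchangeability as $p_\theta(Z\mid D)=p_\theta(\pi(Z)\mid D)$ with $D$ fixed as a set, which is the same identity you prove once the joint permutation of $Z$ and the element ordering of $D$ is made explicit, as you rightly flag.
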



\subsection{Autoregressive Subset Selection}\label{sec3-4} 
At this stage in the pipeline, we have a set $D_c$ with $m=|D_c| \ll |D|$, which is small enough to perform fine-grained subset selection through pairwise modeling.
To select a subset with $k$ elements from $D_c$, we require $k$ iterative steps. As shown in Fig.~\ref{concept-figure}-(b), at time step $t$, we have the subset $D_s^{(t-1)}$ constructed from the previous iteration with $D_s^{(0)} = \emptyset$ and $D_c^{(t)} = \{s^{(t)}_1,\ldots, s^{(t)}_{m_t}\} = D_c \setminus D_s^{(t-1)}$. Assuming we have a function $\varphi \circ f$ (Set Attention in Fig~\ref{concept-figure}-(b)) for modeling pairwise interactions between the elements of an input set, we autoregressively compute the interaction scores at time step $t$ as follows:
\begin{equation}\label{eqn:interactions}
    \Tilde{\pi}^{(t)} = (\Tilde{\pi}^{(t)}_1, \ldots, \Tilde{\pi}^{(t)}_{m_t}) = \sigma (\varphi \circ f(D_c^{(t)}, D_s^{(t-1)}))
\end{equation}
where $\sigma(\cdot)$ denotes the sigmoid function and $\varphi \circ f$ is a composition of two neural networks: $f$ computes interaction scores between elements in $D^{(t)}_c$ and $\varphi$ outputs element-wise logits using the interaction scores. Further, $\Tilde{\pi}^{(t)}$ is the vector of interaction scores for all elements in $D_c^{(t)}$ at the current time step $t$. Given $\Tilde{\pi}^{(t)}_i>0$ for all $i=1,\ldots, m_t$, we can compute the probability of an element $s^{(t)}_i$ being selected from $D_c^{(t)}$ as:
\begin{equation}\label{eqn:interaction_probs}
    p_\theta(s^{(t)}_i|D^{(t)}_c, D^{(t-1)}_s) = \pi^{(t)}_i, \quad \pi^{(t)}_i = \frac{\Tilde{\pi}^{(t)}_i}{\sum_{j=1}^{m_t}\Tilde{\pi}^{(t)}_j},
\end{equation}
where $m_t = |D^{(t)}_c|$. That is, we normalize $\Tilde{\pi}^{(t)}$ over all the elements in $D^{(t)}_c$ at time step $t$ to obtain a valid probability distribution. The key to avoiding redundant elements in $D_s$ lies in the fact that for each element added to $D_s$, its selection is conditioned on both the candidate
set $D_c^{(t)}$ and all the elements in the subset $D_s^{(t-1)}$ as described in Eq.~\ref{eqn:interactions} \&~\ref{eqn:interaction_probs}.
For the choice of the function $f$, we use a MultiHead Attention Block (MAB)~\citep{lee2018set} which we describe in detail in \textbf{Appendix}~\ref{model-spec}. Additionally, we can stack multiple MABs for the function $f$ to model higher level interactions.

\begin{proposition}
The functions $\varphi$ and $f$, and the pairwise interaction score $\Tilde{\pi}_t$ are permutation equivariant for all time steps $t$ in the autoregressive subset selection stage.
\label{prop:equivariance}
\end{proposition}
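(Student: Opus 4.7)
The plan is to show, step by step, that each operation composing $\tilde\pi^{(t)}$ preserves permutation equivariance in the candidate set $D_c^{(t)}$. Fix $t$ and write $D_c^{(t)} = (s_1^{(t)},\dots,s_{m_t}^{(t)})$, viewed as an ordered tuple so that applying a permutation $\pi\in\mathfrak{S}_{m_t}$ makes sense. The claim is that for every such $\pi$,
\begin{equation*}
f(\pi(D_c^{(t)}),D_s^{(t-1)}) = \pi\bigl(f(D_c^{(t)},D_s^{(t-1)})\bigr),
\end{equation*}
and likewise for $\varphi\circ f$ and $\tilde\pi^{(t)}$. The proof proceeds by unwinding the definition of $f$ and using the two observations that (i) the Multi-Head Attention Block $\mathrm{MAB}(Q,K)$ of Set Transformer is equivariant in its query argument $Q$, and (ii) any function applied independently to each element of a tuple is automatically equivariant.

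First, for MAB: the attention output for query $q_i$ is a weighted sum over keys/values indexed by the second argument, so permuting the queries merely permutes the rows of the output matrix. Stacking several MABs (as described in Appendix~\ref{model-spec}) preserves this property by composition, so $f(D_c^{(t)},D_s^{(t-1)})$ is equivariant in $D_c^{(t)}$. Next, $\varphi$ is a feed-forward network applied row-wise to the output of $f$, hence equivariant by observation (ii); composing with the element-wise sigmoid $\sigma$ in Eq.~\eqref{eqn:interactions} still preserves equivariance. This already gives equivariance of $\tilde\pi^{(t)}=(\tilde\pi^{(t)}_1,\dots,\tilde\pi^{(t)}_{m_t})$ as a function of $D_c^{(t)}$.

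Finally, I would extend the statement to the normalized probabilities $\pi^{(t)}_i$ in Eq.~\eqref{eqn:interaction_probs}: the denominator $\sum_{j=1}^{m_t}\tilde\pi^{(t)}_j$ is a sum over all entries of an equivariant vector and is therefore a permutation-invariant scalar, so dividing each coordinate of the equivariant numerator by this invariant quantity still yields an equivariant tuple. Combining these steps with Proposition~\ref{prop:invariance} and Proposition~\ref{prop-candidate} yields the full claim.

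The only nontrivial step is justifying equivariance of MAB in its query argument; once this is granted, the remainder is bookkeeping that equivariance is closed under composition with element-wise maps and under division by invariant scalars. I would therefore spend most of the written proof making the MAB step explicit (writing out the softmax-weighted sum and observing that permuting queries permutes rows while leaving key-indexed sums untouched), and dispatch the rest in a couple of lines.
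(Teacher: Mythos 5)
Your proposal is correct and takes essentially the same route as the paper's proof: multi-head attention block equivariance in its query (candidate-set) argument, closure of equivariance under composition (the paper's auxiliary proposition on composing equivariant maps), and equivariance of the row-wise $\varphi$ and element-wise $\sigma$ — the only difference being that the paper simply cites MAB equivariance from Set Transformer while you sketch the weighted-sum argument explicitly. Your additional remarks on the normalized probabilities $\pi^{(t)}$ and the appeal to Propositions~\ref{prop:invariance} and~\ref{prop-candidate} are not needed for this statement, but they do not affect correctness.
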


With Eq.~\ref{eqn:interaction_probs}, we can sample an element $s^{(t)}\sim \text{Cat}(\pi^{(t)}_1, \ldots, \pi^{(t)}_{m_t})$ from the candidate subset $D_c^{(t)}$ and construct $D^{(t)}_s = D^{(t-1)}_s \cup \{s^{(t)}\}$, where Cat is the Categorical distribution. During training, it can be expensive to sample $k$ times from the Categorical distribution since it involves computing Eq.~\ref{eqn:interactions} $k$ times. We 
remedy this by selecting $l$ elements from $D_c^{(t)}$ at once, which reduces the number of iterations to $k/l$ for selecting $k$ elements. We may also sample $l$ elements from the multinomial distribution with probability $\pi^{(t)}$ without replacement. However, this sampling procedure is non-differentiable, and hence it cannot be trained with backpropagation. Instead, we independently sample $l$ elements
from the continuous relaxation of Categorical distributions~\citep{maddison2016concrete, jang2016categorical} using the same probabilities in Eq.~\ref{eqn:interaction_probs} to approximate sampling from the multinomial distribution as shown in Fig.~\ref{concept-figure}-(b). Since we want to simulate sampling without replacement, we discard all elements sampled more than once. This sampling procedure guarantees that we get at most $l$ elements at each iteration. A similar sampling procedure is adopted in previous works~\citep{concreteautoencoders, learningtoexplain}. We detail this training algorithm in  \textbf{Appendix}~\ref{greedy}. 

\textbf{Proofs} of Propositions~\ref{prop:invariance},~\ref{prop-candidate} \& ~\ref{prop:equivariance} are in Appendix~\ref{proofs}.



\textbf{Time Complexity} The time complexity of SSS depends heavily on the choice of the  function $f$. Using MAB as $f$, the time complexity of SSS is $O(n) + O(k^2m/l)$ where $n, m, k$ correspond to $|D|$, $|D_c|$ and $|D_s|$ respectively.

\begin{figure*}[t]
    \centering
    	\includegraphics[width=1.0\linewidth]{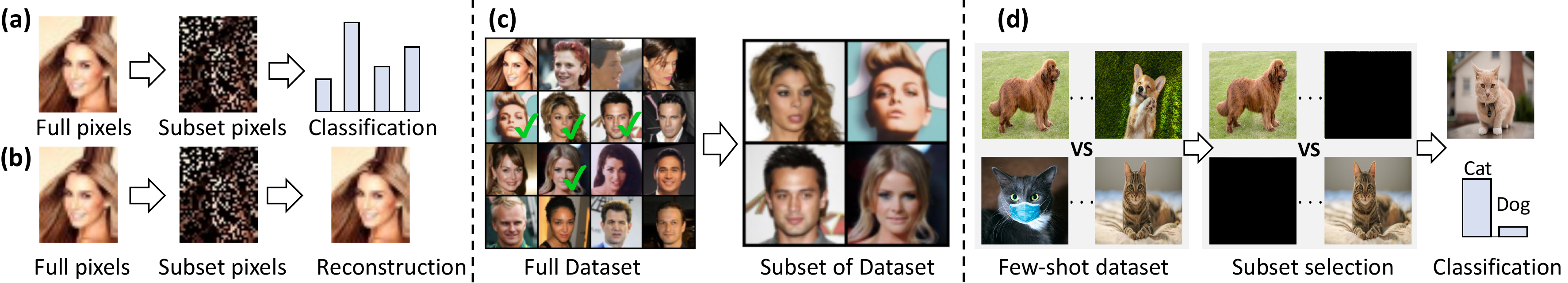}
    \vspace{-0.25in}
    \caption{\small \textbf{Target Tasks:} \textbf{(a)} Feature selection for reconstruction. \textbf{(b)} Feature selection for prediction. \textbf{(c)} Selection of representative instances. \textbf{(d)} Instance selection for few-shot classification.}
    \vspace{-0.15in}
    \label{fig-task-description}
\end{figure*}

\subsection{Tasks}\label{tasks}
\textbf{Set Classification \& Prediction} As shown in Fig.~\ref{fig-task-description}-(a), we train a neural network parmeterized with $\phi$ to predict a single target value $y_D$ for the subset $D_s$ of the given full set $D$, where $D$ is a collection of the features from a single instance such as the pixels of an image. For instance, the target $y_D$ is either the class of an image for classification or the attributes of a face in an image. Here, our goal is learning to select the most representative subset $D_s \subset D$ such that we can maximize the log likelihood $\log p_{\phi}(y_D | D_s)$ with computational efficiency. In order to achieve this goal, we jointly train the SSS model and the neural network which predicts the target value $y_D$ for $D_s$ to minimize the negative log-likelihood, the loss function $\ell(\cdot, D_s)$ described in Eq.~\ref{eq:genearl-obj} and KL divergence to enforce sparsity in $Z$ (the selection masks for the candidate set) as follows:
\begin{equation}
    \begin{split}
       \mathbb{E}_{p(D)} [& \mathbb{E}_{p_\theta(D_s | D)} [ - \log p_{\phi} (y_D | D_s) ] \\
                               & + \beta \text{KL} [p_\theta(Z|D) || p(Z)] ]
    \end{split}
\end{equation}
where $p(Z) = \prod_{i=1}^n \text{Ber}(z_i;r)$ with small $r>0$. We provide experimental results in Section \ref{sec411} and a corresponding graphical model in \textbf{Appendix}~\ref{graphical-model}.

\textbf{Set Reconstruction} Given a full set $D=\{X, Y\}$ consisting of  2d coordinates $X=\{x_i\in\mathbb{R}^2\}_{i=1}^n$ and corresponding pixel values $Y=\{y_i \in\mathbb{R}^3 \}_{i=1}^n$, we want to select the most representative subset $D_s=\{X_s, Y_s \mid X_s \subset X, Y_s \subset Y\}$ to reconstruct all pixel values $y_i \in Y$ for each $x_i \in X$, as shown in Fig.~\ref{fig-task-description}-(b). We jointly train the SSS model and a neural network with parameters $\phi$ predicting pixel values to minimize the loss function w.r.t $\theta$ and $\phi$ as follows:
\begin{equation}
    \begin{split}
         \mathbb{E}_{p(D)} [& \mathbb{E}_{p_\theta(D_s | D)} [ - \log p_{\phi} (Y|X,D_s) ] \\
                                 & + \beta \text{KL} [p_\theta(Z|D) || p(Z)] ]
    \end{split}
\end{equation}
We enforce sparsity on the subset $D_s$ by minimizing the KL-divergence between the mask probability $p_\theta(Z|D)$ and sparse prior $p(Z)=\prod_{i=1}^n\text{Ber}(z_i;r)$ with small $r>0$. Moreover, minimizing the negative log likelihood, which corresponds to $\ell(\cdot, D_s)$ in Eq.~\ref{eq:genearl-obj}, ensures that the constructed $D_s$ is the most representative for the downstream tasks. 
We implement $p_{\theta}(Y|X,D_s)$ as an Attentive Neural Process (ANP)~\citep{anp}. The ANP takes $D_s$  as input
and predicts a distribution of the elements in the original set $D$. It mimics the behaviour of a Gaussian Process but with reduced inference
complexity. We present experimental results for this task in Section~\ref{sec41} and a corresponding graphical model depiction in  \textbf{Appendix}~\ref{graphical-model}.

\textbf{Dataset Distillation: Instance Selection} In this task, we are given a collections of datasets $\mathcal{D} = \{D^{(1)},\ldots,D^{(m)}\}$ with $D^{(i)} \cap D^{(j)} =\emptyset$ for $i\neq j$ and $D^{(i)}\stackrel{iid}{\sim}p(D)$. The goal is to select the most representative subset $D^{(i)}_s$ with $|D^{(i)}_s| \ll |D^{(i)}|$ for each dataset $D^{(i)}=\{d^{(i)}_1,\ldots,d^{(i)}_{n}\}\in\mathcal{D}$, where $d^{(i)}_i$ is a data point uniformly sampled from the entire datasets $\mathcal{D}$. Using CelebA dataset as an illustrative example, shown in Fig.~\ref{fig-task-description}-(c), $D^{(i)}$ consists of $n$ randomly sampled faces from the entire dataset and the task is to construct a subset, $D^{(i)}_s$, most representative of $D^{(i)}$. 

In order to learn to select the subset $D_s$ from each $D\in\mathcal{D}$ with \textit{unsupervised learning}, we jointly train the SSS model and  a generative model such that the SSS model chooses the most representative subset so that the generative model can reconstruct all the images $d_i \in D$ from the subset. Naïvely, we can minimize the sum of negative log-likelihood $\sum_{d_i\in D} -\log p_\phi (d_i|D_s)$ for the loss function $\ell (\cdot, D_s)$ and  KL divergence in Eq.~\ref{eq:genearl-obj}. However, we find that the generative model outputs mean images for all  $d_i$. To capture variations of different images, we introduce three latent variables $\alpha_i$, $c_i$, and $w_i$ which both depend on $d_i$. We provide graphical model illustration of this task in the \textbf{Appendix}~\ref{graphical-model}. Since it is intractable to compute the log likelihood $\log p_\phi (d_i|D_s)$ by marginalizing over all the latent variables, we derive the upper bound of the marginal negative log likelihood using variational inference and plug the upper bound into the loss function $\ell(\cdot,D_s)$ in Eq.~\ref{eq:genearl-obj} as follows:
\begin{equation}\label{distill}
\small    
    \begin{split}
    &\mathbb{E}_{p(D)}\Bigg[\mathbb{E}_{p_\theta(D_s|D)} \Big[  \sum_{d_i \in D} [ \mathbb{E}_{q_{\psi}(w_i, c_i | d_i, D_s)} \left[-\log p_{\phi}(d_i | w_i, c_i) \right] \\
				           & + \text{KL}[q_{\psi}(w_i | d_i) || p_{\xi}(w_i)] + \text{KL}[q_{\psi}(\alpha_{i} | d_i) || p_{\xi}(\alpha_i)] \\
				           & + \text{KL}[q_{\psi}(c_i | D_s, \alpha_i) || p_{\xi}(c_i)] ] \Big] +\beta\text{KL}[p_\theta(Z|D) || p(Z)]\Bigg]
    \end{split}
\end{equation}
where $p_{\xi}(\cdot)$ are priors on their respective latent variables, $p(Z)=\prod_{i=1}^n\text{Ber}(z_i;r)$ is sparse prior with small $r>0$ over the mask for candidate set selection in SSS, $p_\phi(\cdot)$ is the decoder to reconstruct $d_i$, and all variational posteriors $q_{\psi}(\cdot)$ are parameterized with neural networks. All priors are the standard normal distribution.

In summary, we jointly train both the SSS and generative model to minimize the objective in Eq.~\ref{distill} w.r.t $\theta,\phi$, and $\psi$ for all $D\in\mathcal{D}$ and leverage the optimized SSS to select a few representative instances of the dataset, which results in distilled dataset. Experimental results are in Section~\ref{sec4.3}.

\textbf{Dataset Distillation: Classification} Finally for the dataset distillation task, we consider the problem of selecting prototypes for few-shot classification as shown in Fig.~\ref{fig-task-description}-(d). We adopt Prototypical Networks~\citep{protonet} and deploy the SSS model for selecting representative prototypes from the support set for each class. We minimize the objective in Eq.~\ref{eq:genearl-obj}, where we use the distance loss induced by the metric space from Prototypical Networks for the target task loss $\ell(\cdot,D_s)$, to jointly train the Prototypical Networks and SSS. Note that we use $D_s$, the subset of the support set, for computing loss and prediction.  By learning to select the prototypes, we can remove outliers that would otherwise change the decision boundaries in the classification task where we need to predict the label $y_*$ for an unseen instance $x_*$. Experimental results for this task are in Section~\ref{sec4.3} and its graphical model description is in \textbf{Appendix}~\ref{graphical-model}.

\begin{figure*}
\vspace{-0.1in}
\centering
    \begin{subfigure}{0.6\textwidth}
		\centering
		\includegraphics[width=\linewidth]{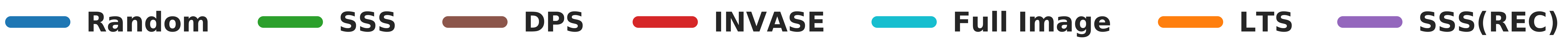}
		\vspace{-0.15in}
	\end{subfigure}
	\begin{subfigure}{.25\textwidth}
		\centering
		\includegraphics[width=\linewidth]{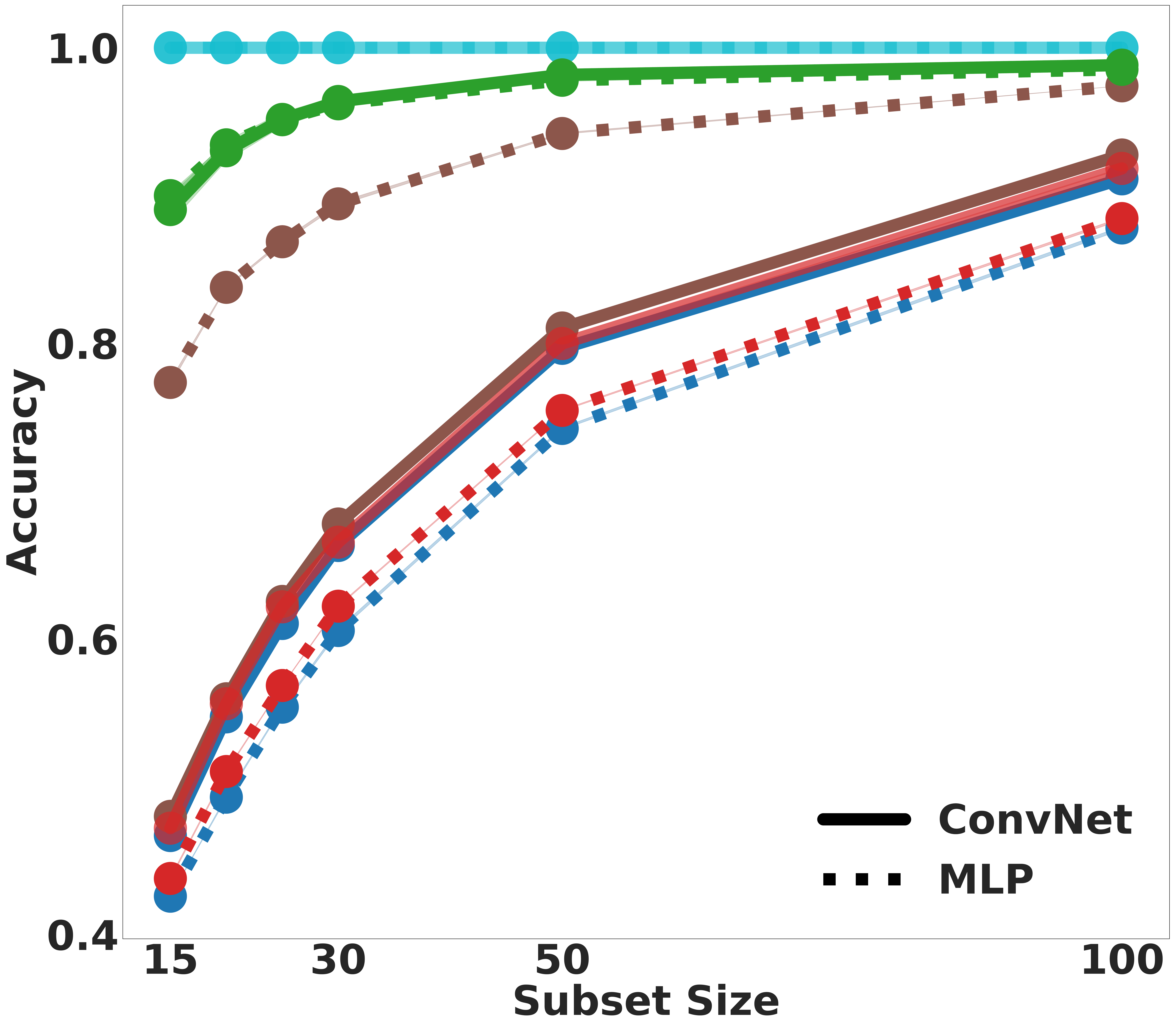}
		\captionsetup{justification=centering,margin=0.5cm}
		 \vspace{-0.25in}
		\caption{\small}
		\label{mnist_classification}
	\end{subfigure}%
	\begin{subfigure}{.25\textwidth}
		\centering
		\includegraphics[width=\linewidth]{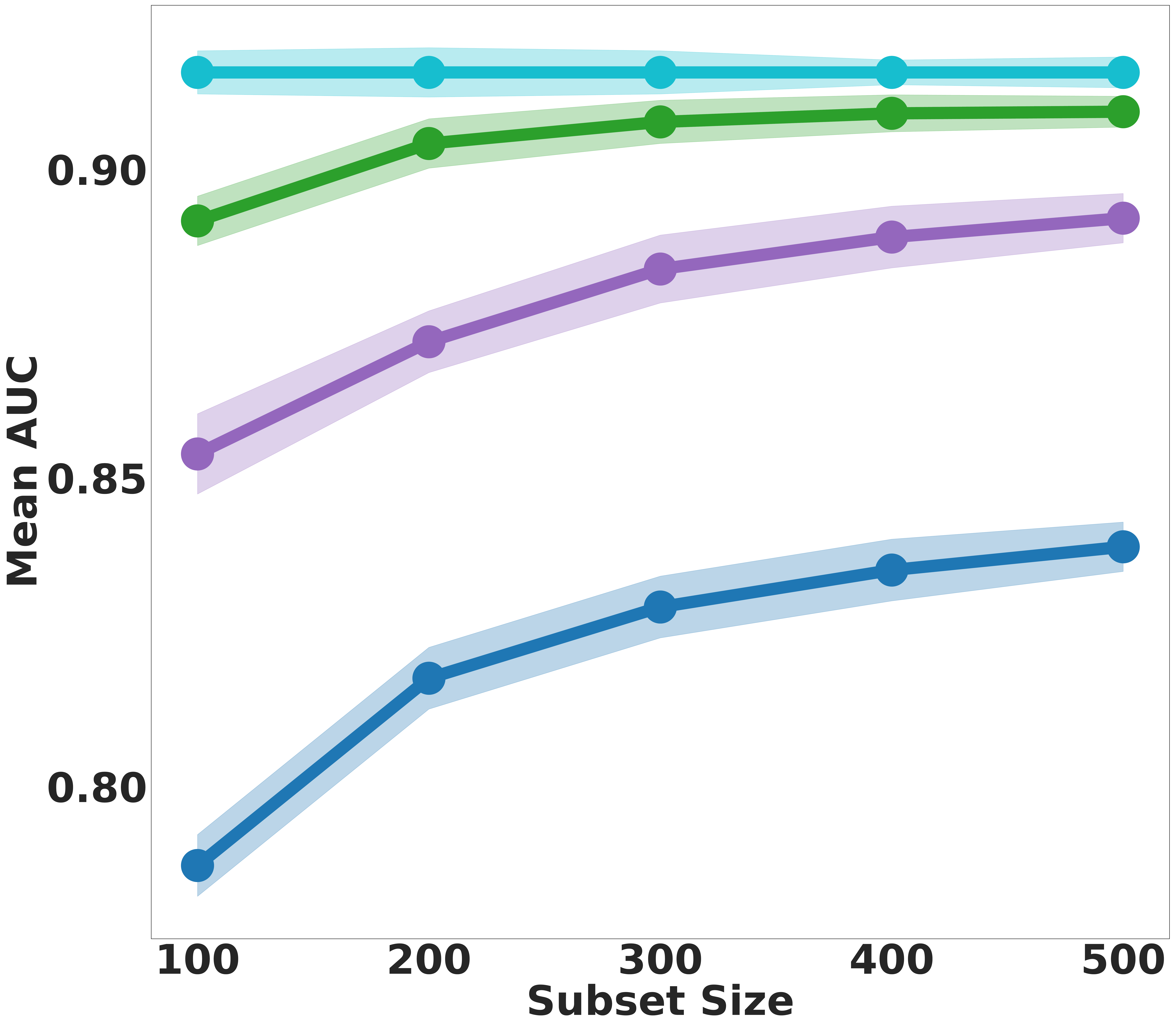}
		\captionsetup{justification=centering,margin=0.5cm}
		\vspace{-0.25in}
		\caption{\small}
		\label{celeba_classification}
	\end{subfigure}%
	\begin{subfigure}{.25\textwidth}
		\centering
		\includegraphics[width=\linewidth]{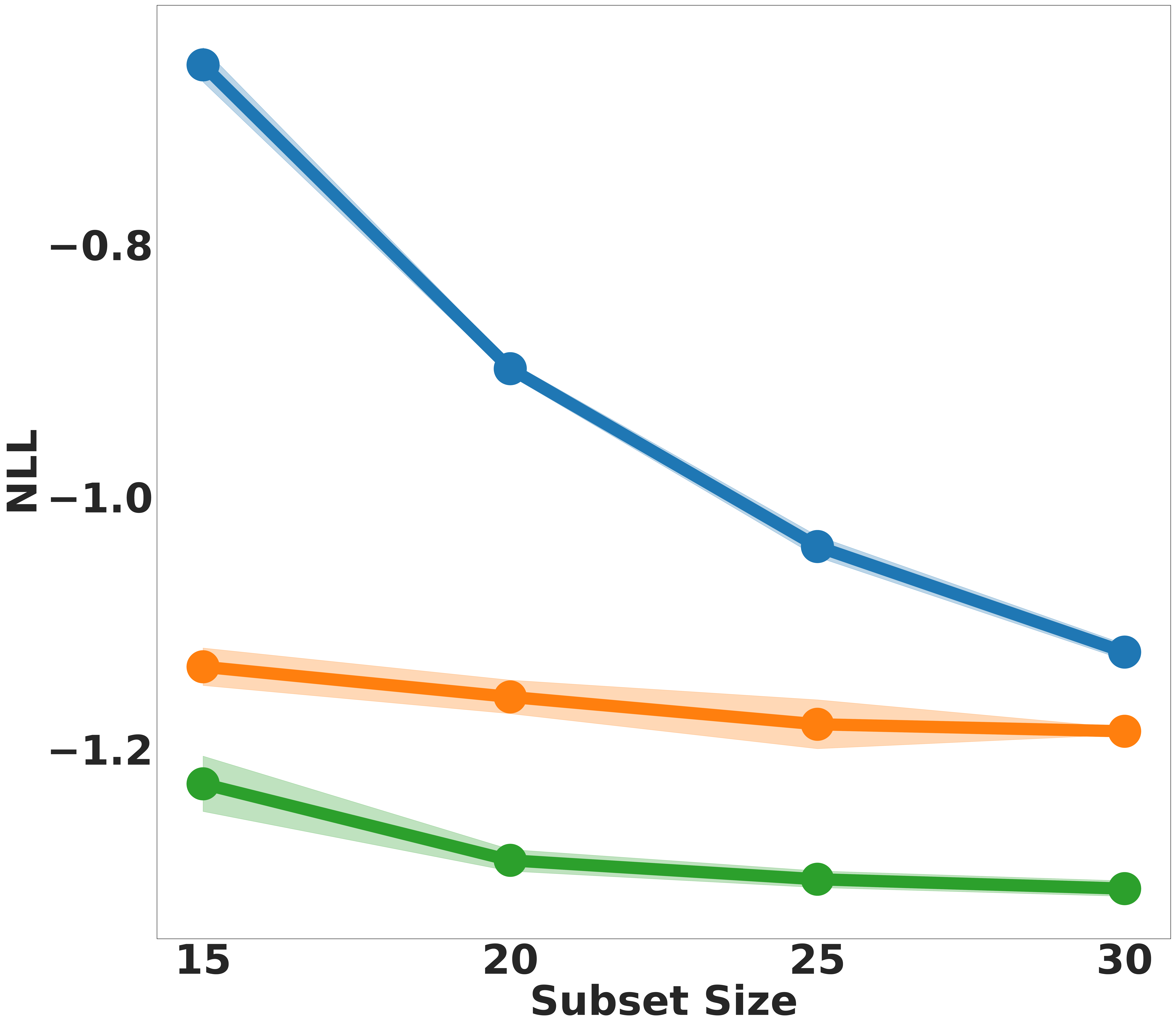}
		\captionsetup{justification=centering,margin=0.5cm}
		\vspace{-0.25in}
		\caption{\small}
		\label{function_reconstruction}
	\end{subfigure}%
	\begin{subfigure}{.25\textwidth}
		\centering
		\includegraphics[width=\linewidth]{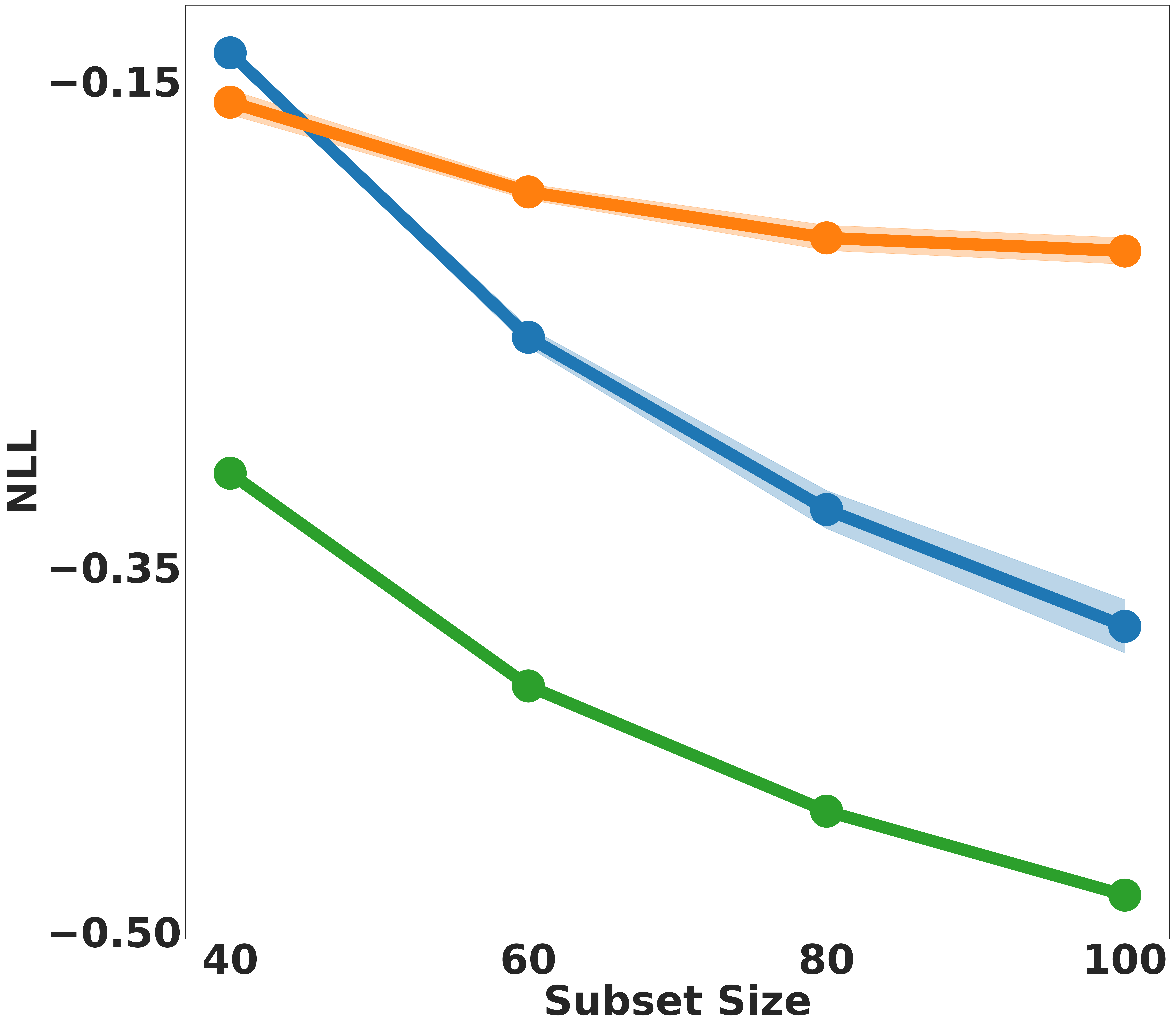}
		\captionsetup{justification=centering,margin=0.5cm}
		\vspace{-0.25in}
		\caption{\small}
		\label{celeba_reconstruction}
	\end{subfigure}
	\vspace{-0.2in}
	\caption[Reconstruction NLL]{\small
	\textbf{(a)} MNIST Classification. \textbf{(b)} CelebA classification. \textbf{(c)} 1D Function Reconstruction.
	\textbf{(d)} Image Reconstruction on CelebA. 
	}
	\label{reconstruction}
	\vspace{-0.1in}
\end{figure*}

\begin{figure*}
    \centering
    \begin{subfigure}{0.4\textwidth}
		\centering
		\includegraphics[width=\linewidth]{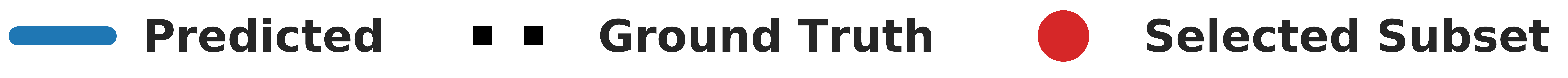}
		\vspace{-0.2in}
	\end{subfigure}
	\begin{subfigure}{.3\textwidth}
		\centering
		\includegraphics[width=\linewidth]{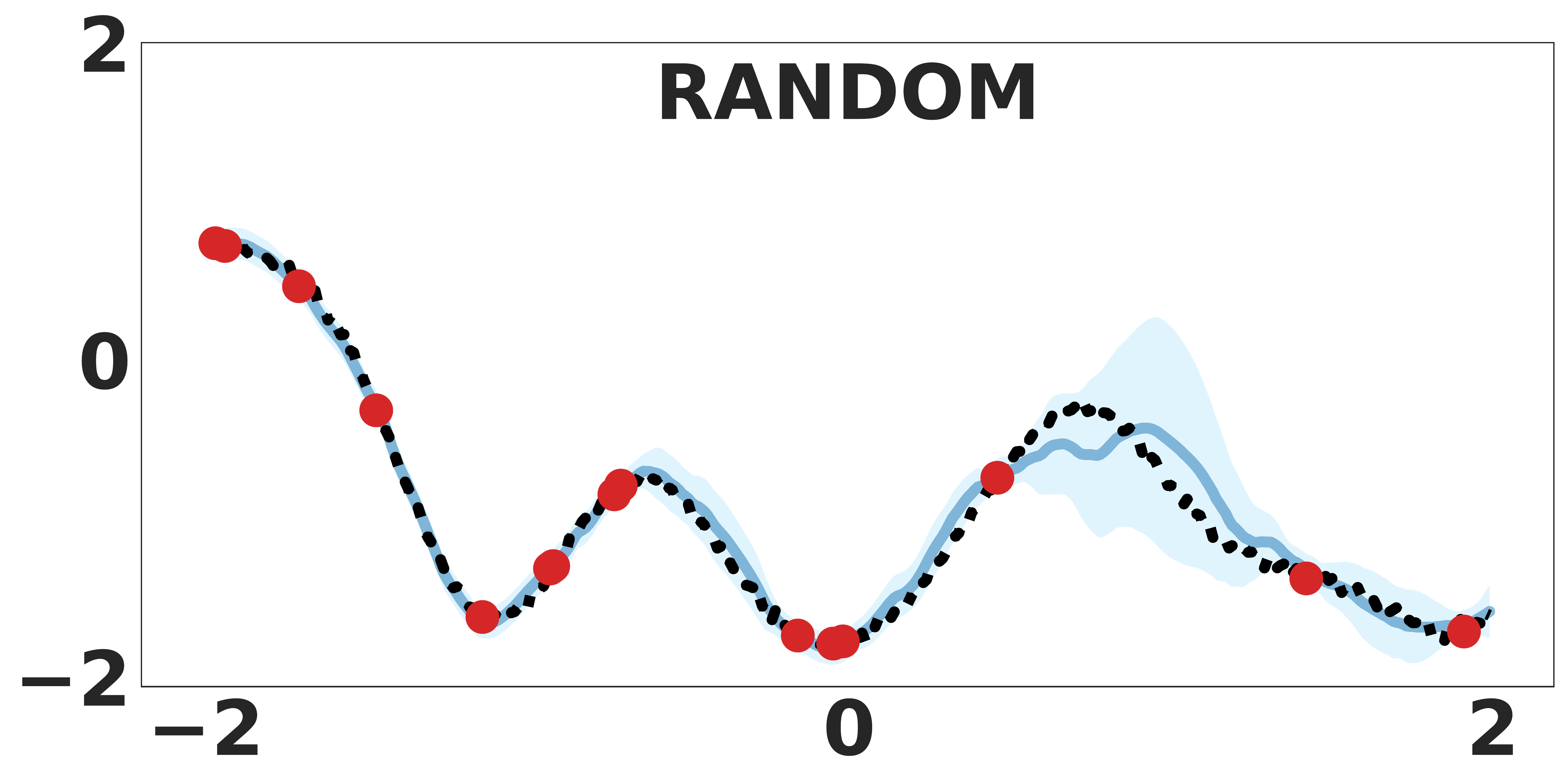}
		\label{reconstruction-function-random}
	\end{subfigure}%
	\begin{subfigure}{.3\textwidth}
		\centering
		\includegraphics[width=\linewidth]{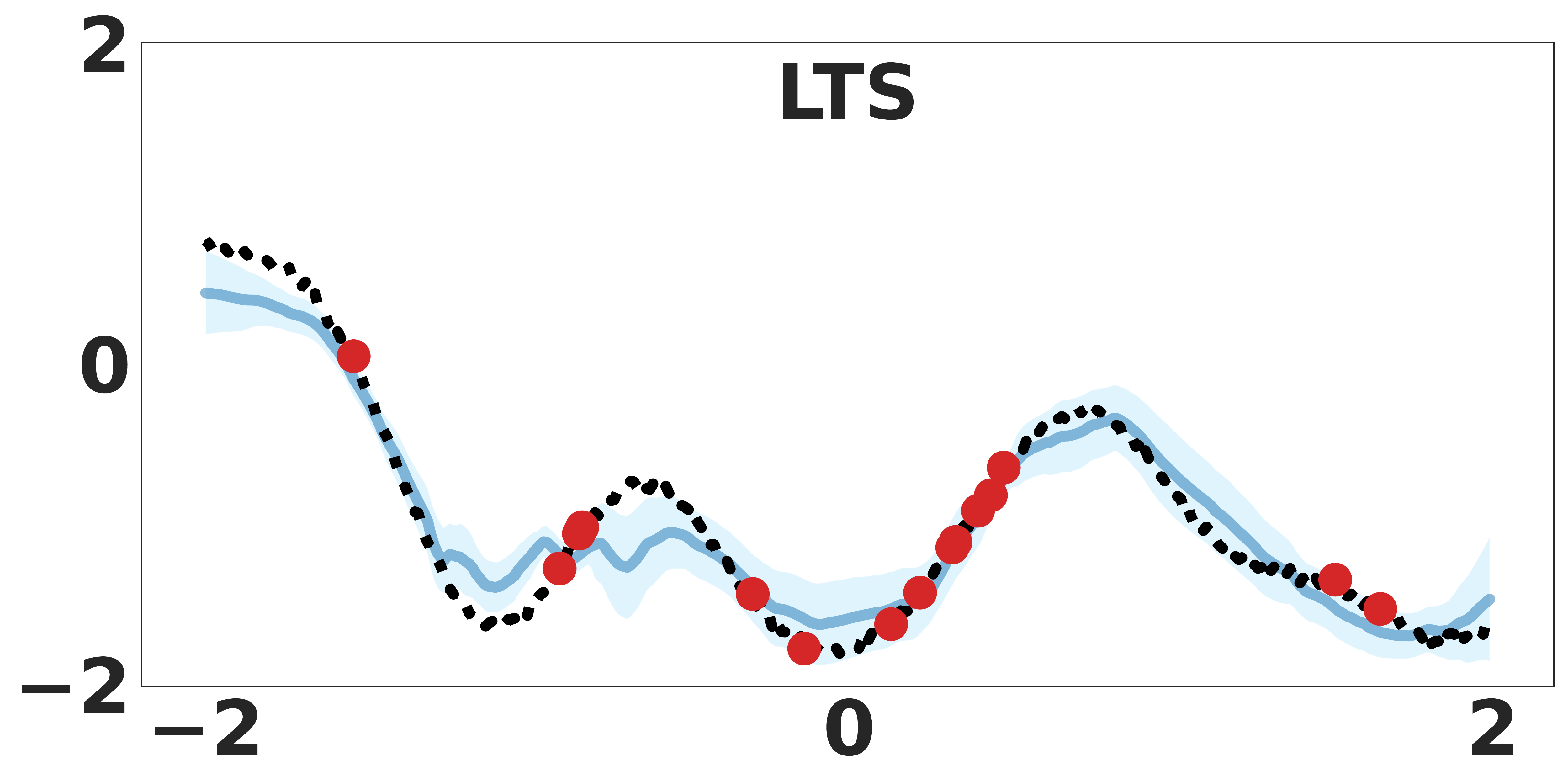}
		\label{reconstruction-function-lts}
	\end{subfigure}%
	\begin{subfigure}{.3\textwidth}
		\centering
		\includegraphics[width=\linewidth]{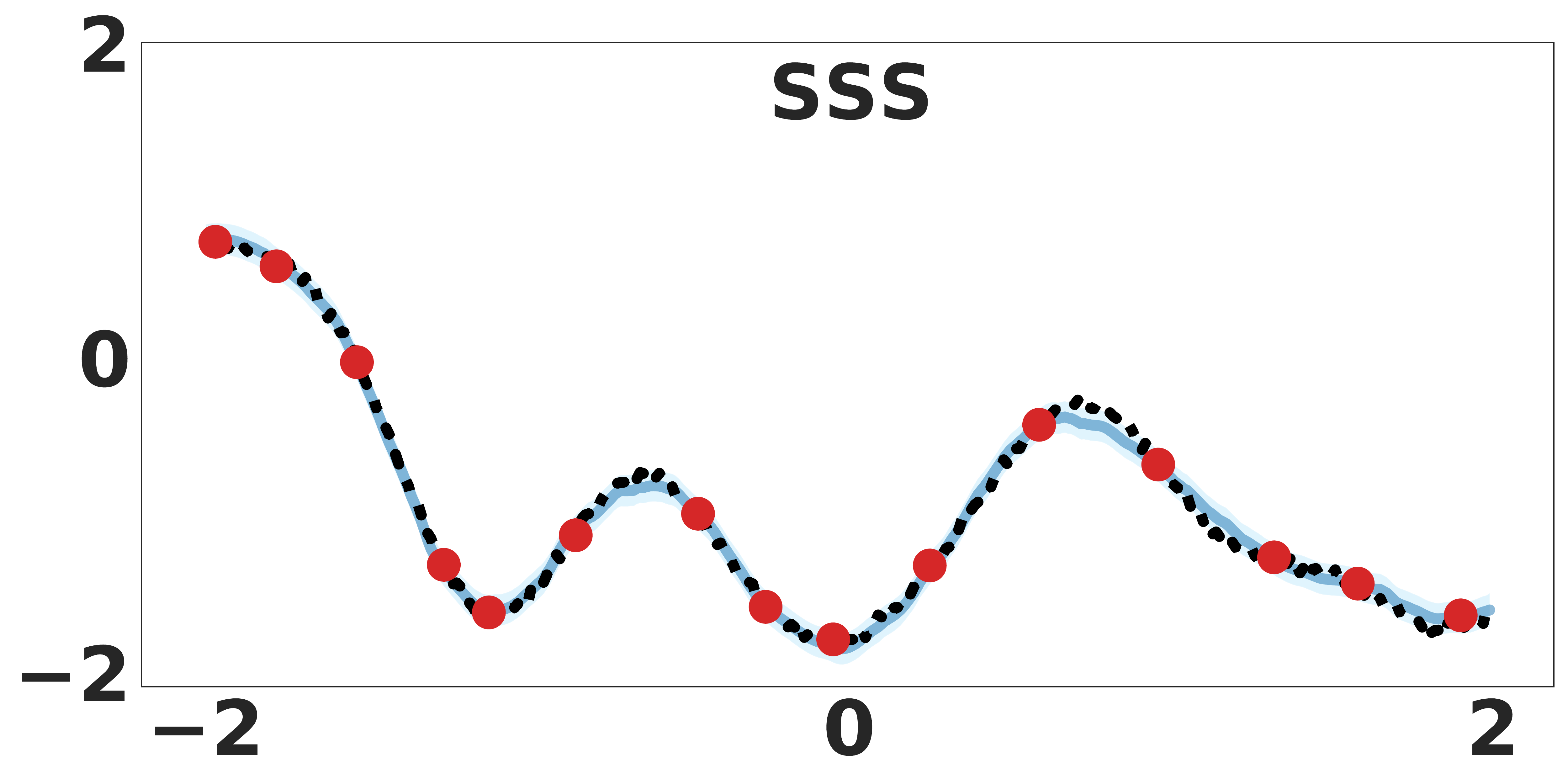}
		\label{reconstruction-function-sss}
	\end{subfigure}

	\vspace{-0.3in}
	\caption{\small Visualization of 1D function reconstruction with three different subset selection models.}
	\label{reconstruction-functions}
	\vspace{-0.1in}
\end{figure*}
\section{Experiments}\label{experiments}
An extensive \textbf{Ablation} on SSS can be found in \textbf{Appendix~\ref{app:ablation}}.

\subsection{Baselines}\label{sec:baselines}
\textbf{Feature Selection for Classification}
We compare SSS with the following models on MNIST. 1) \textbf{Random Selection}: it randomly subsamples features. 2) \textbf{DPS}~\citep{dps}: this model jointly optimizes the sampling parameters along with
the parameters of a task model. 3) \textbf{INVASE}~\citep{invase}: this model uses actor-critic~\citep{actor-critic} to optimize the parameters
of a sampling network and  target task model (Section~\ref{sec411}). 

Additionally, we perform attribute classification on the CelebA dataset using the selected features from a given image. However, we \emph{cannot} apply DPS and INVASE to this experiment. Since each feature is a tuple of 3-d RGB pixels, flattening these features results in ambiguities as to which features to select with INVASE or DPS. For instance, applying these methods to a 3-channel image can result in some channels being selected by the models, while others are zeroed out. It might lead to the entire pixel being preserved and hence violate the subsampling objective. 

\textbf{Feature Selection for Reconstruction}
In Section~\ref{sec41} we compare SSS against the followings. 1) \textbf{Random Selection}. 2) \textbf{LTS}~\citep{dovrat2019learning}:  a model that learns to generate $k$ virtual elements which can be matched to elements in $D$ and optimized for the downstream task. We use LTS for both the function reconstruction and the image reconstruction tasks. DPS and INVASE are not applicable for these tasks since each feature is multi-dimensional. Note that LTS is not applicable for the classification tasks since the
virtual points generated by LTS cannot be converted back into image form to serve as input to an image classifier.

\textbf{Instance Selection}
In Section~\ref{sec4.3}, we compare SSS with 1) \textbf{k-Center-Greedy}: this algorithm iteratively selects elements in $D$ 
closest to a set of centroids and 2) \textbf{FPS}: this algorithm iteratively selects the most distant elements to a randomly 
initialized $D_s$ and \textbf{3) Random Selection} on the instance selection tasks. Here also, DPS, INVASE, and  LTS are all inapplicable. 

\textbf{Multiple Subsampling Rates} For all the neural network based baselines (DPS, INVASE, LTS), we train a separate model for \textit{every} 
subsampling rate. For instance, to select 15, 20, 25, 30, 50 and 100 pixels from MNIST images in Section~\ref{sec411}, we need to train 6 different models for DPS, INVASE and LTS each with the corresponding target subsampling rate. However for SSS, we train a \textit{single} model and vary the sampling rate on each iteration. During evaluation, we use this single model for \textit{all} the different sampling rates. We find that applying a similar training technique to the baselines result in drastic performance degradation. Thus the set formulation of SSS makes it generalize to varying subsampling rates at test time with train time efficiency.

\begin{figure}[H]
\centering
\vspace{-0.1in}
  \includegraphics[width=0.65\linewidth]{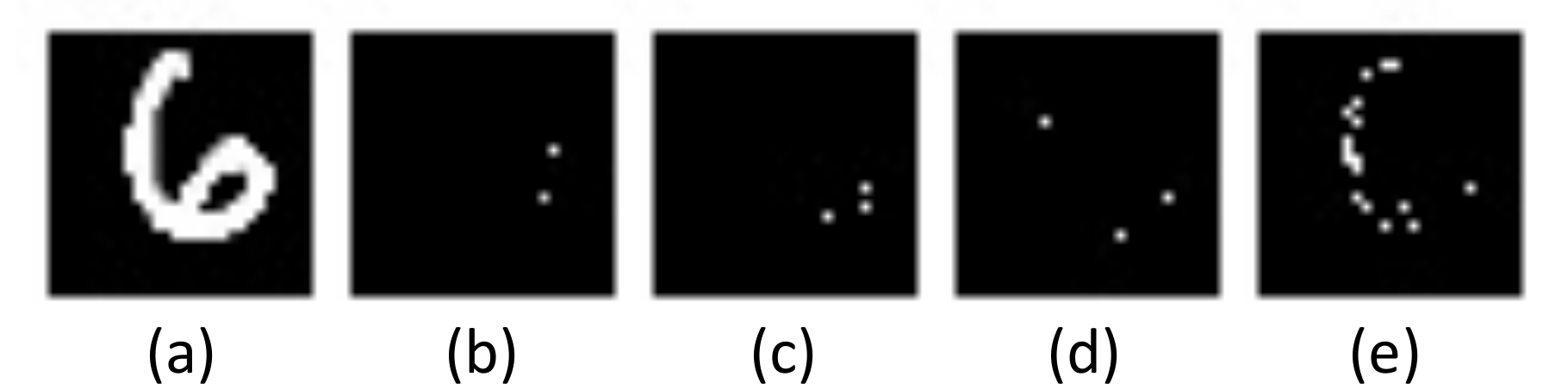}
  \vspace{-0.1in}
  \caption{\small \textbf{(a)} Full-image, \textbf{(b)} Random, \textbf{(c)} DPS, \textbf{(d)} INVASE, \textbf{(e)} SSS. All models select 15 pixels of the original image. Note that under these low subsampling rates, the baseline models end up selecting background pixels as shown in \textbf{(c)} and \textbf{(d)}.}
  \label{mnist_samples}
  \vspace{-0.15in}
\end{figure}
\subsection{Feature Selection for Classification}\label{sec411}
In this subsection, we validate our model on the image classification task with feature selection as illustrated in Fig.~\ref{fig-task-description}-(a). The goal is to select a subset of pixels of an image and predict the label using the chosen subset.

\textbf{MNIST} Given an MNIST image with 784 pixels, the task is to subsamlpe 15, 20, 25, 30, 50 and 100 pixels to be used as input to train and 
evaluate two classification models, a MLP and a ConvNet. We detail the exact architectures in Appendix~\ref{app:mnist_architecture}. Since images in the MNIST dataset have single channel, each feature is of dimension 1 and thus we can train a classifier with DPS or INVASE.
We keep the pixels values for the selected pixels and set all the other pixels to zero. Note that we set the subsampling rates to be much lower than the experimental setup in ~\citet{dps} and ~\citet{invase}. 

As shown in Fig.~\ref{mnist_classification}, SSS significantly outperforms all the baselines with large margin on both the MLP and ConvNet architectures. The MLP is the same architecture used in ~\citet{dps}. However, we test on the full MNIST test set instead of reserving half the test set for validation as done in ~\citet{dps}. SSS reaches  $89\%$ accuracy using only 15 pixels and shows better performance than the baselines with 100 pixels. Moreover under these low subsampling rates, the performance on the baselines are on par with random selection as shown by the ConvNet results in Fig.~\ref{mnist_classification}. Crucially, the performance of SSS is consistent across both the MLP and ConvNet architectures. DPS, which performs relatively well using the MLP shows poor performance on the same dataset using the ConvNet architecture and we observe similar drop in performance as well for INVASE.

Lastly, we provide qualitative results in Fig.~\ref{mnist_samples} where we visualize the 15 selected pixels by all the baselines and SSS. Again we find that SSS selects representative pixels (Fig.~\ref{mnist_samples}-(e)) so that the classifier can predict the correct label of the input image. However, all the baselines tend to select background pixels under these extremely low subsampling rates, which are uninformative for the classification task.

\textbf{CelebA} The CelebA dataset consists of high-quality images of size $218\times 178$. Like the previous experiment, the task is to subsample 100, 200, 300, 400, and 500 pixels from the full $38804$ pixels and perform binary classification for 40 attributes of a face. We use the ConvNet architecture  described in Appendix~\ref{app:celeba_cls_architecture} as the classification network.

We report the mean AUC score on all 40 attributes for varying sizes of $D_s$. Fig.~\ref{celeba_classification} shows that using only 500 pixels ($\sim$1.3\% of total pixels in an image), SSS achieves a mean AUC of 0.9093 (99.3\% of the accuracy obtained with the full image). 
SSS achieves a significantly higher AUC score than Random Selection, showing the effectiveness of our subset selection method. We also include another baseline, namely \textbf{SSS-rec}. This is the SSS model trained for image reconstruction in the Section~\ref{sec:celea_image_reconstruction}, but then later used for classification without any finetuning. Our model also outperforms this variant, showing the effectiveness of training with the target task. Note that we cannot apply LTS, INVASE, or DPS to this experiment. During training, the virtual points generated by LTS cannot be converted back to an image in matrix form due to the virtual coordinate, thus we cannot train the LTS model with CNN-based classification for this task. For DPS and INVASE, they require the dimension of each feature to be 1, thus it is not applicable to multi-channel images. 

\begin{table*}[ht]
\begin{center}
\caption{\small FID Score (the lower is the better) with varying the number of instances}
\resizebox{\linewidth}{!}{
\begin{tabular}{ccccccc}
    \toprule
    \#Instances       & 2  & 5 & 10 & 15 & 20 & 30   \\
    \midrule
    K-Greedy    &   8.8800 $\pm$ 5.5857 & 4.4306 $\pm$ 1.3313 & 4.2199 $\pm$ 1.4214 & 3.7160 $\pm$ 1.1314 & 3.2431 $\pm$ 1.3881 & 2.7554 $\pm$ 0.8554\\
    FPS         &   6.5014 $\pm$ 4.3502 & 4.5098 $\pm$ 2.3809 & 3.0746 $\pm$ 1.0979 & 2.7458 $\pm$ 0.6201 & 2.7118 $\pm$ 1.0410 & 2.2943 $\pm$ 0.8010\\
    Random      &   3.7309 $\pm$ 1.1690 & 1.1575 $\pm$ 0.6532 & 0.8970 $\pm$ 0.4867 & 0.3843 $\pm$ 0.2171 & 0.3877 $\pm$ 0.1906 & 0.1980 $\pm$ 0.1080 \\
    \textbf{SSS}         &   \textbf{2.5307 $\pm$ 1.3583} & \textbf{1.0186 $\pm$ 0.1982} & \textbf{0.5922 $\pm$ 0.3181} & \textbf{0.3331 $\pm$ 0.1169} & \textbf{0.2381 $\pm$ 0.1153} & \textbf{0.1679 $\pm$ 0.0807}\\
    \bottomrule
 \end{tabular}
  }
  \vspace{-0.25in}
  \label{table:fid_table}
\end{center}
\end{table*}

 \begin{table}[ht]
 \begin{center}
 \vspace{-0.1in}
 \caption{\small Accuracy on \textit{mini}ImageNet\label{table:acc_minitable}}
 \resizebox{0.48\textwidth}{!}{
 \begin{tabular}{cccc}
 \toprule
 \#Instances       & 1 & 2 & 5  \\
 \midrule
 FPS		    &   0.432$\pm$0.005 & 0.501$\pm$0.002 & 0.598$\pm$0.000 \\
 Random	    &   0.444$\pm$0.003 & 0.525$\pm$0.005 & 0.618$\pm$0.003 \\
 K-Greedy	    &   0.290$\pm$0.006 & 0.413$\pm$0.005 &	0.570$\pm$0.002 \\
 \textbf{SSS}	    &   \textbf{0.475$\pm$0.006} & \textbf{0.545$\pm$0.011} & \textbf{0.625$\pm$0.006} \\
 \bottomrule
 \end{tabular}
 }
\label{scorefunction}
\end{center}
\vspace{-0.3in}
\end{table}


\subsection{Feature Selection for Regression}\label{sec41}

\textbf{Function Reconstruction}\label{sec:function}
Suppose that we have a function $f: [a,b]\to \mathbb{R}$. We first construct a set of data points with $D=\{(x_1,y_1=f(x_1)), \ldots,(x_n,y_n=f(x_n))\}$, where $(x_1,\ldots,x_n)$ are uniformly sampled from the interval $[a,b]$ and $f$ is a Gaussian process. We sample $(y^{(i)}_1,\ldots, y_n^{(i)}) \stackrel{iid}{\sim} \mathcal{N}(\mathbf{0}, K_{XX} + \sigma^2_y I_n)$ for $i=1,\ldots, N$ where $K_{XX}$ is a squared-exponential kernel with the set of inputs $X=\{x_1,\ldots,x_n\}$ and $\sigma^2_y$ is variance for small likelihood noise. This leads to a collection of sets $(D^{(1)},\ldots,D^{(N)})$. We train our model which consists of the subset selection model $p_\theta(D_s|D)$ and a task network $p_\phi(Y|X,D_s)$, which is an Attentive Neural Process (ANP)~\citep{anp} on this dataset and report the negative log-likelihood (NLL).

Fig.~\ref{function_reconstruction} shows the performance (NLL) of SSS compared to the baselines, Random Selection and LTS. As shown in Fig.~\ref{function_reconstruction}, SSS outperforms the baselines, verifying that the subset selection model $p_\theta(D_s|D)$ learns a meaningful distribution over subsets. We visualize a  reconstructed function and the selected points by each models in Fig.~\ref{reconstruction-functions}. As shown in the rightmost figure (Fig.~\ref{reconstruction-functions}), SSS tends to pick out more elements (red dots) in the drifting parts of the curve, which is reasonable since those are harder to reconstruct than the others. However, the other baselines sometimes fail to do that, which leads to inaccurate reconstructions.

\textbf{CelebA Image Reconstruction}\label{sec:celea_image_reconstruction}
Given an image, we learn to select a representative subset of pixels that best reconstructs the original image. Here, $x_i$ is the 2d pixel coordinates and $y_i\in\mathbb{R}^3$ is the RGB pixel value. We use an ANP to reconstruct the remaining pixels from the subset $D_s=\{x_{i_l}, y_{i_l}\}_{l=1}^k$ constructed by each subsampling model. We conduct the experiment on the CelebA dataset~\citep{liu2018large}. Fig.~\ref{celeba_reconstruction} shows that our model significantly outperforms Random Selection and LTS in terms of NLL. We provide  qualitative examples in \textbf{Appendix}~\ref{celeba-appendix}.

Note that LTS performs worse than random selection. We find that the generated coordinates values by LTS are imprecise. This makes matching the virtual points with the original pixel values extremely difficult. Such inaccurate  coordinate values result in poor performance as the subsampling rate increases even compared to random subsampling as depicted in Figure \ref{celeba_reconstruction}. We observe similar pattern in the CIFAR10 reconstruction task presented in Figure~\ref{cifar10_reconstruction} in \textbf{Appendix}~\ref{app:cifar10}.
On the other hand, for the function reconstruction task in Section 4.3, the point matching stage in LTS is fairly easy and hence the LTS model shows better performance than random subsampling.

\textbf{Efficiency in Nonparametric models} In all the experiments where we used an ANP, we greatly improve the inference 
time complexity. By design, these models need to leverage the full training data at inference time. However by subsampling few highly 
informative instances, we can efficiently perform inference with little degradation in accuracy.
Similar gains can be obtained for models in the Neural Process family of models~\citep{garnelo2018conditional,garnelo2018neural}.

\subsection{Dataset Distillation}\label{sec4.3}
\textbf{Instance Selection} The goal is to select only a few representative images from a given dataset as described in Section~\ref{tasks} and Fig.~\ref{fig-task-description}. We split the CelebA dataset into $m$ disjoint sets $\mathcal{D} = \{D^{(1)},\ldots,D^{(m)}\}$ and jointly train SSS and the generative model to minimize the objective in Eq.~\ref{distill} with $\mathcal{D}$. After training, we discard the generative model and leverage the subsampling model to choose a few representative images from the full CelebA dataset. 

We evaluate the selected subset with the Fréchet Inception Distance (FID)~\cite{fid}, which measures similarity and diversity between two datasets and compare SSS to k-Center-Greedy, FPS and Random Selection. We report the experimental results in Table~\ref{table:fid_table} where SSS achieves the lowest FID score for all selection sizes. Specifically, SSS outperforms all the baselines for selecting very few instances since SSS is able to model the interactions within the dataset and hence selects the most representative subset. Additionally, given that the dataset is highly imbalanced, k-Center-Greedy and FPS perform worst since by selecting extreme or similar elements in the given set and cannot capture the true representation of the full dataset. We provide selected images by SSS from the full dataset in  \textbf{Appendix}~\ref{app:instance_selection}.

\textbf{Classification} In this task, we perform few-shot classification with the \textit{mini}ImageNet dataset~\citep{matching} where the models select 1, 2, or 5 instances from the support set with size 20. As shown in Table \ref{table:acc_minitable}, we compare SSS against Random Selection, FPS, and  k-Center-Greedy. SSS learns to select more representative prototypes than the others especially for small $D_s$ where the choice of prototypes matters more. 
Notably, the K-Greedy method performs poorly for small subset sizes given that the model overfits to a few samples and does not generalize to unseen examples. We show samples of selected prototypes in \textbf{Appendix}~\ref{app:mini_image_net}.
\section{Conclusion}\label{conclusion}
In this paper, we reformulated the subsampling problem as the selection of a subset from a set (e.g features and instances). Based on this reformulation, we proposed a Set based Stochastic Subsampling method that can handle arbitrary input structure as well as variable input set sizes. Additionally, to reduce the cost of modeling pairwise-interactions for large sets, we devised a two-stage subsampling algorithm where we utilize set encoding functions to obtain coarse grained global information in the candidate selection stage followed by a more expressive set interaction network in the autoregressive subset selection stage. We validated the efficacy and generality of our model on various tasks such as feature selection for classification and set reconstruction, instance selection for few shot classification and dataset distillation. We demonstrated that SSS works well and outperforms the relevant baselines.
\section*{Acknowledgement}
This work was supported by Institute of Information \& communications Technology Planning \& Evaluation (IITP) grant funded by the Korea government(MSIT)  (No.2019-0-00075, Artificial Intelligence Graduate School Program(KAIST)), 
the Engineering Research Center Program through the National Research Foundation of Korea (NRF) funded by the Korean Government MSIT (NRF-2018R1A5A1059921), 
Institute of Information \& communications Technology Planning \& Evaluation (IITP) grant funded by the Korea government(MSIT) (No. 2021-0-02068, Artificial Intelligence Innovation Hub), 
the National Research Foundation of Korea (NRF) funded by the Ministry of Education (NRF-2021R1F1A1061655), 
and  Institute of Information \& communications Technology Planning \& Evaluation (IITP) grant funded by the Korea government(MSIT) (No.2022-0-00713).


\bibliography{bibs/bibs}
\bibliographystyle{icml2022}

\newpage
\clearpage
\appendix

\section{Organization}\label{app:organization}

\paragraph{Organization} The Appendix is organized as follows: first, we describe the pseudo-code for the Training Algorithm, then provide ablation for each components of our method and proofs for propositions described in Section~\ref{sec:approach}. Finally, we illustrate the generative process of each task using graphical models and and additional experimental results with detailed elaboration on experimental setups.

\begin{algorithm}[H]
   \caption{\small Greedy Training Algorithm}
   \label{greedy-training}
    \begin{tabularx}{\textwidth}{ll}
 	\textbf{Input} & $k$ (max subset size) \\ 
 	&$m$ (mini-batch size) \\ 
 	&$p(D)$ (distribution of sets) \\ 
 	&$\alpha$ (learning rate)\\
 	&$\ell(\cdot,D_s)$ (loss function) \\
 	\textbf{Output} & trained models with  $\theta$ and $\phi$
  \end{tabularx}

  \begin{algorithmic}[1]
     \STATE Randomly initialize parameter of SSS $\theta$ and downstream task model $\phi$. 
     \WHILE{not converged}
        \STATE Sample $m$ sets $D^{(1)},\ldots,D^{(m)}$ from $p(D)$
        \STATE Sample $Z^{(j)} = \{z^{(j)}_1, \ldots, z^{(j)}_n\} \sim p_\theta(Z|D^{(j)})$ for $j=1,\ldots, m$  
        \STATE Construct $D^{(j)}_c=\{d_i^{(j)}\in D^{(j)}:  z_i^{(j)} = 1\}$ for $j=1,\ldots, m$  
        \STATE Sample integer $l\sim\text{Unif}[1,k]$
        \STATE $D^{(j)}_s \gets$ select  $l$-elements from $D^{(j)}_c $ (with the auto-regressive model)
        \STATE $\theta \gets \theta - \alpha\nabla_\theta \frac{1}{m} \sum_{j=1}^m\ell(\cdot,D_s^{(j)})$ 
        \STATE $\phi \gets \phi - \alpha\nabla_\phi \frac{1}{m} \sum_{j=1}^m\ell(\cdot, D^{(j)}_s)$
    \ENDWHILE
  \end{algorithmic}
\end{algorithm}


\section{Training Algorithm}\label{greedy} In order to reduce the computational cost at training time, we use a greedy training algorithm with stochastic gradient descent as described in  Algorithm~\ref{greedy-training}. First, we uniformly sample an integer $l$ from $\{1,\ldots, k\}$, which is the subset size for a given mini-batch. Then we select $l$ elements for $D_S$ from the candidate set \emph{at once} using the autoregressive selection model. Finally, we perform gradient descent with respect to the parameters of SSS model $\theta$ and target task network $\phi$ to minimize the target task loss on the selected subset $D_s$. As a result, we do not have to run the auto-regressive model $k$ time during training, which significantly reduces the computational cost during training.

\begin{algorithm}[ht]
    \caption{Fixed Size Subsampling. $k$ is the subset size. $l$ is the number of elements to select at each iteration. $D$ is the full set and $D_s \subset D$ is the final subset after running SSS.}
    \label{fixed-size-selection}
    \begin{algorithmic}[1]
    	\STATE \textbf{Input:}  $k$, $l$, $D=\{d_1,\ldots,d_n\}$
    	\STATE \textbf{Output:} $D_s=\{s_1,\ldots, s_k\}$ 
    	
	\FUNCTION{SSS$(k, l, D)$}
    	    \STATE $D_e \leftarrow \frac{1}{n}\sum_{i=1}^n g(d_i)$
    	    \STATE $\overline{d_i} \leftarrow \text{Concat}(g(d_i), D_e)$
            \STATE $z_i \sim \text{Ber}(z_i;\rho(\overline{d_i}) \text{ for } i=1, \ldots, n$
            \STATE $D_c \leftarrow \{d_i \in D\mid z_i = 1, 1\leq i\leq n  \}$
            \STATE $D_s  \leftarrow \emptyset$
            \FOR{$t=1$ {\bfseries to} $k/l$}
                \STATE $D_s \leftarrow D_s \cup \text{AUTOSELECT}(l,D_s, D_c)$
            \ENDFOR
        \ENDFUNCTION
        
        \FUNCTION{AUTOSELECT$(l, D^{(t-1)}_s, D_c)$}
            \STATE $D_c^{(t)} = \{s^{(t)}_1,\ldots, s^{(t)}_{m_t}\} \leftarrow D_c \setminus D^{(t)}_s$
            \STATE $\Tilde{\pi}^{(t)}_i \leftarrow \sigma(\varphi \circ f(s_i, D^{(t-1)}_s))$
            \STATE $(\pi^{(t)}_1, \ldots, \pi^{(t)}_{m_t}) \leftarrow \frac{1}{\sum_{j=1}^{m_t} \Tilde{\pi}^{(t)}_j}(\Tilde{\pi}^{(t)}_1, \ldots, \Tilde{\pi}^{(t)}_{m_t})$
            \STATE \begin{varwidth}[t]{\linewidth} $Q \leftarrow \text{Sample } l \text{ elements from } D^{(t)}_c$ \par $\text{ with the probability } \pi^{(t)}$
            \end{varwidth}
            \STATE $\text{return } Q$
        \ENDFUNCTION
    \end{algorithmic}
\end{algorithm}

\section{Fixed-size Subset Selection}\label{fixed-size-ss}
At test time,  we run the fixed size subset selection algorithm to choose the most task relevant elements from the  set $D$ as described in Algorithm~\ref{fixed-size-selection}. We do not use the greedy training algorithm. Instead, we autoregressively select $k$ elements from the candidate set $D_c$ as described in line 13 from Algorithm~\ref{fixed-size-selection} to construct the representative subset $D_s$. 

\section{Ablation}\label{app:ablation}
We perform extensive ablation studies on the two-stage Set based Stochastic Susbsampling method using the function reconstruction tasks presented in Section ~\ref{sec:function}. First, we explore 
the contribution of the candidate selection and autoregressive subset selection stages. Then, we verify the importance of stochasticity in SSS. 

\paragraph{Random Selection with Autoregressive Subset Selection} To show the importance of the candidate selection stage of SSS, we replace it with random selection (labelled \textbf{Random + Stage 2} in Fig.~\ref{table:acc_minitable}). As shown in Fig.~\ref{fig:ablation}, we find that while this model performs better than the model with only candidate selection, it performs worse than SSS (red line in Fig.~\ref{fig:ablation}) and the autoregressive subset selection stage used alone (green line in Fig.~\ref{fig:ablation}). We provide visualizations of the reconstructed functions in the second column of  Fig.~\ref{function_reconstruction_ablation}.  Random selection in place of the candidlate selection can ignore elements from certain parts of the function and hence the autoregressive selection model  cannot select elements from those regions for reconstruction. In short, filtering elements with candidate selection helps the autoregressive selection model to choose more informative instances from the input set than random selection.

\paragraph{AutoRegressive Subset Selection Only} 
As shown in Fig.~\ref{fig:ablation} we observe that the autoreressive subset selection model (labelled \textbf{Stage 2 Only} in Fig.~\ref{fig:ablation}) performs significantly better than the SSS model with candidate selection and autoregressive selection stage. Qualitative results are provided in third column of Fig.~\ref{function_reconstruction_ablation}. While this model performs well, it is not very practical due to the high computational cost when the size of the set becomes large.

\paragraph{Candidate Selection Only} In order to validate the importance of the autoregressive selection stage, we construct a subset using only the candidate selection stage. As shown in  Fig.~\ref{fig:ablation} (labelled \textbf{Stage 1 Only}), removing the autoregressive selection stage significantly degrades the performance of the SSS model. In the first column of Fig.~\ref{function_reconstruction_ablation}, the model without autoregressive selection significantly underperforms compared to SSS since it heavily focuses on the drifting parts of the function and ignores the other parts of curve. In sum, it is not always desirable to select only highly activating samples in the set without considering any dependencies among the others since it may choose redundant elements. 

\begin{figure}[t]
    \centering
    \includegraphics[width=\linewidth]{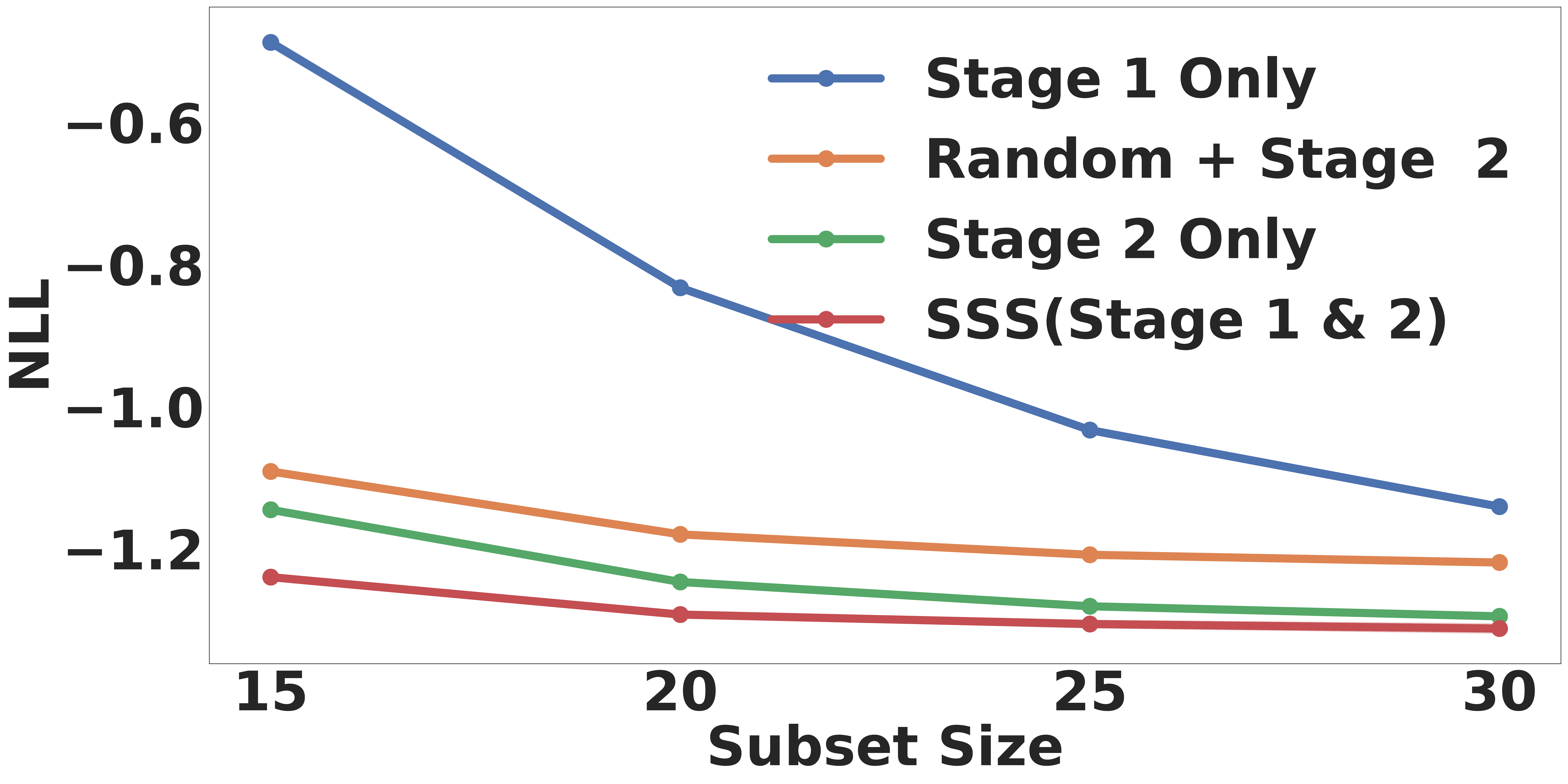}
    \vspace{-0.2in}
    \caption{\small Ablation on SSS.}
    \label{fig:ablation}
\end{figure}

Generally, we find that SSS performs better than the variants considered here and provide a better tradeoff between model performance and computational requirements.

\begin{table}[t]
\begin{center}
\caption{\small CelebA Attributes Classification.}\label{tab:celeba-classification}
\begin{tabular}{cccc}
    \toprule
    Model  & \# Pixels & Storage  & mAUC   \\
    \midrule
    Full Image & All 38804 & 114KB &  0.9157  \\
    RS &  500 & 5KB &  0.8471  \\
    SSS(rec) &  500 & 5KB &  0.8921  \\
    SSS(MC) &  500 & 5*5KB &  0.9132  \\
    SSS(ours) &  500 & 5KB & 0.9093  \\
    \bottomrule
  \end{tabular}
\end{center}
\end{table}
\begin{figure}[t]
    \centering
    \includegraphics[width=0.7\linewidth]{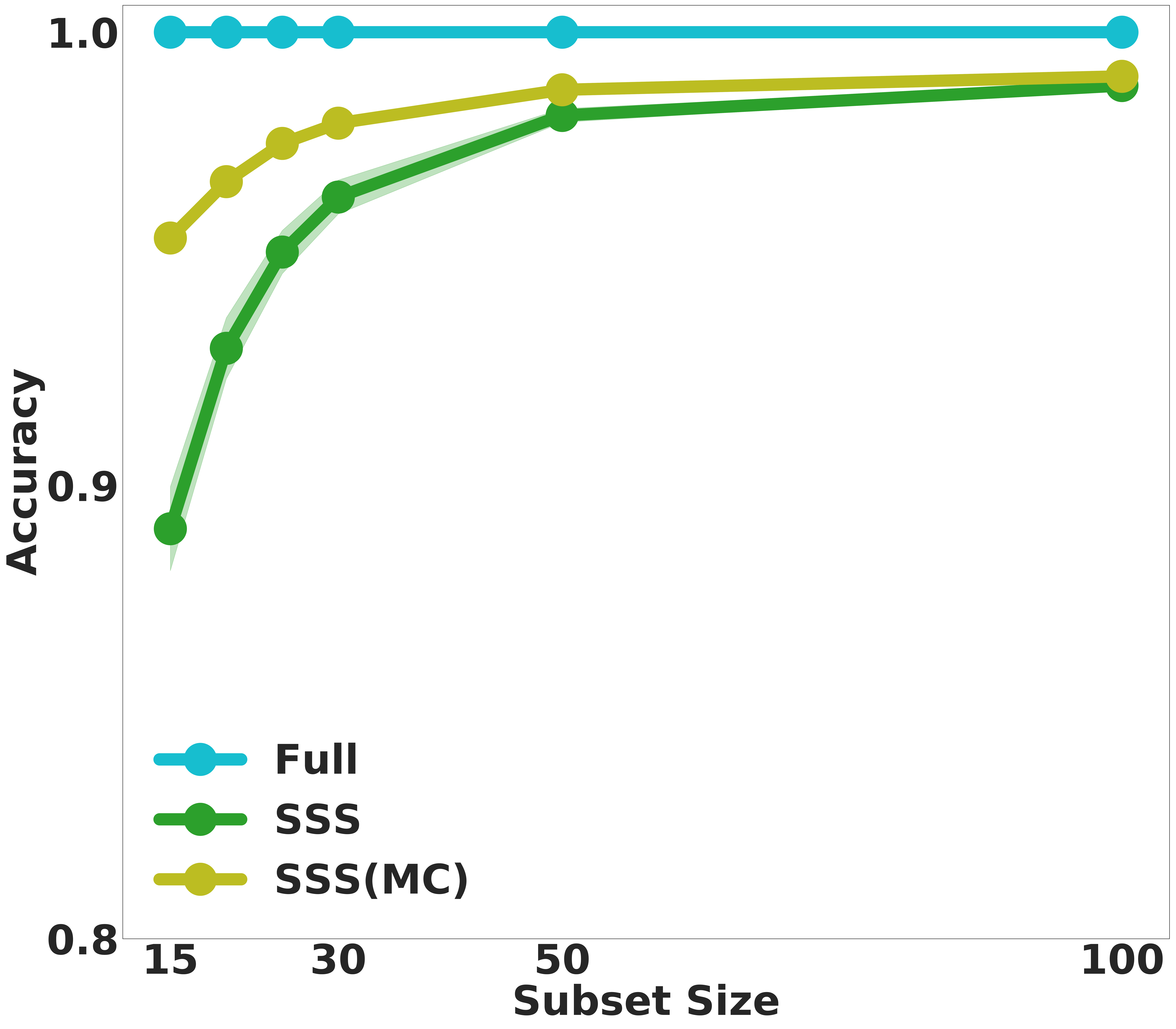}
    \vspace{-0.15in}
    \caption{\small Accuracy with varying subset size and the number of particles for MCMC.}
    \label{fig:mnist_mcmc}
\end{figure}
\paragraph{Stochasticity of SSS} Since our method is stochastic with the following predictive distribution: $\mathbb{E}_{p_\theta(D_s|D)} [p_\phi(y_D|D_s)]$, we approximate it with Monte Carlo sampling as follows:
\begin{equation}
\begin{gathered}
    \mathbb{E}_{p_\theta(D_s|D)} [p_\phi(y_D|D_s)] \approx \frac{1}{n}\sum_{i=1}^n p_\phi (y_D|D^{(i)}_s), \\ 
    \text{where } D^{(i)}_s \stackrel{i.i.d.}{\sim} p_\theta (D_s|D),
\end{gathered}
\end{equation}
However in \textit{all} experiments, we only report the result with one sampled subset, since it gives the best trade-off between computational cost and accuracy. We compare SSS against another variant, \textbf{SSS-MC}, which use 5 sampled subsets for MC sampling. As shown in Fig.~\ref{celeba_classification}, it obtains a mean AUC of $91.32\%$, which is slightly better than SSS which achieves $90.93\%$. Note SSM-MC increases the computational cost (inference) and memory requirement up to 5 times as shown in Table~\ref{tab:celeba-classification}. The result justifies the inference procedure of SSS which achieves good performance for the target tasks with memory and computational efficiency. 

Additionally, in Figure~\ref{fig:mnist_mcmc} we show how the uncertainty of SSS decreases as we increases the subset size. For each subset size, we draw 5 different subsets from the subsampling model trained on the MNIST classification task described in Section~\ref{sec411} and average the predictions of the sampled subsets. This requires 5 forward pass of the model. At a subsampling rate of 50, the model with a single subset shows similar performance with the one with multiple draws of subsets.

\begin{figure*}
\vspace{-0.1in}
    \centering
    \begin{subfigure}{0.5\textwidth}
		\centering
		\includegraphics[width=\linewidth]{images/function/plots/legend_func.pdf}
		\vspace{-0.2in}
	\end{subfigure}
    \begin{subfigure}{0.25\textwidth}
		\centering
		\includegraphics[width=\linewidth]{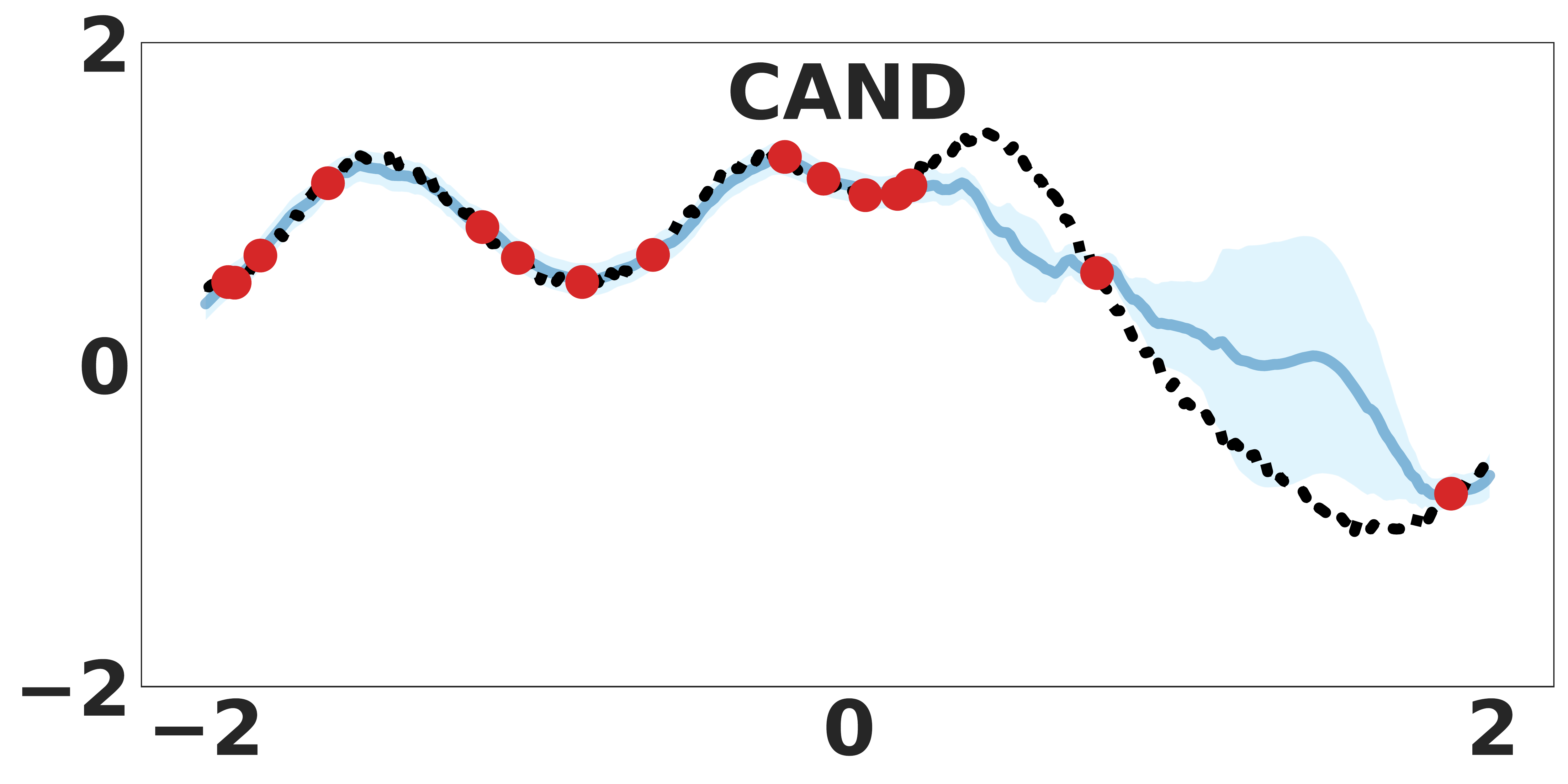}
	\end{subfigure}%
    \begin{subfigure}{0.25\textwidth}
		\centering
		\includegraphics[width=\linewidth]{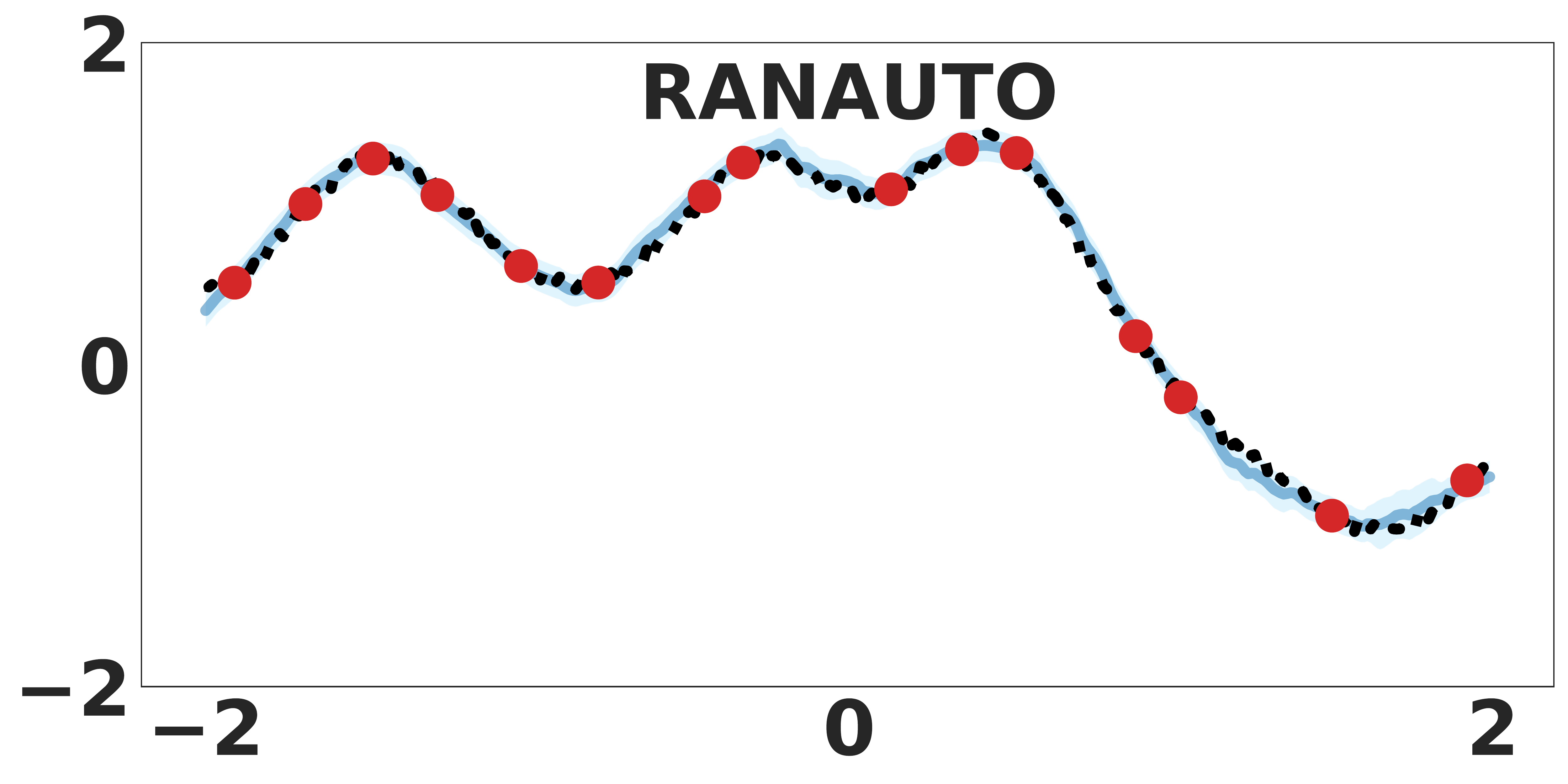}
	\end{subfigure}%
    \begin{subfigure}{0.25\textwidth}
		\centering
		\includegraphics[width=\linewidth]{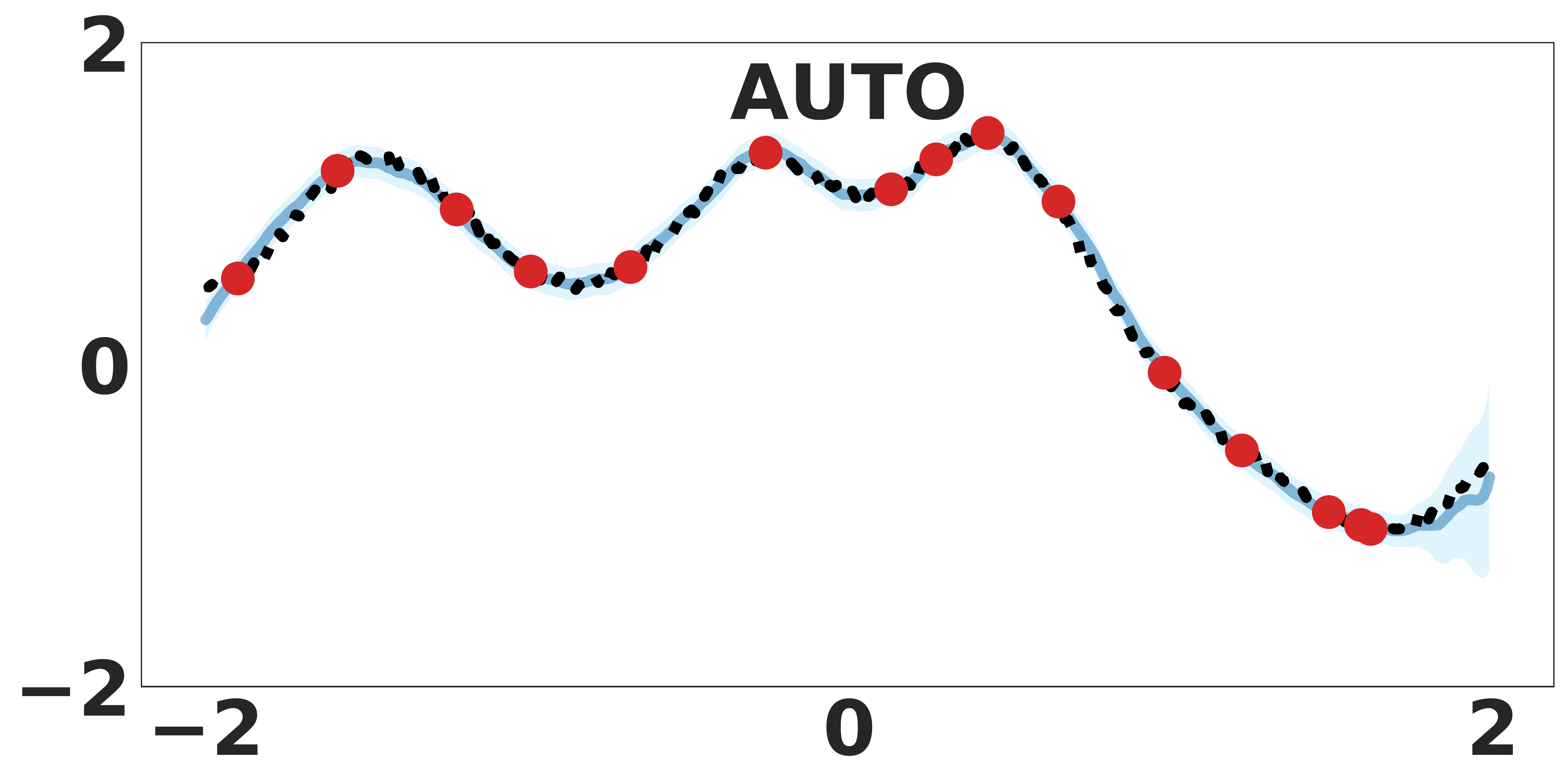}
	\end{subfigure}%
	\begin{subfigure}{0.25\textwidth}
		\centering
		\includegraphics[width=\linewidth]{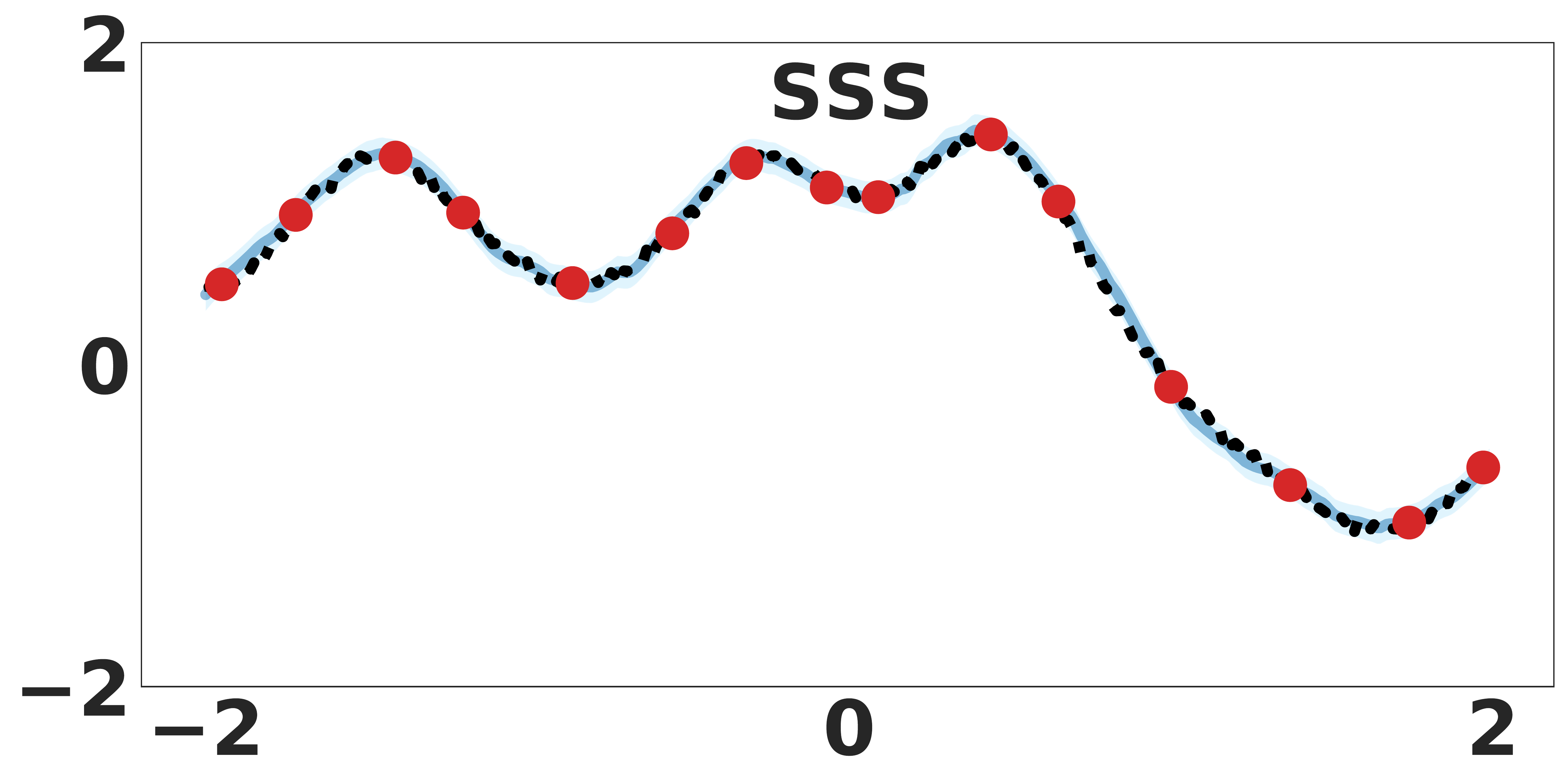}
	\end{subfigure}
    \begin{subfigure}{0.25\textwidth}
		\centering
		\includegraphics[width=\linewidth]{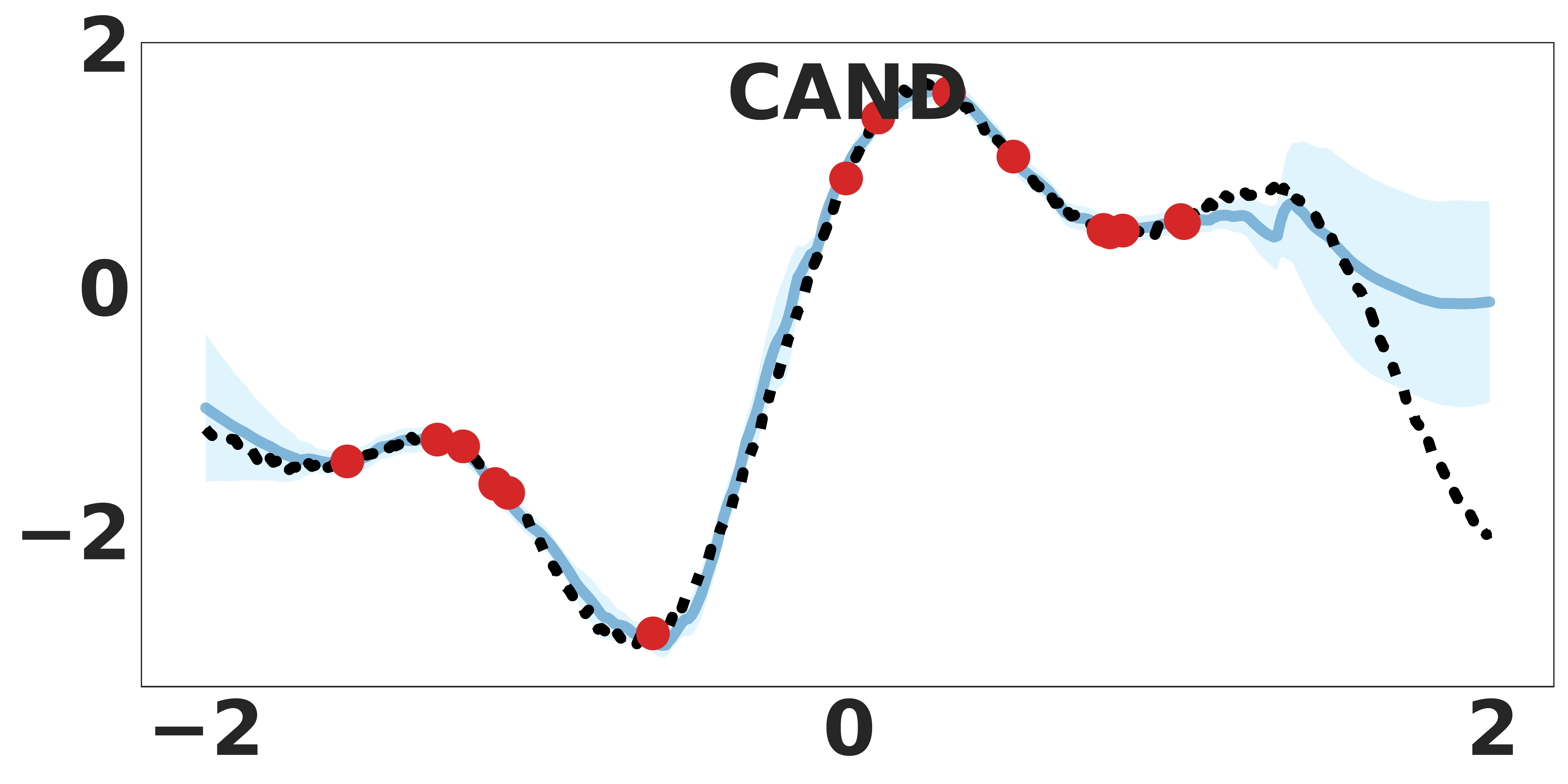}
	\end{subfigure}%
    \begin{subfigure}{0.25\textwidth}
		\centering
		\includegraphics[width=\linewidth]{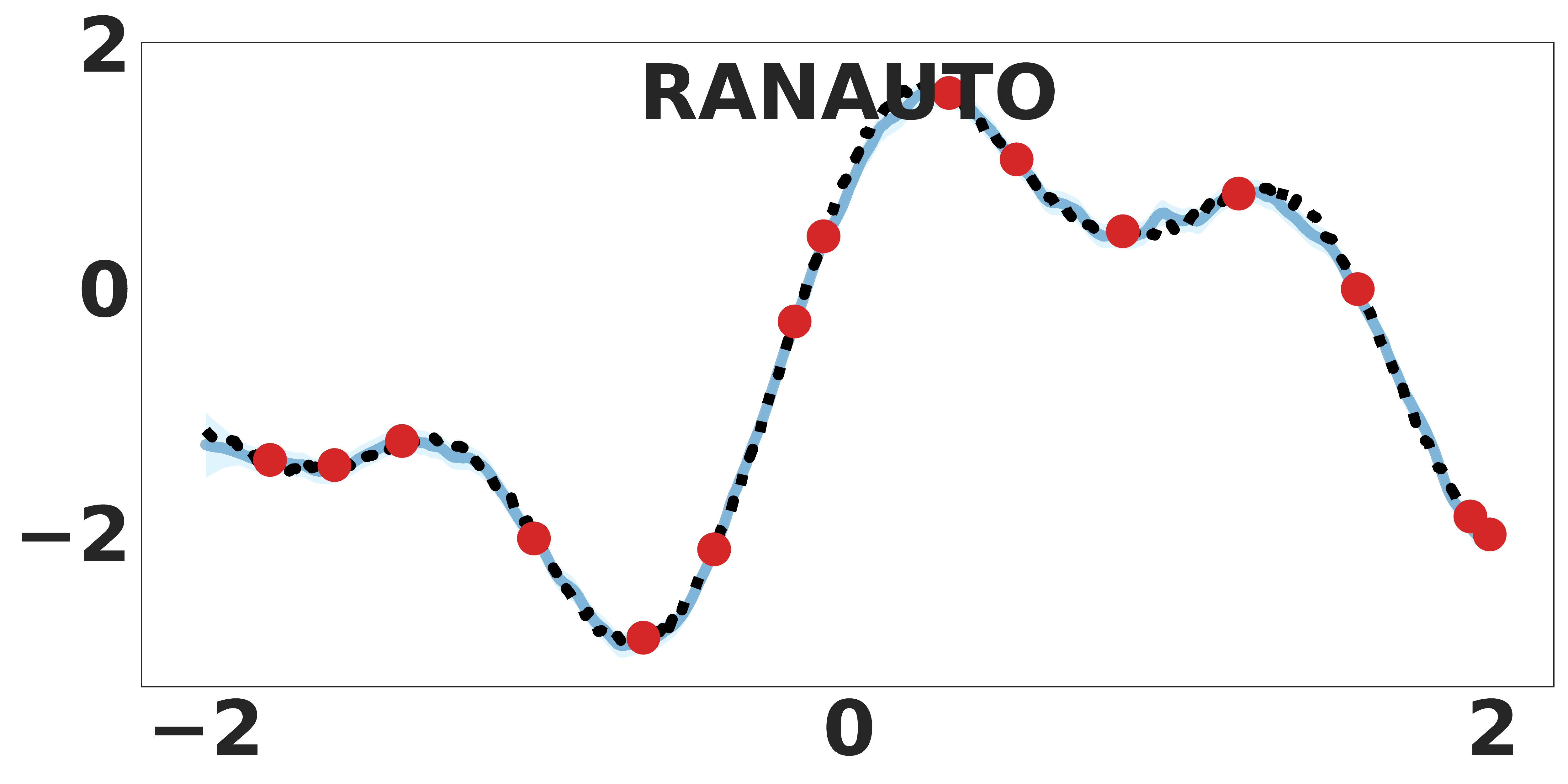}
	\end{subfigure}%
    \begin{subfigure}{0.25\textwidth}
		\centering
		\includegraphics[width=\linewidth]{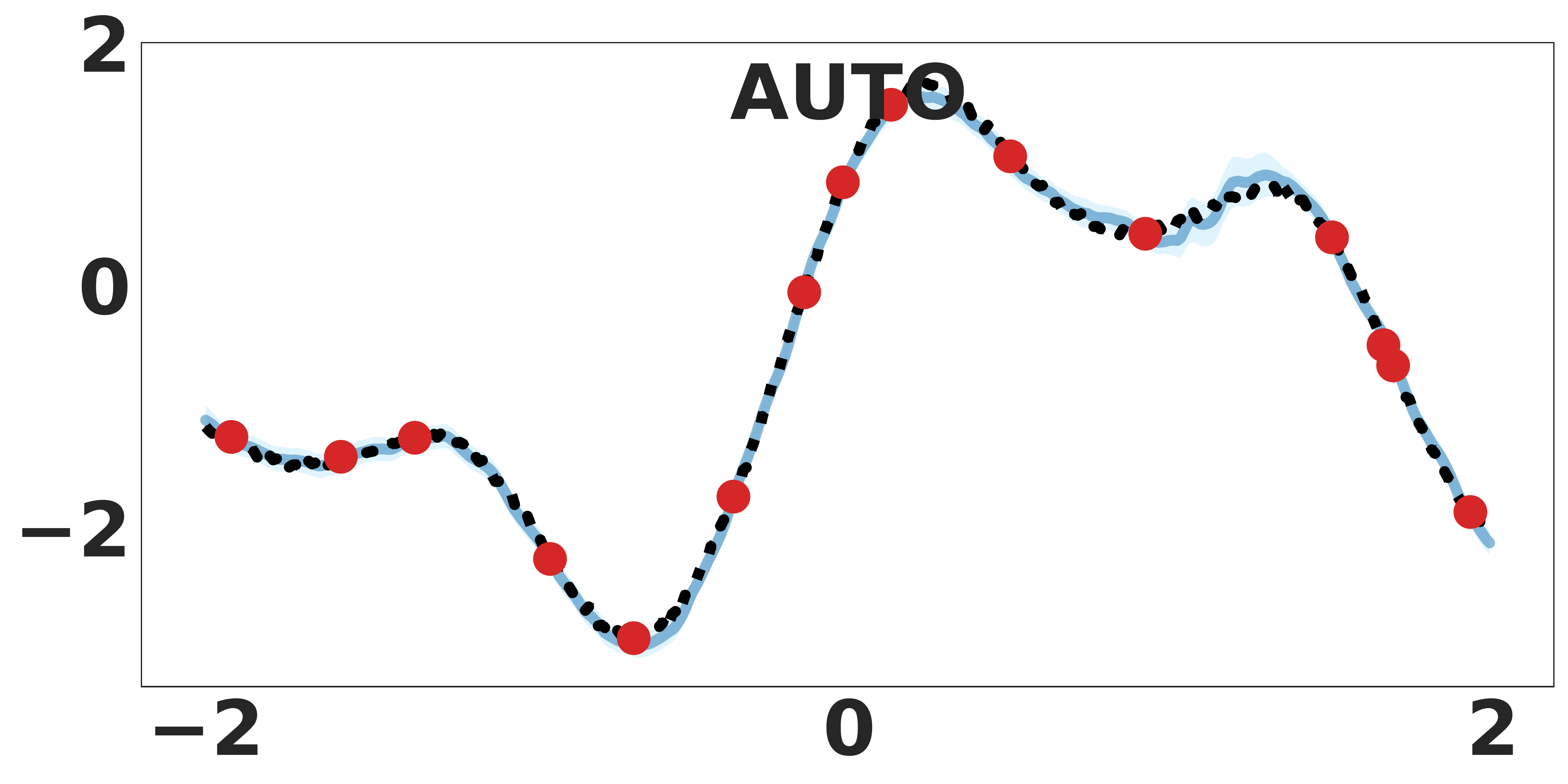}
	\end{subfigure}%
	\begin{subfigure}{0.25\textwidth}
		\centering
		\includegraphics[width=\linewidth]{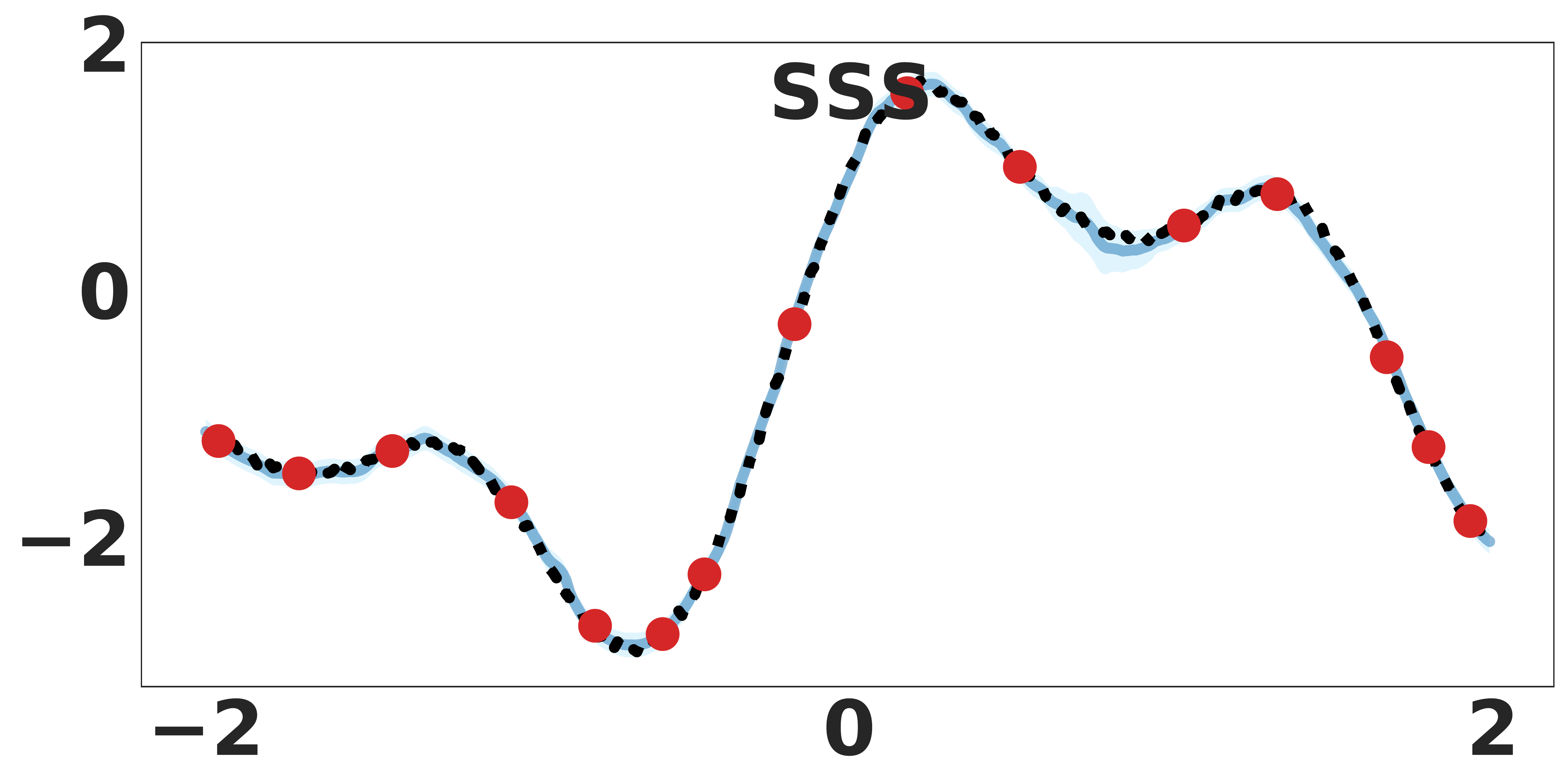}
	\end{subfigure}	
    \begin{subfigure}{0.25\textwidth}
		\centering
		\includegraphics[width=\linewidth]{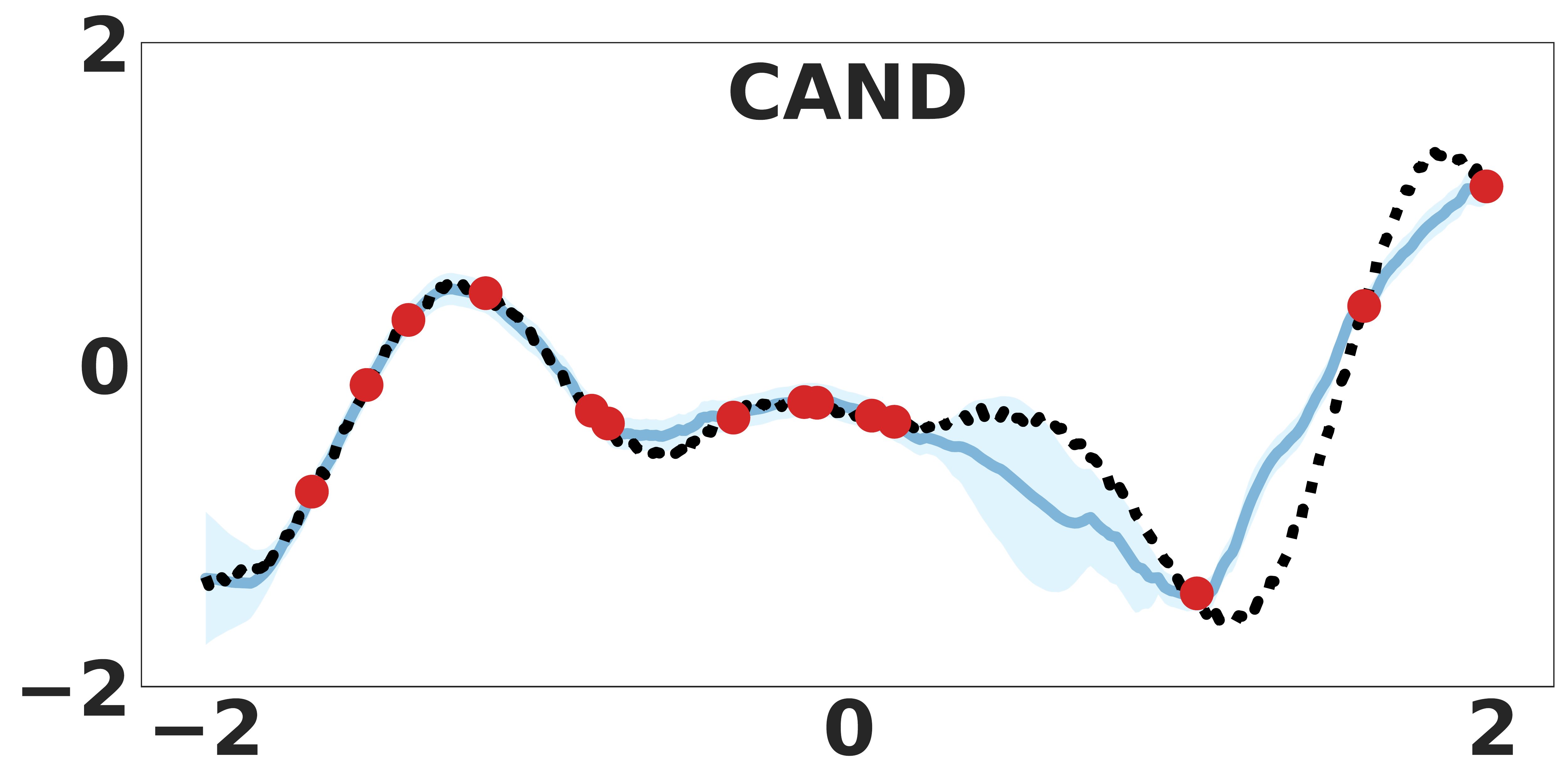}
	\end{subfigure}%
    \begin{subfigure}{0.25\textwidth}
		\centering
		\includegraphics[width=\linewidth]{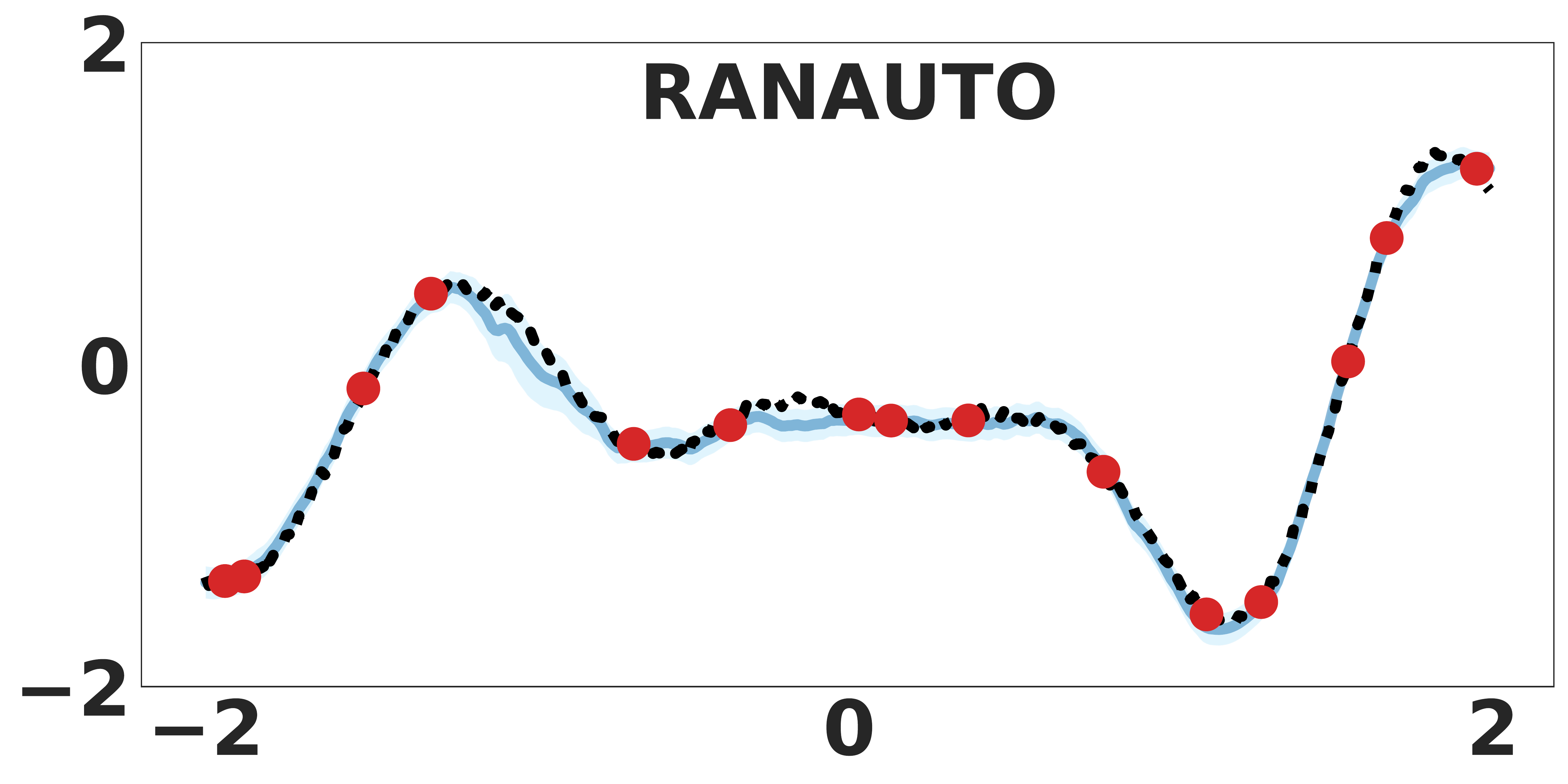}
	\end{subfigure}%
    \begin{subfigure}{0.25\textwidth}
		\centering
		\includegraphics[width=\linewidth]{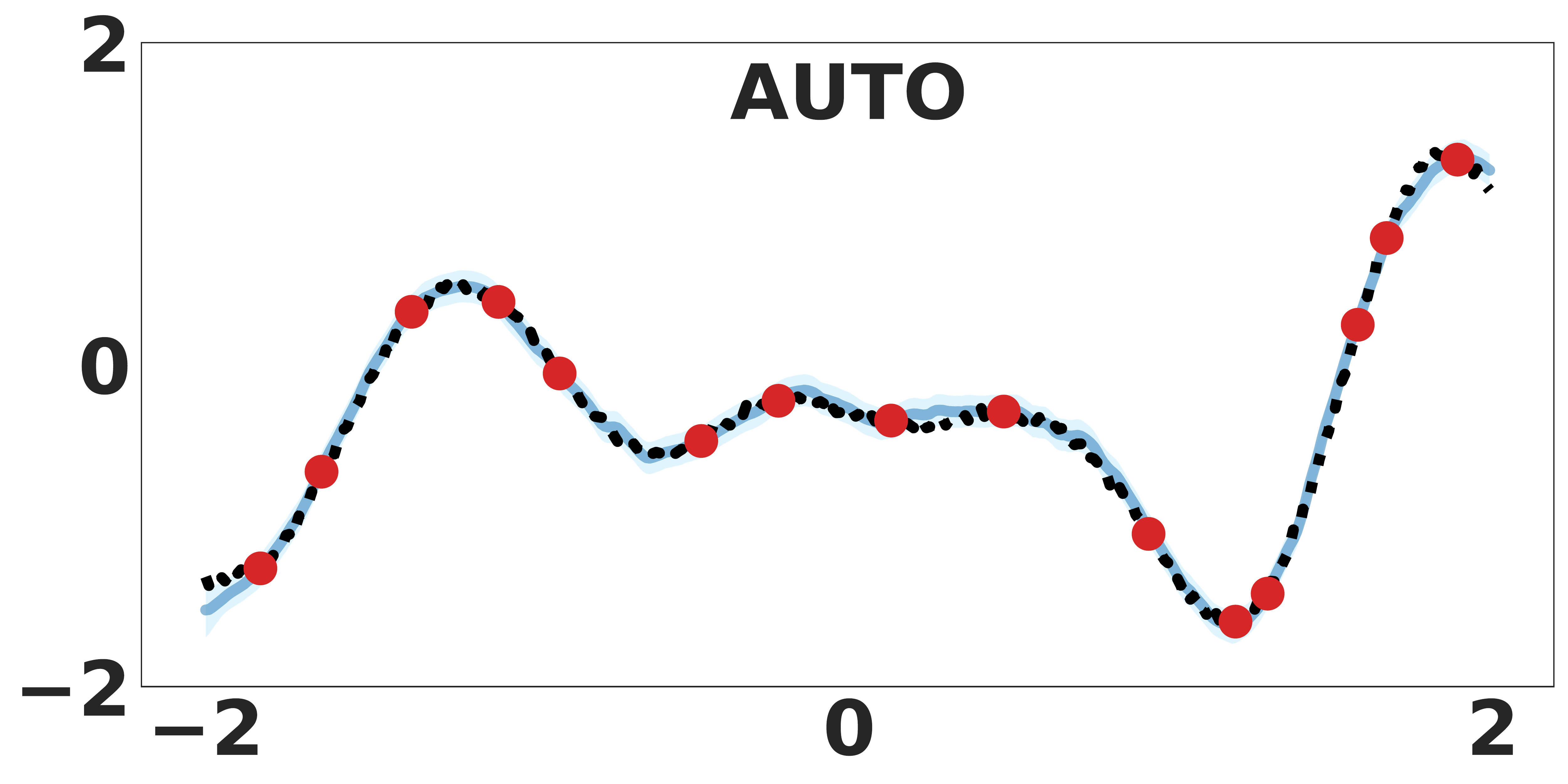}
	\end{subfigure}%
	\begin{subfigure}{0.25\textwidth}
		\centering
		\includegraphics[width=\linewidth]{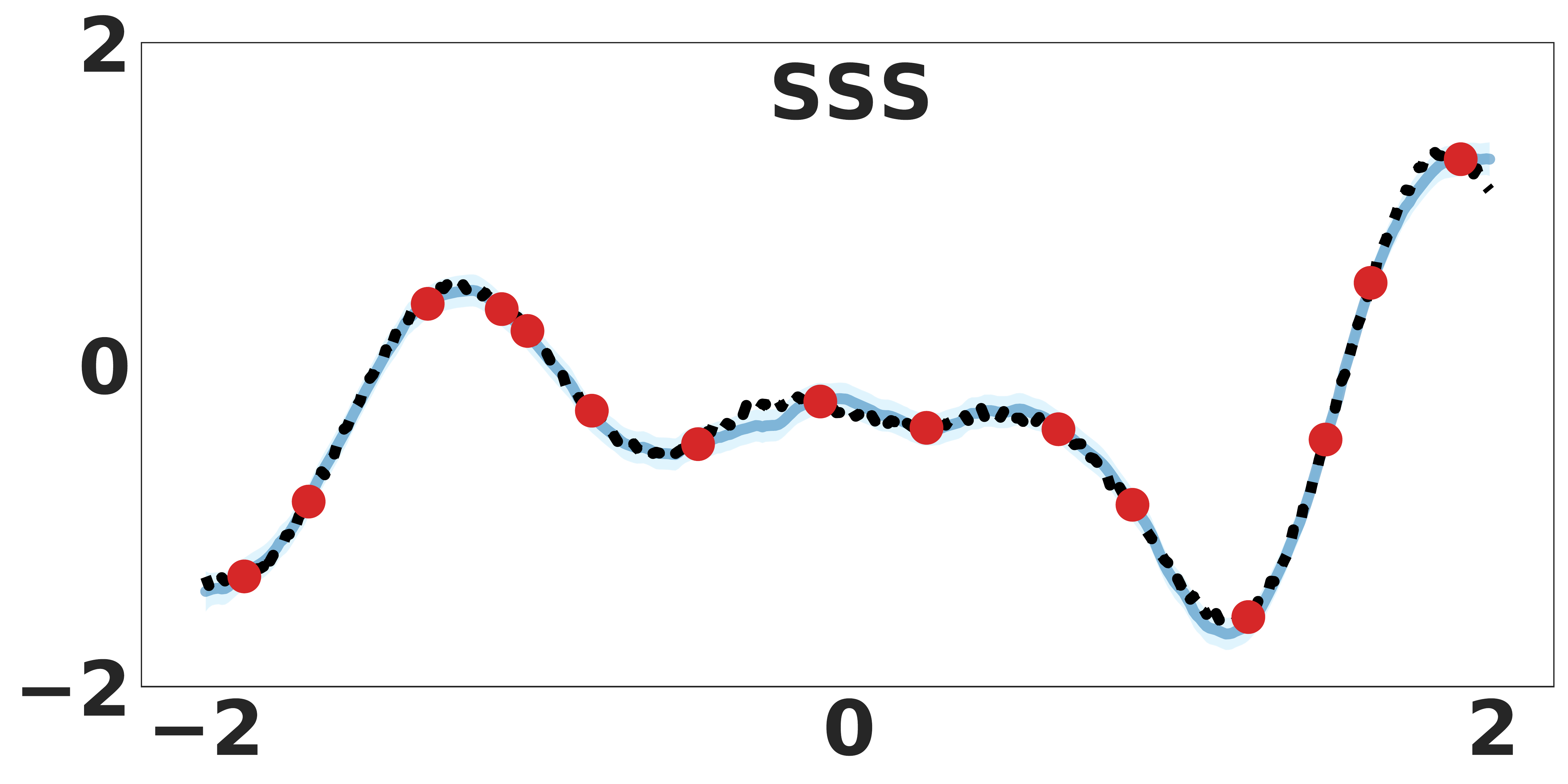}
	\end{subfigure}	
    \begin{subfigure}{0.25\textwidth}
		\centering
		\includegraphics[width=\linewidth]{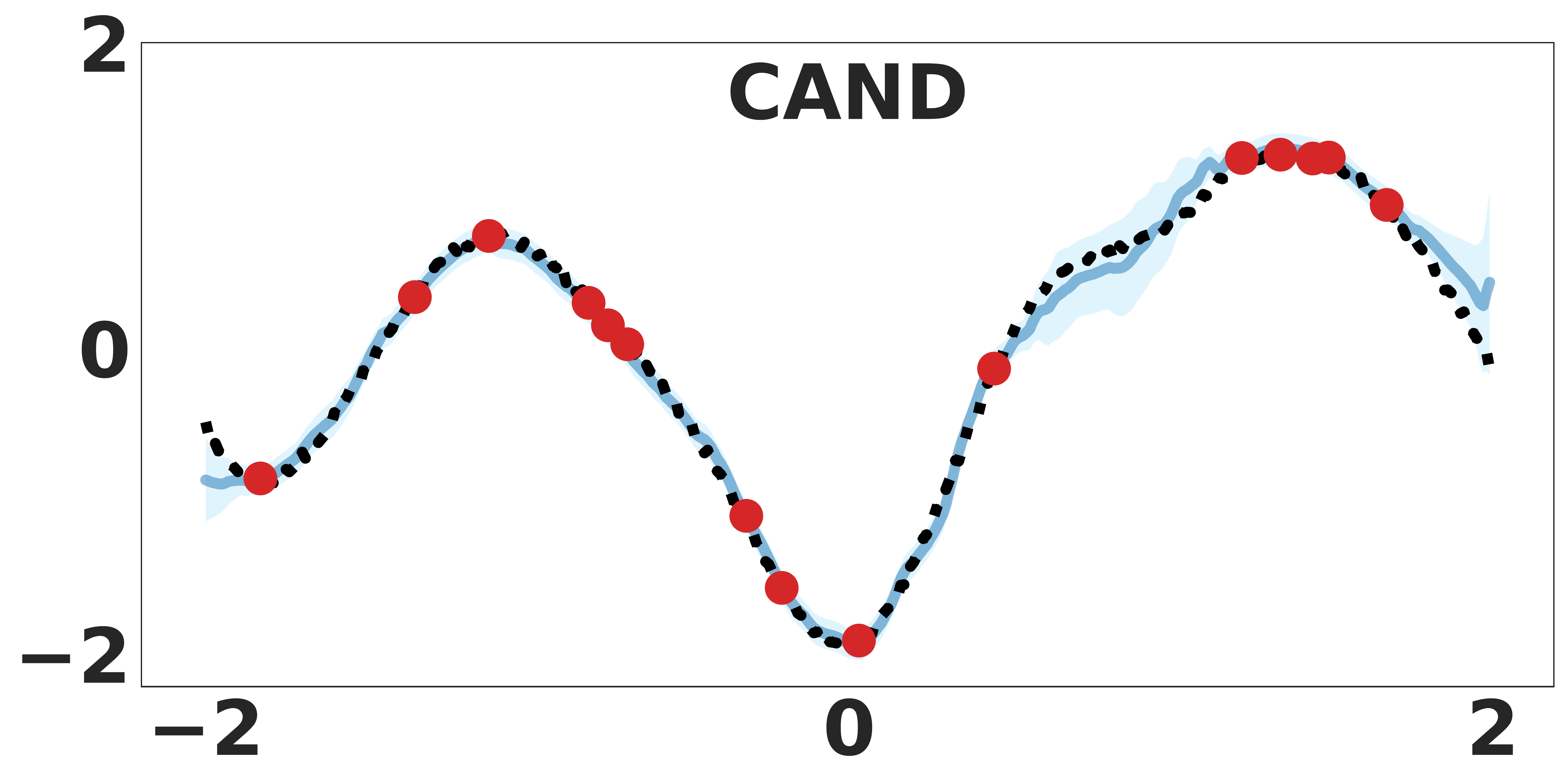}
	\end{subfigure}%
    \begin{subfigure}{0.25\textwidth}
		\centering
		\includegraphics[width=\linewidth]{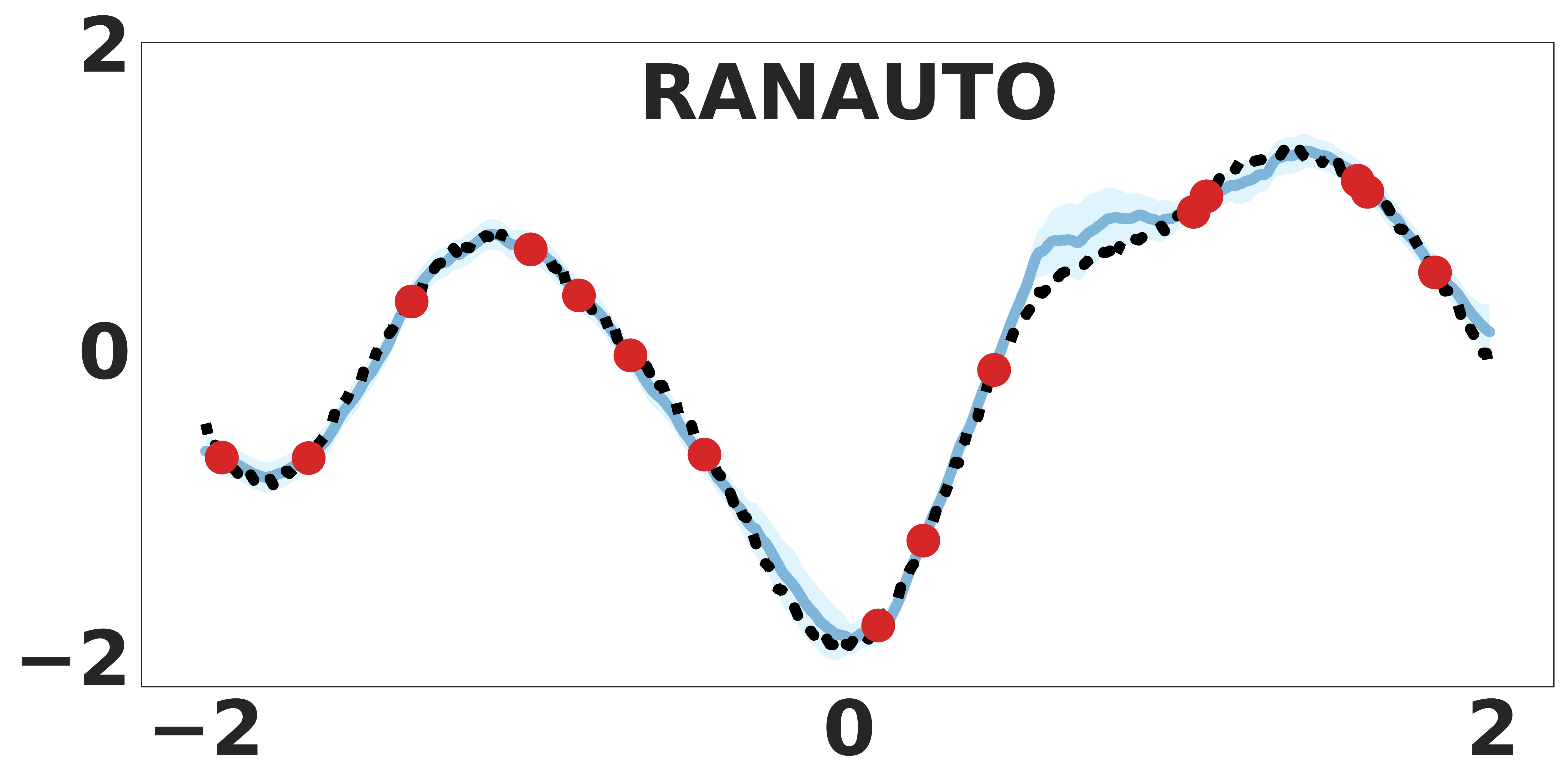}
	\end{subfigure}%
    \begin{subfigure}{0.25\textwidth}
		\centering
		\includegraphics[width=\linewidth]{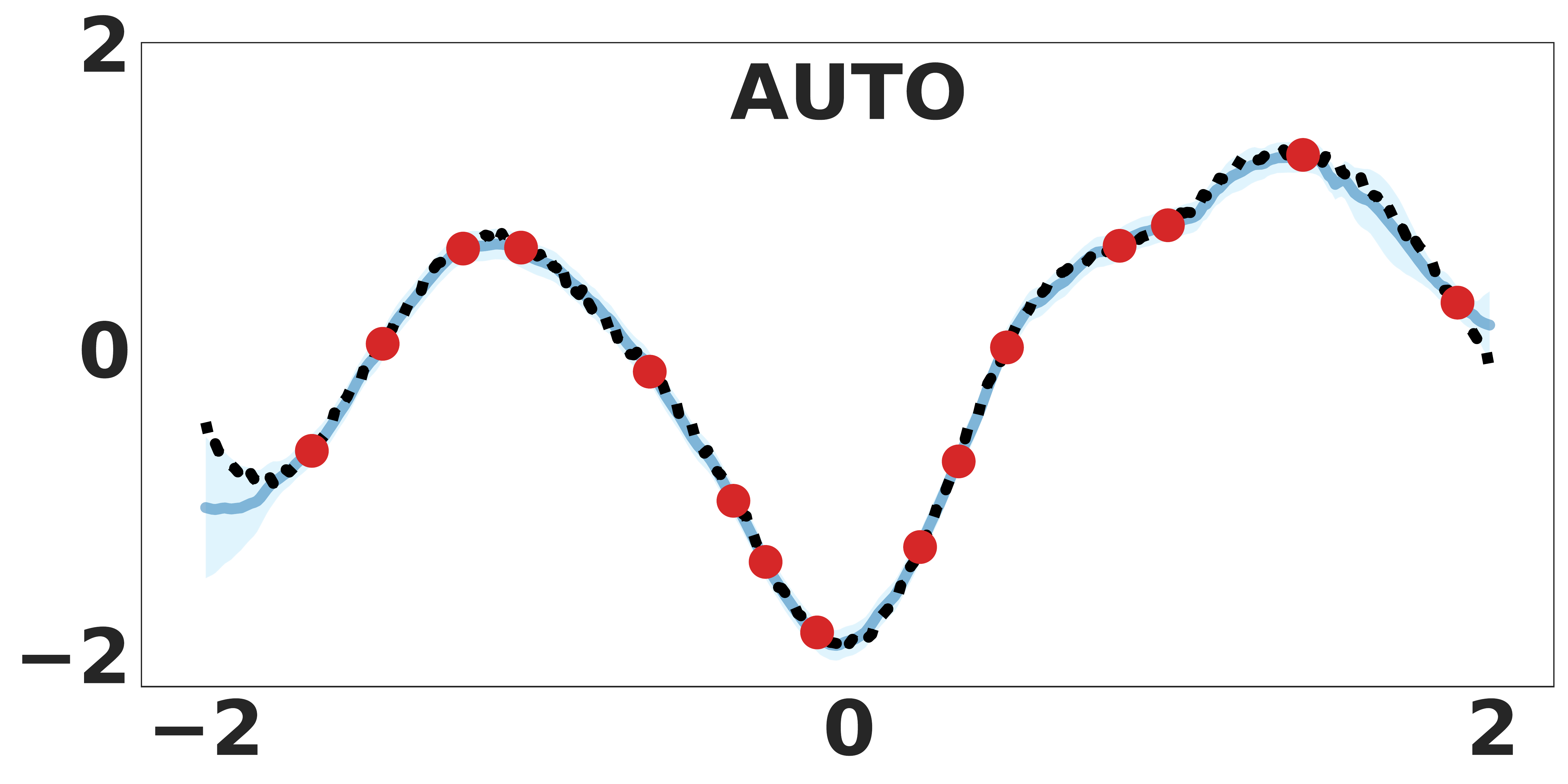}
	\end{subfigure}%
	\begin{subfigure}{0.25\textwidth}
		\centering
		\includegraphics[width=\linewidth]{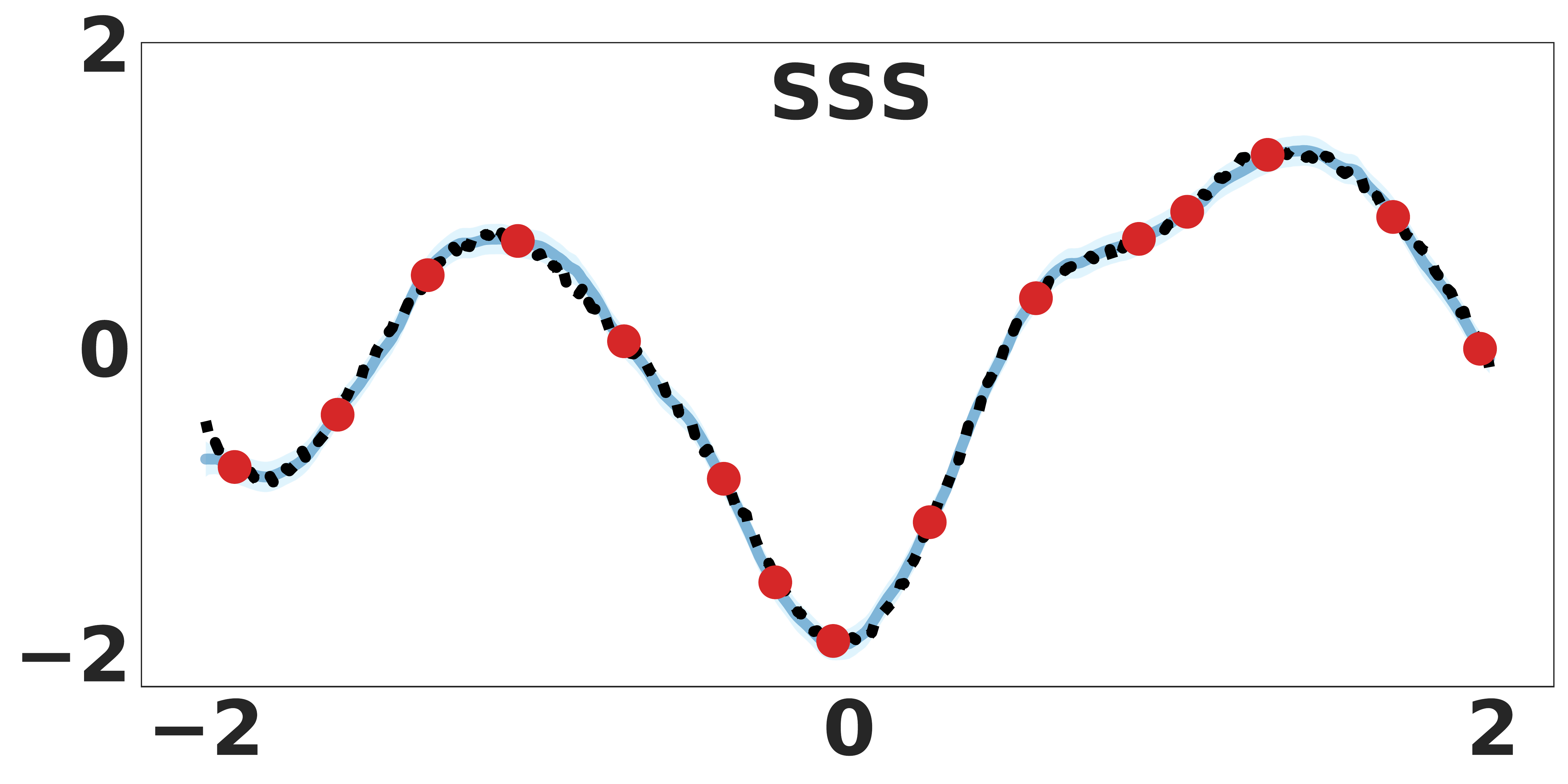}
	\end{subfigure}	
    \begin{subfigure}{0.25\textwidth}
		\centering
		\includegraphics[width=\linewidth]{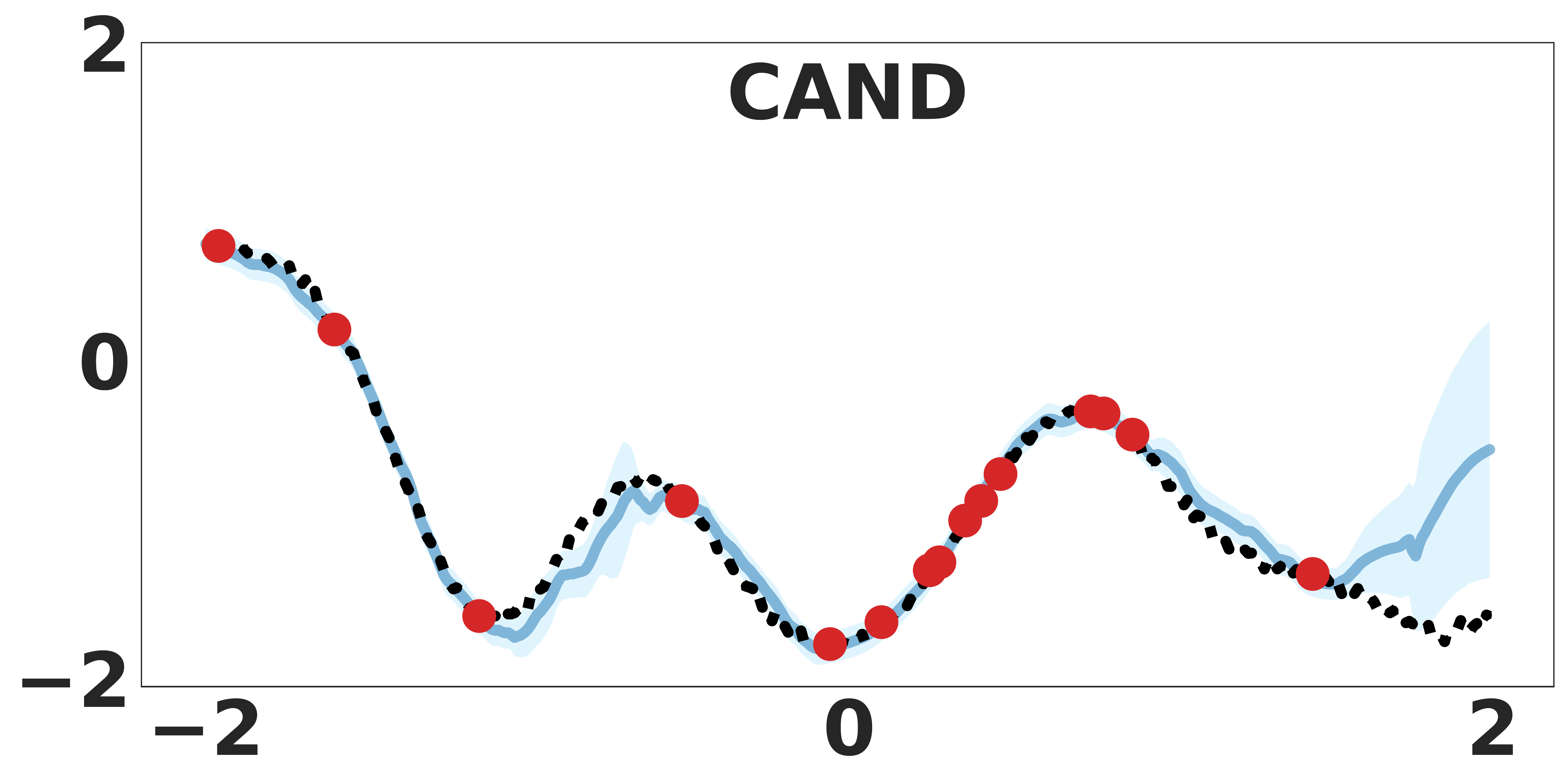}
	\end{subfigure}%
    \begin{subfigure}{0.25\textwidth}
		\centering
		\includegraphics[width=\linewidth]{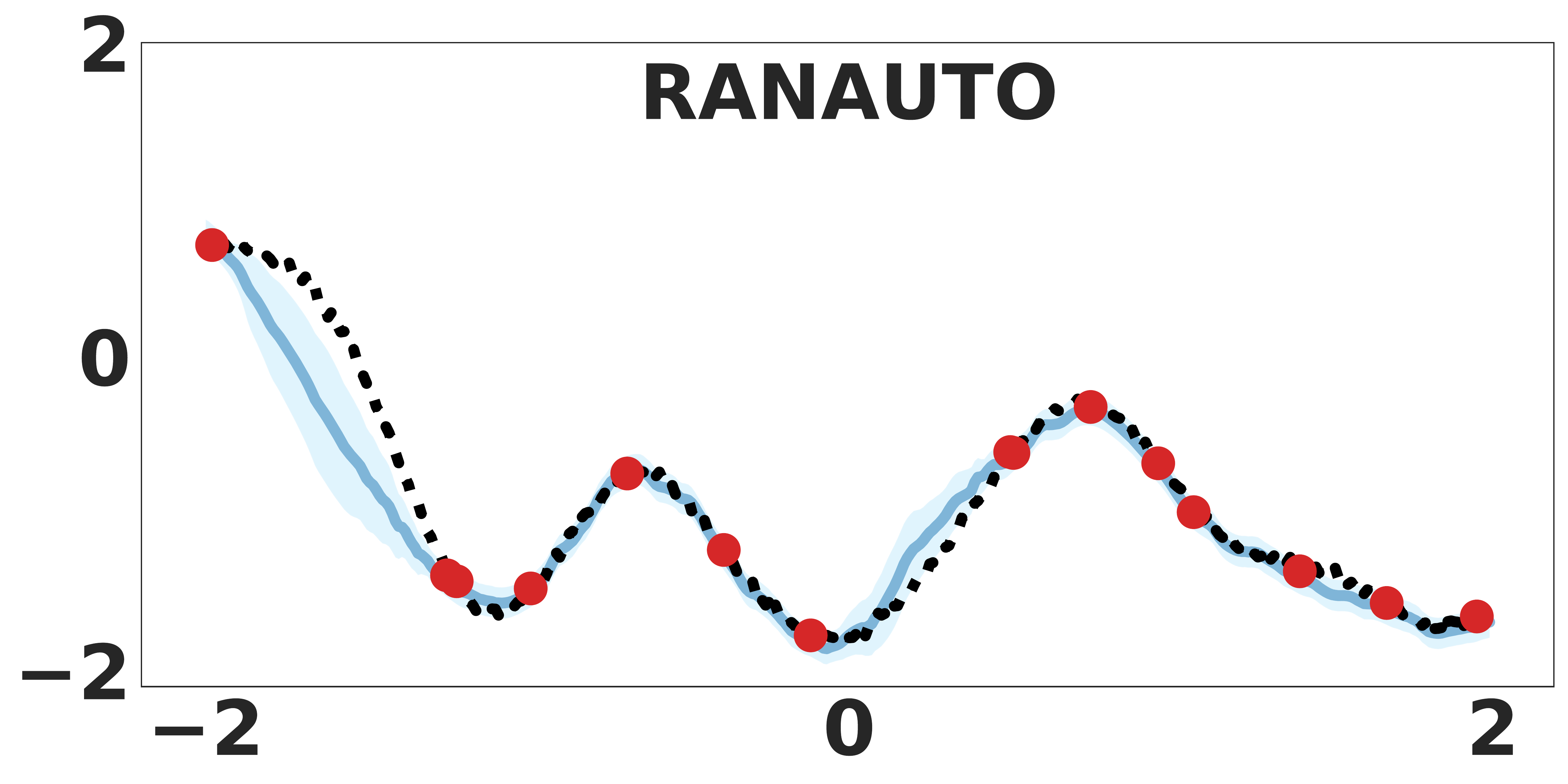}
	\end{subfigure}%
    \begin{subfigure}{0.25\textwidth}
		\centering
		\includegraphics[width=\linewidth]{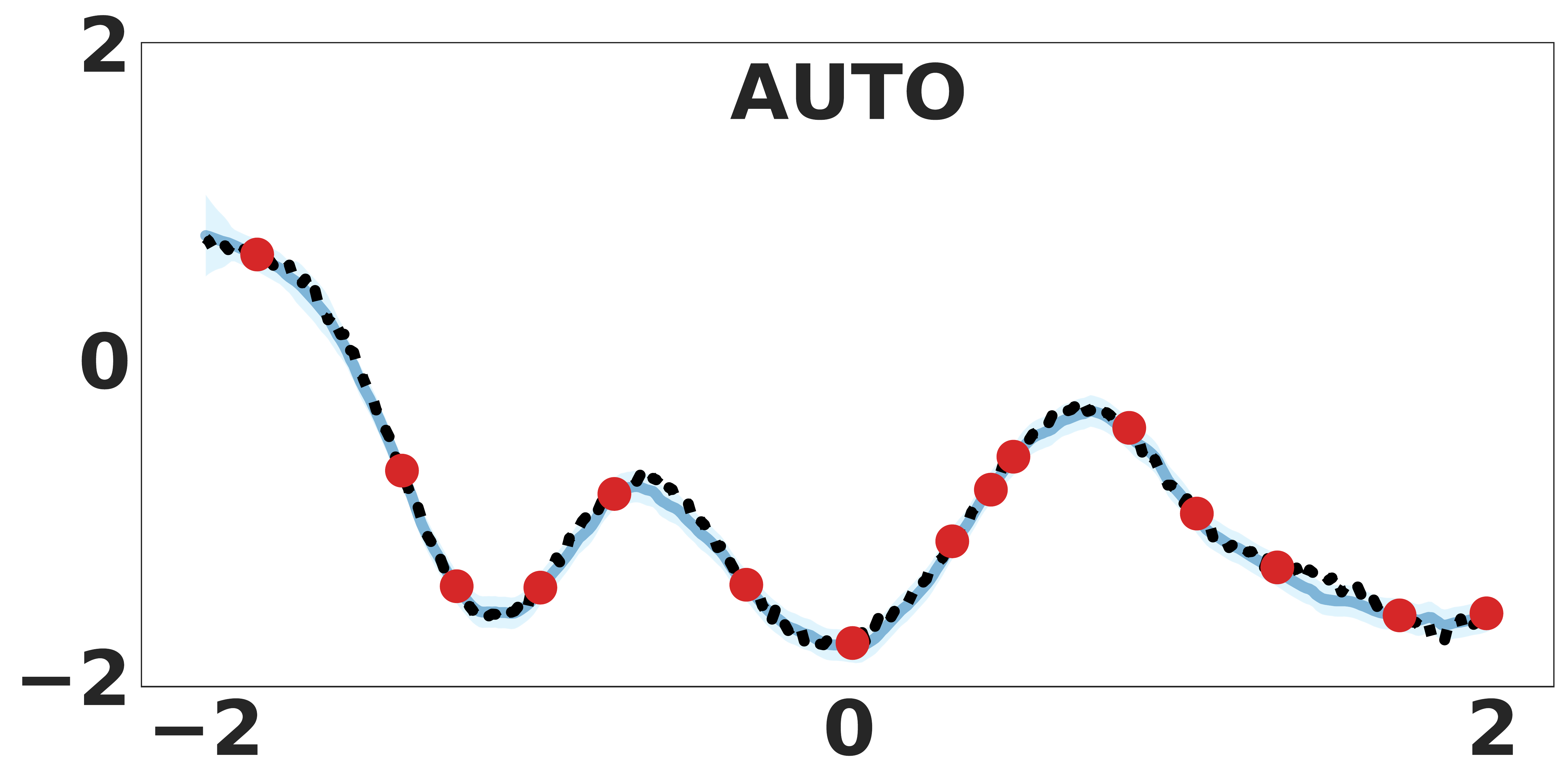}
	\end{subfigure}%
	\begin{subfigure}{0.25\textwidth}
		\centering
		\includegraphics[width=\linewidth]{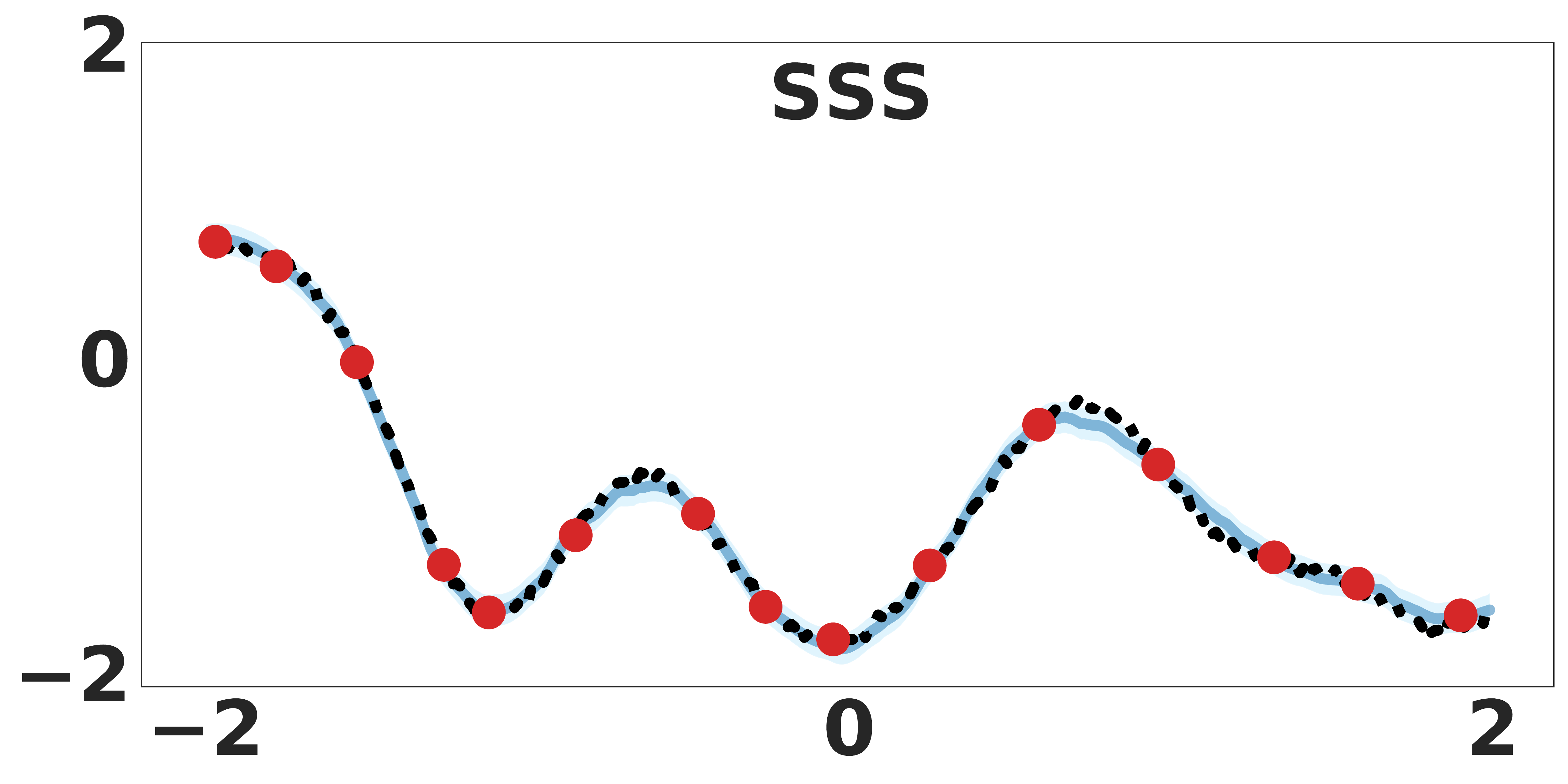}
	\end{subfigure}	
    \begin{subfigure}{0.25\textwidth}
		\centering
		\includegraphics[width=\linewidth]{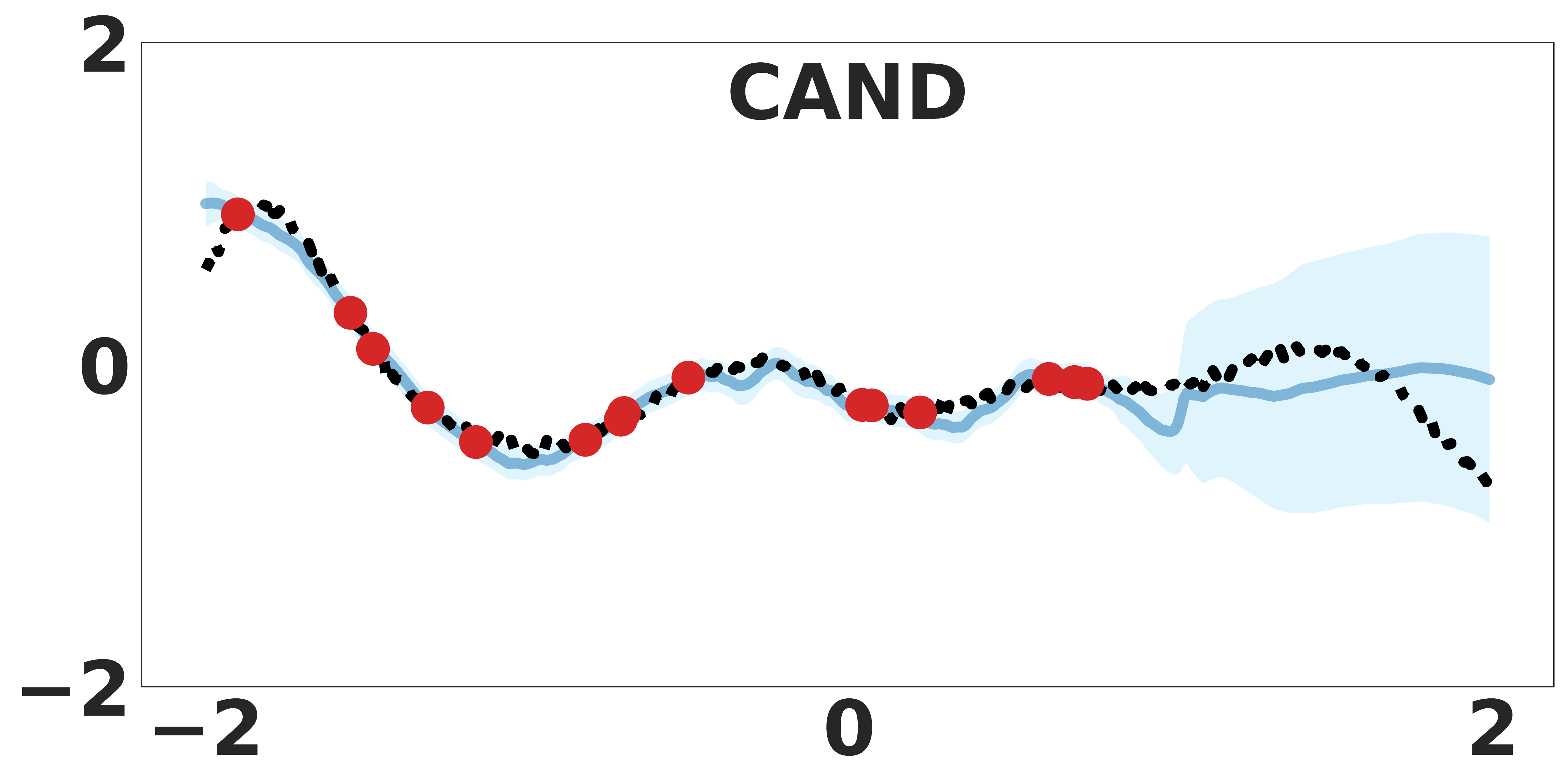}
	\end{subfigure}%
    \begin{subfigure}{0.25\textwidth}
		\centering
		\includegraphics[width=\linewidth]{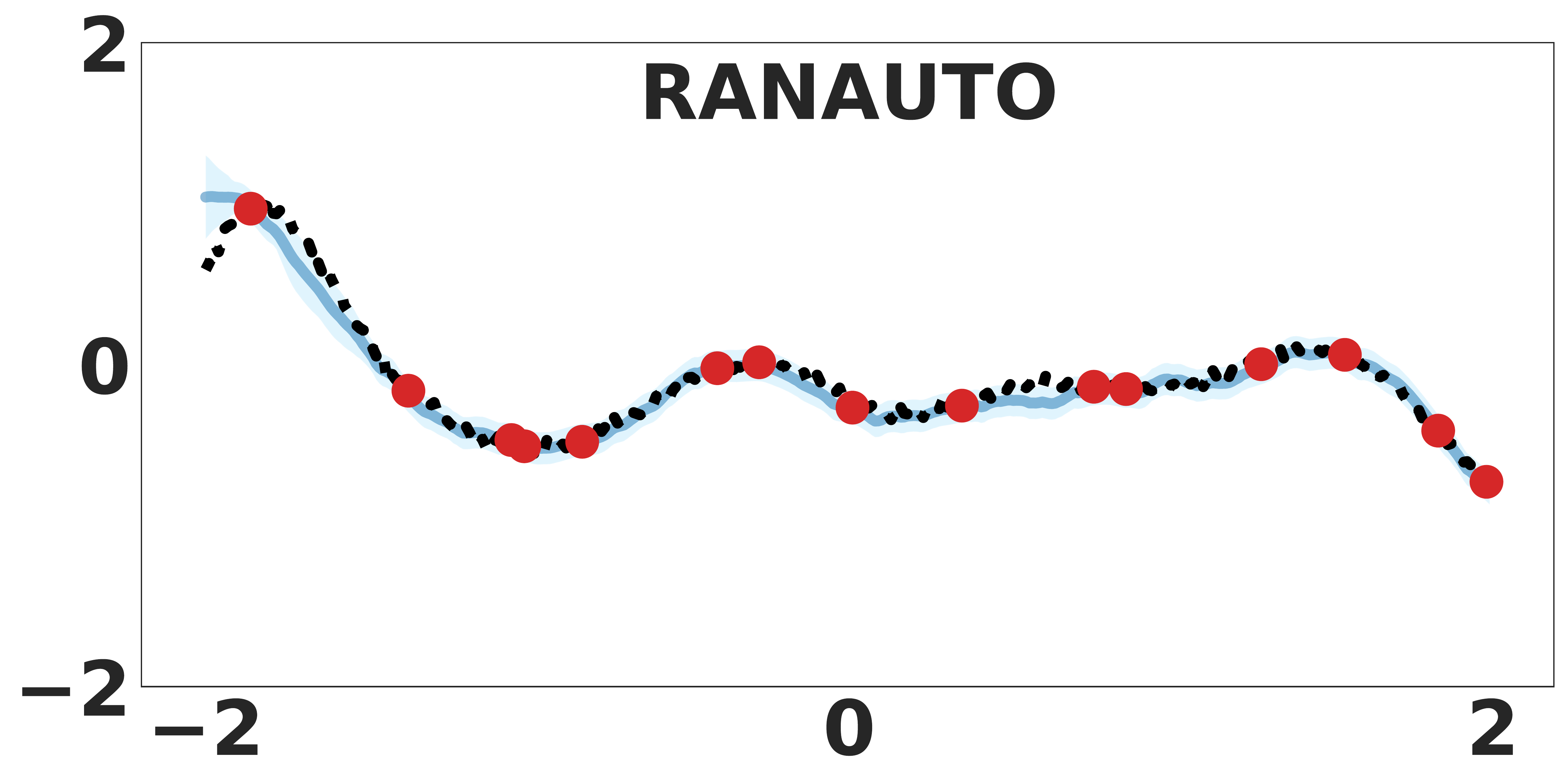}
	\end{subfigure}%
    \begin{subfigure}{0.25\textwidth}
		\centering
		\includegraphics[width=\linewidth]{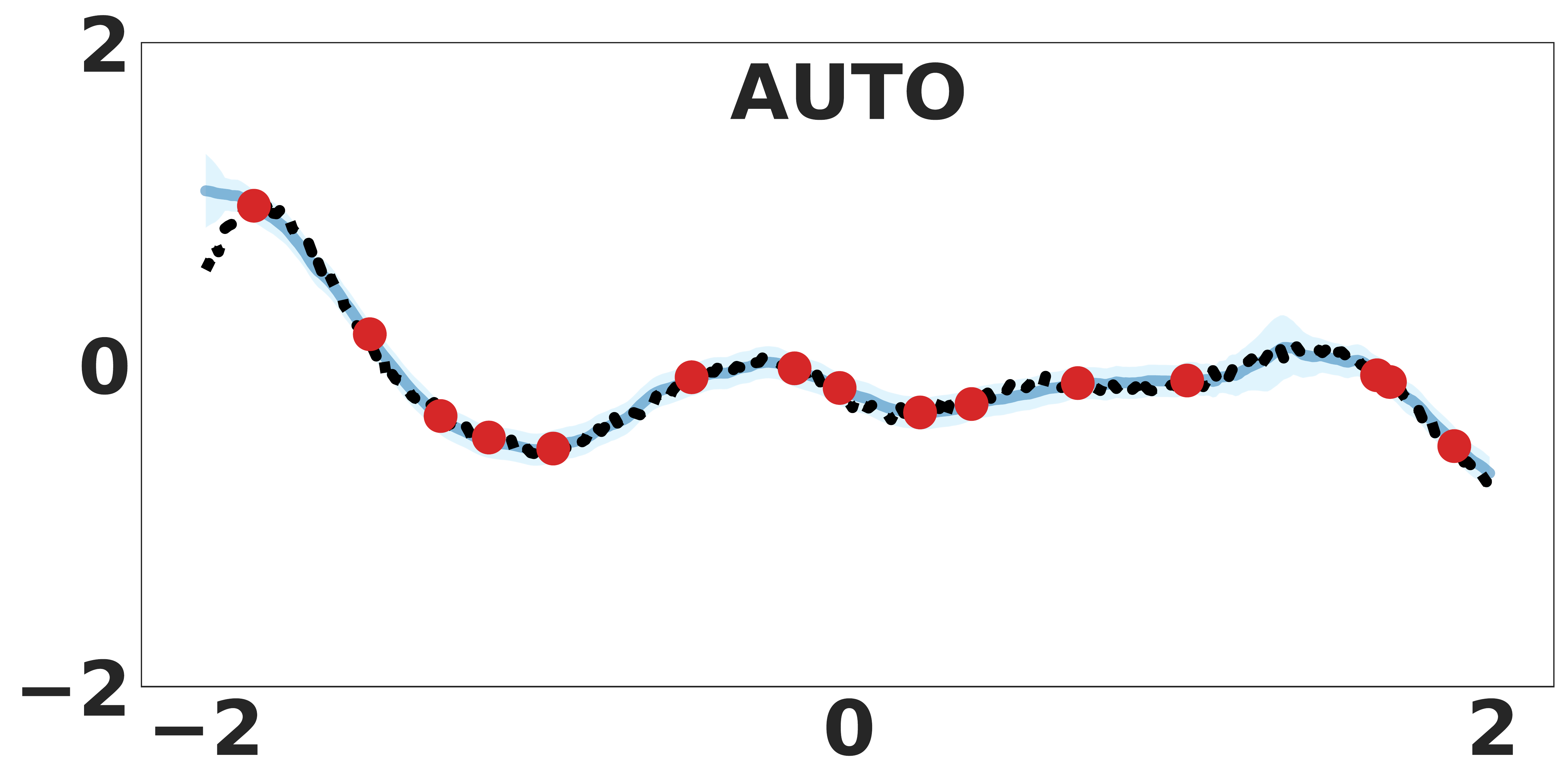}
	\end{subfigure}%
	\begin{subfigure}{0.25\textwidth}
		\centering
		\includegraphics[width=\linewidth]{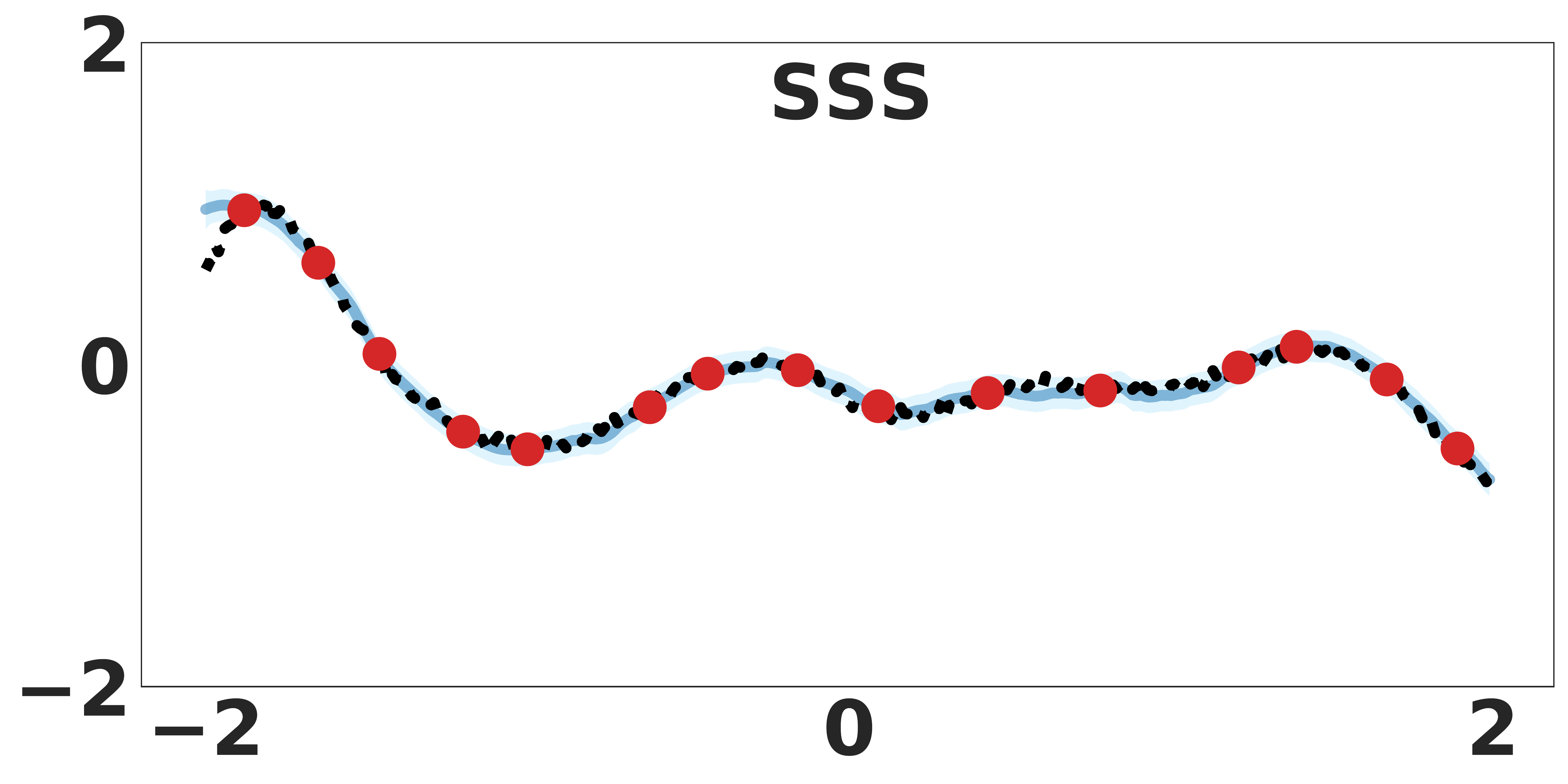}
	\end{subfigure}	
    \begin{subfigure}{0.25\textwidth}
		\centering
		\includegraphics[width=\linewidth]{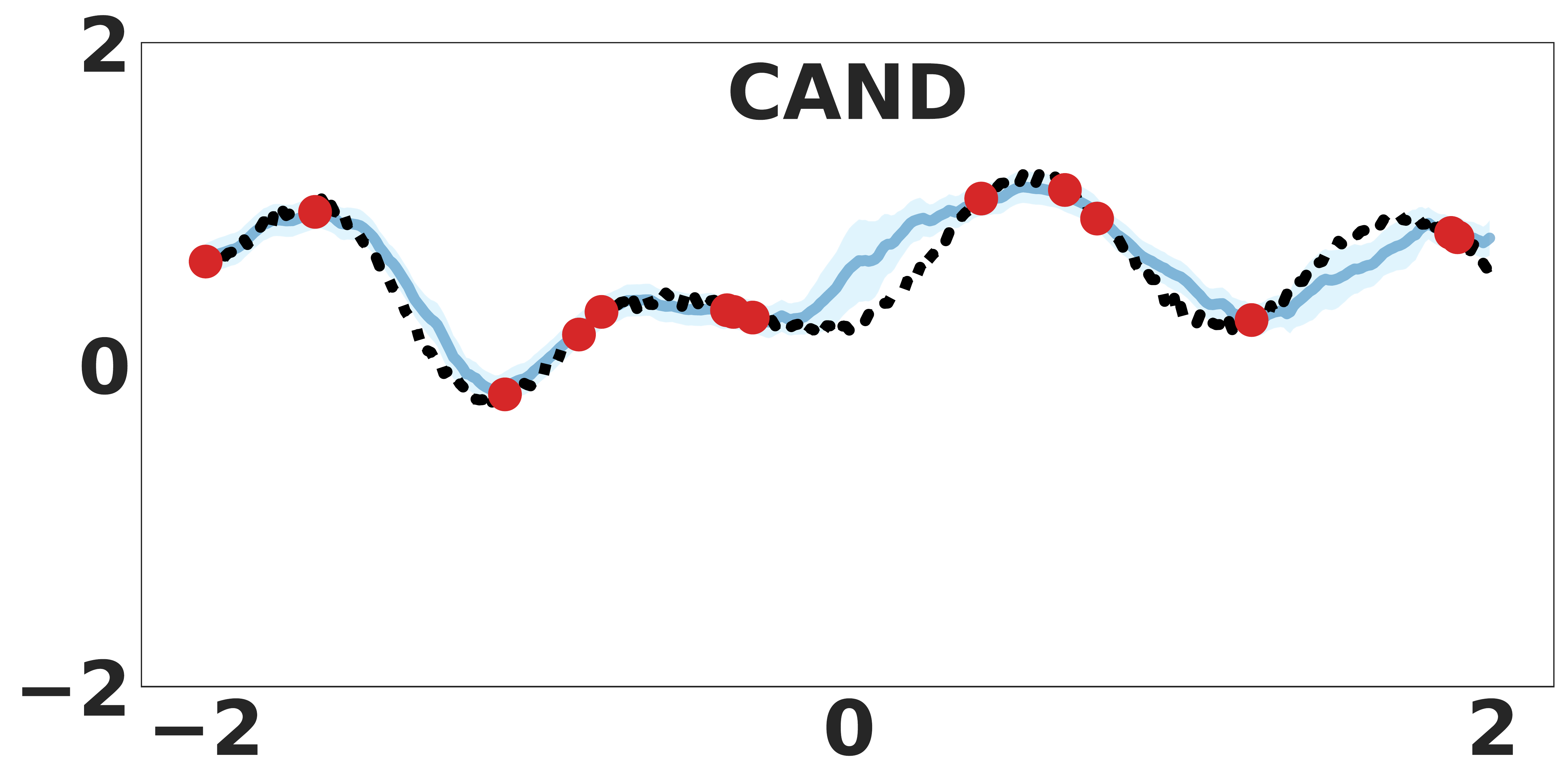}
	\end{subfigure}%
    \begin{subfigure}{0.25\textwidth}
		\centering
		\includegraphics[width=\linewidth]{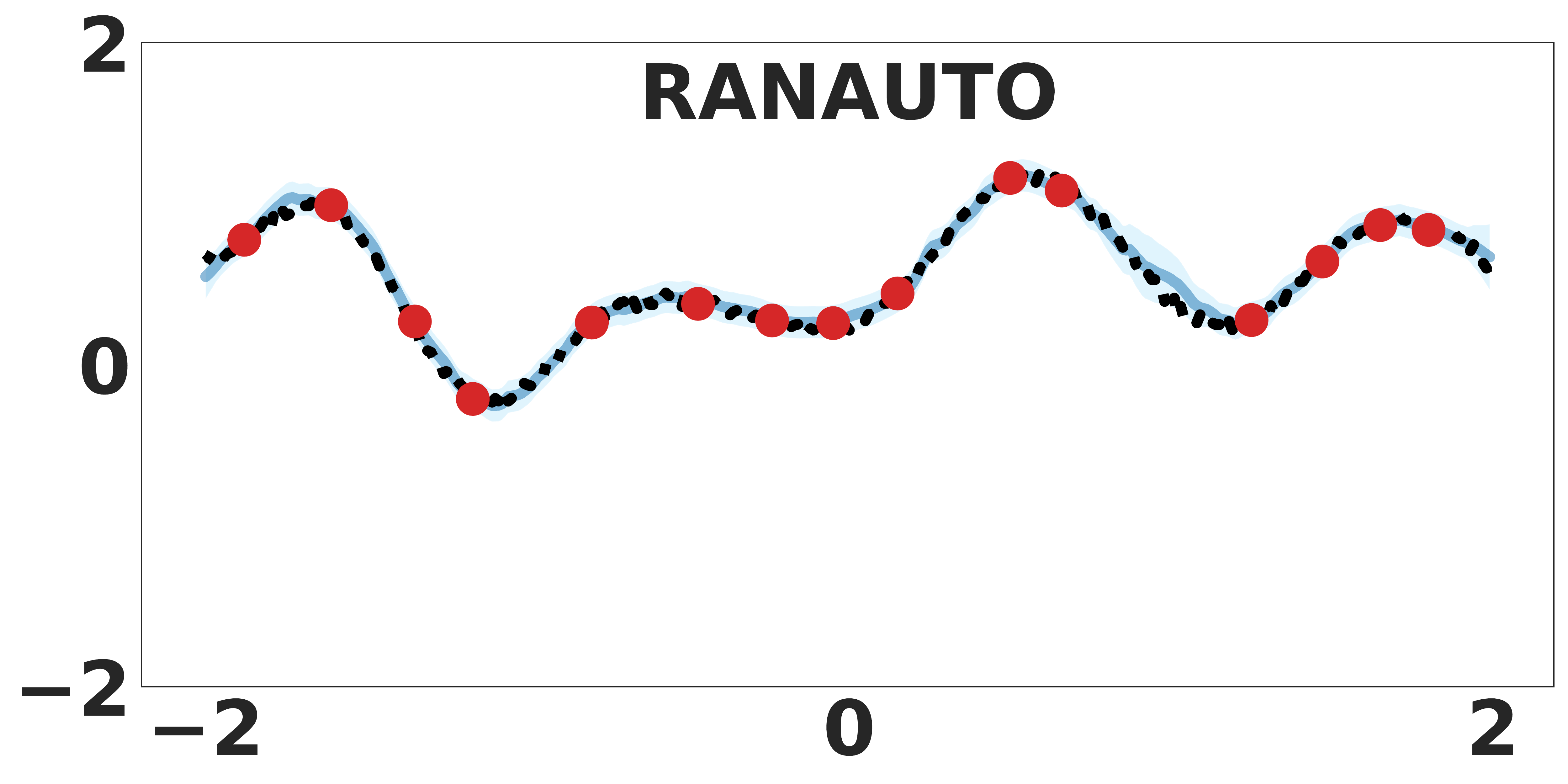}
	\end{subfigure}%
    \begin{subfigure}{0.25\textwidth}
		\centering
		\includegraphics[width=\linewidth]{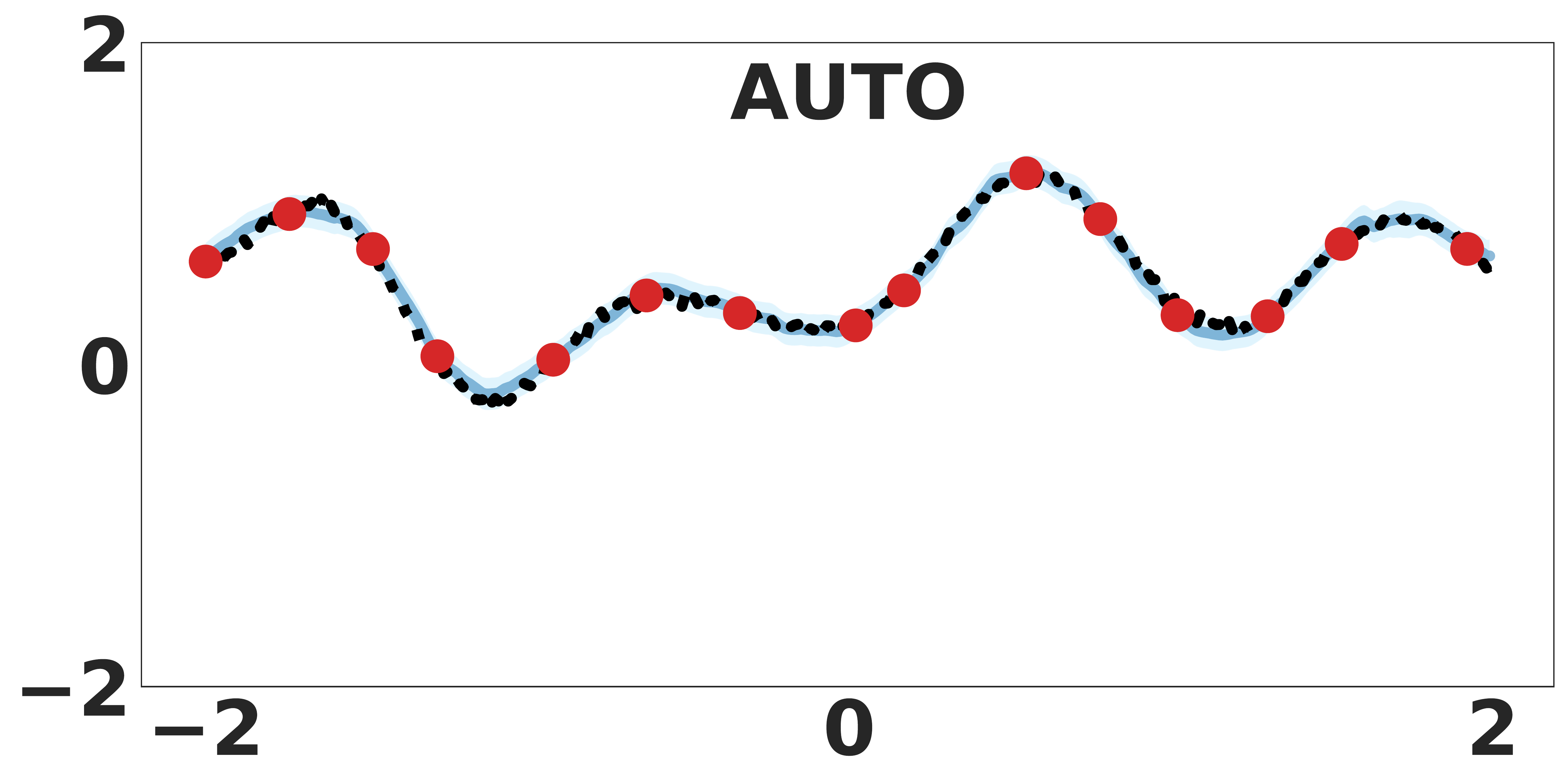}
	\end{subfigure}%
	\begin{subfigure}{0.25\textwidth}
		\centering
		\includegraphics[width=\linewidth]{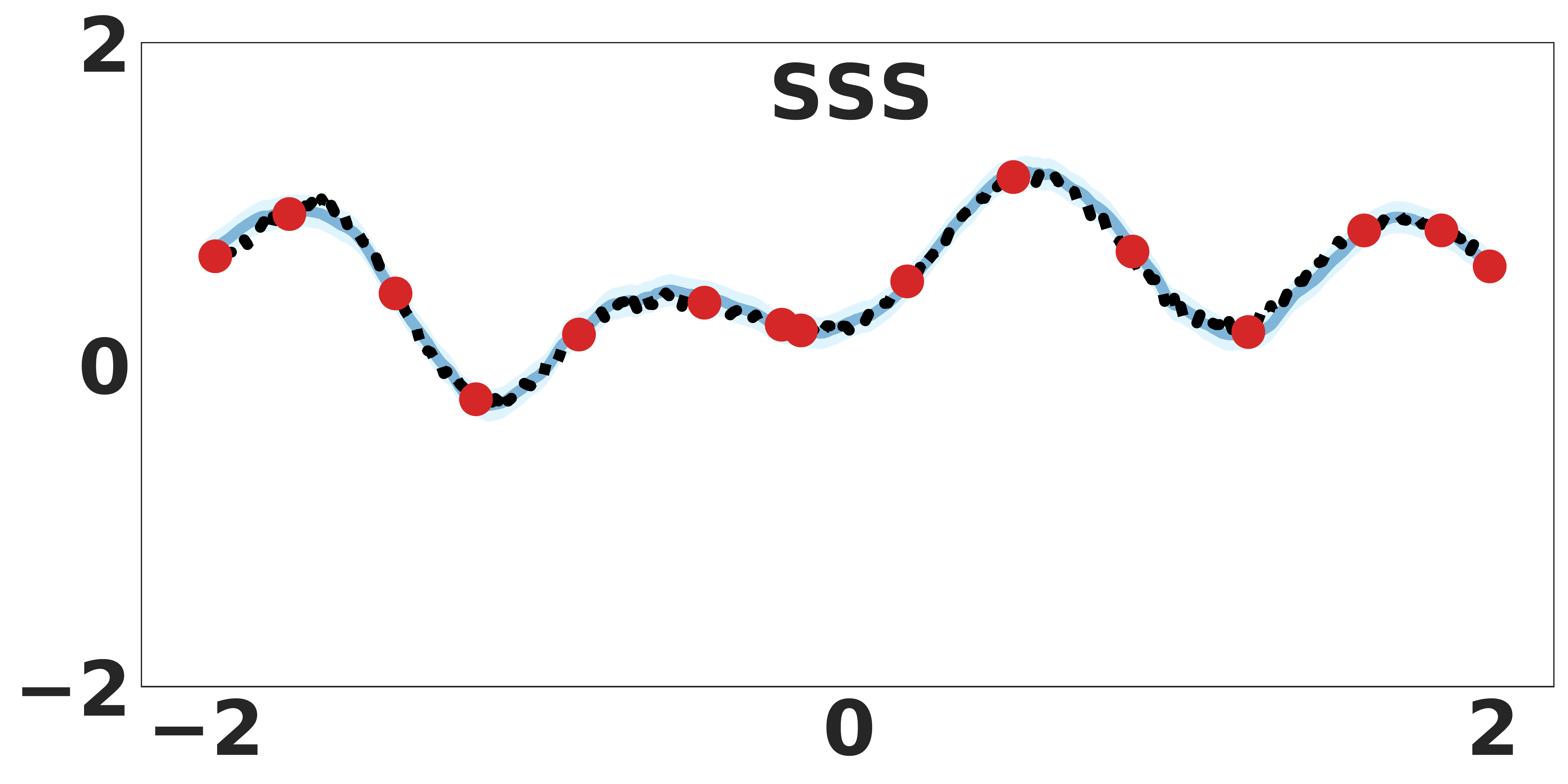}
	\end{subfigure}	
    \begin{subfigure}{0.25\textwidth}
		\centering
		\includegraphics[width=\linewidth]{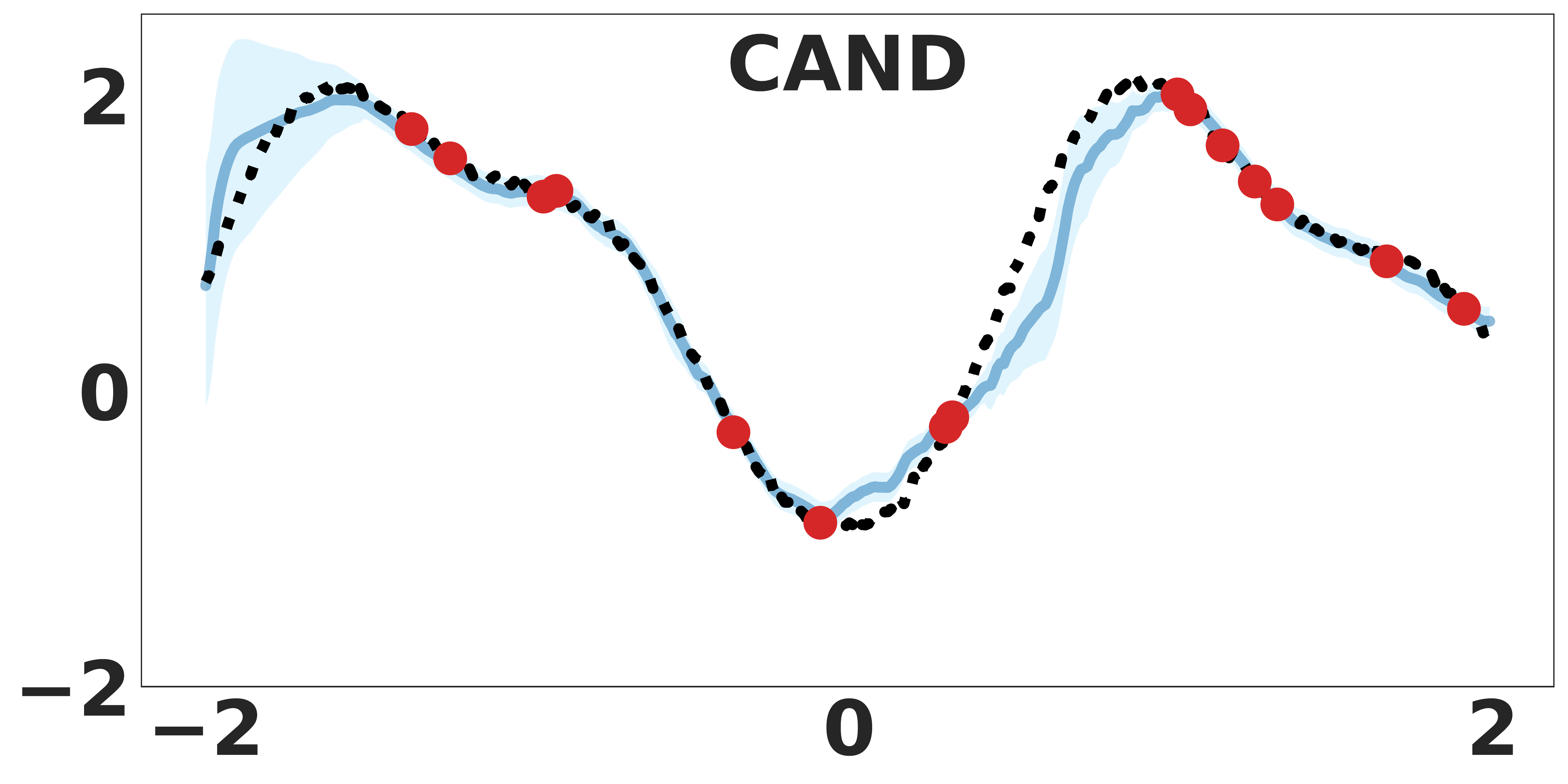}
	\end{subfigure}%
    \begin{subfigure}{0.25\textwidth}
		\centering
		\includegraphics[width=\linewidth]{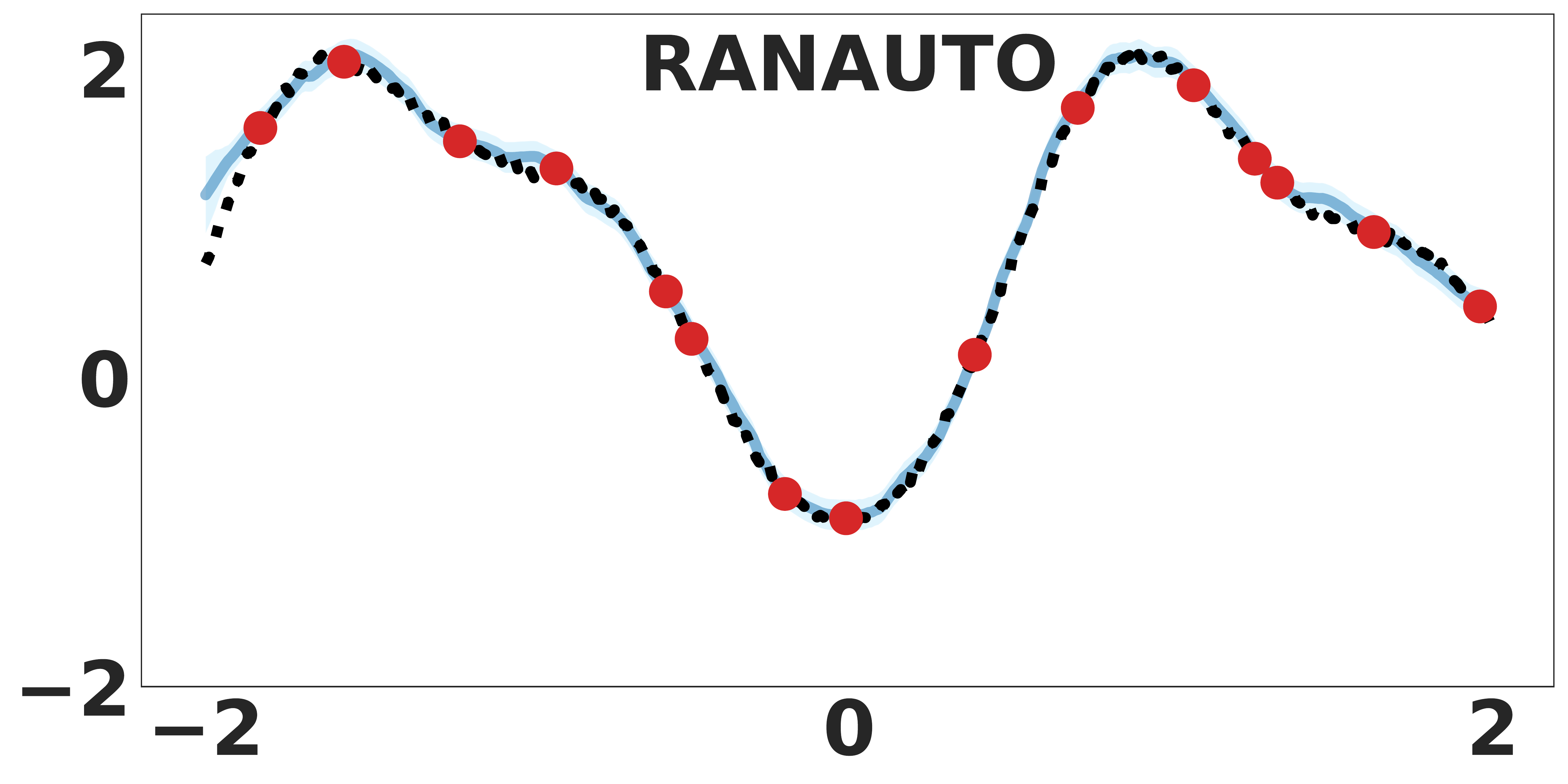}
	\end{subfigure}%
    \begin{subfigure}{0.25\textwidth}
		\centering
		\includegraphics[width=\linewidth]{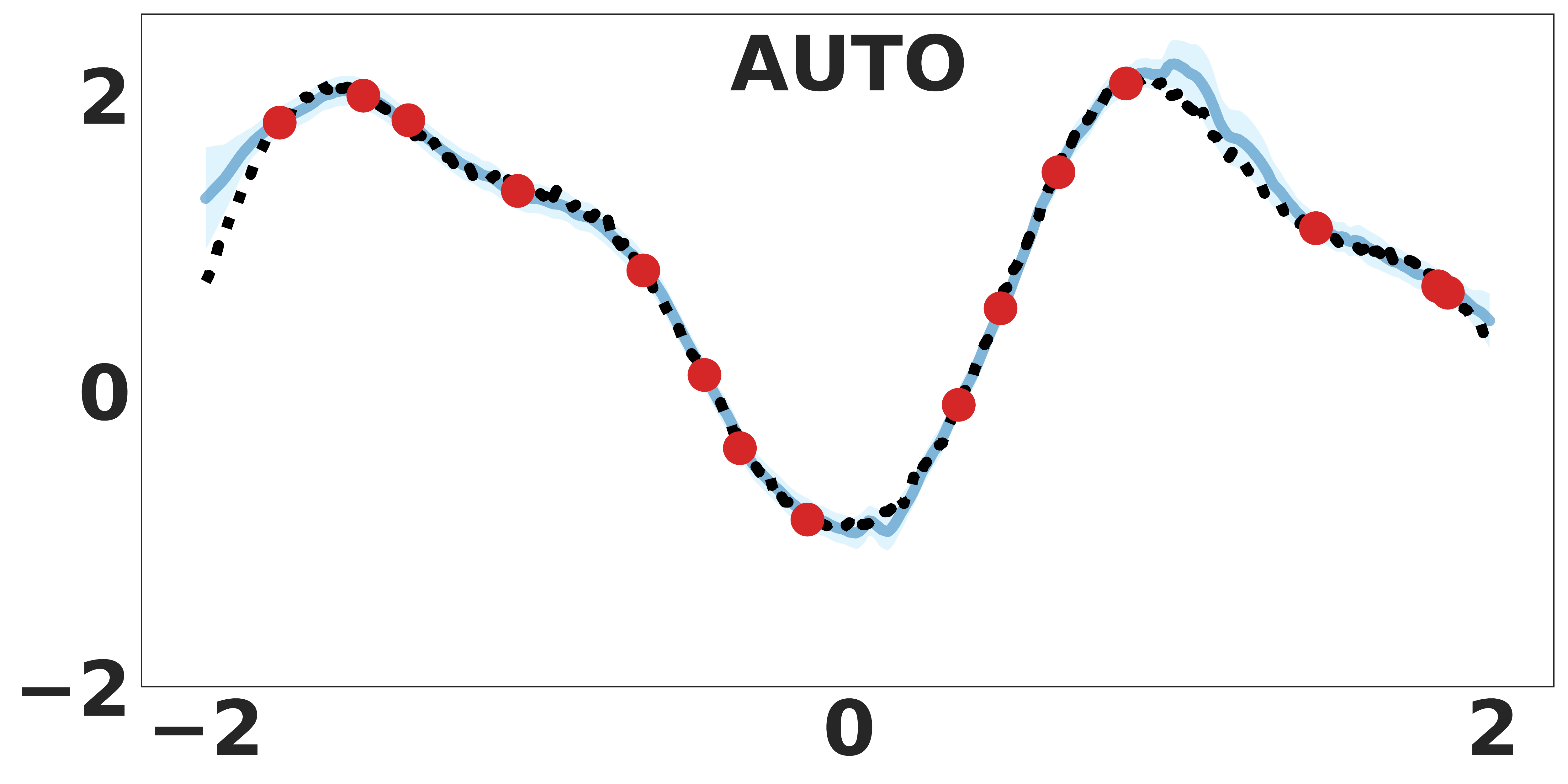}
	\end{subfigure}%
	\begin{subfigure}{0.25\textwidth}
		\centering
		\includegraphics[width=\linewidth]{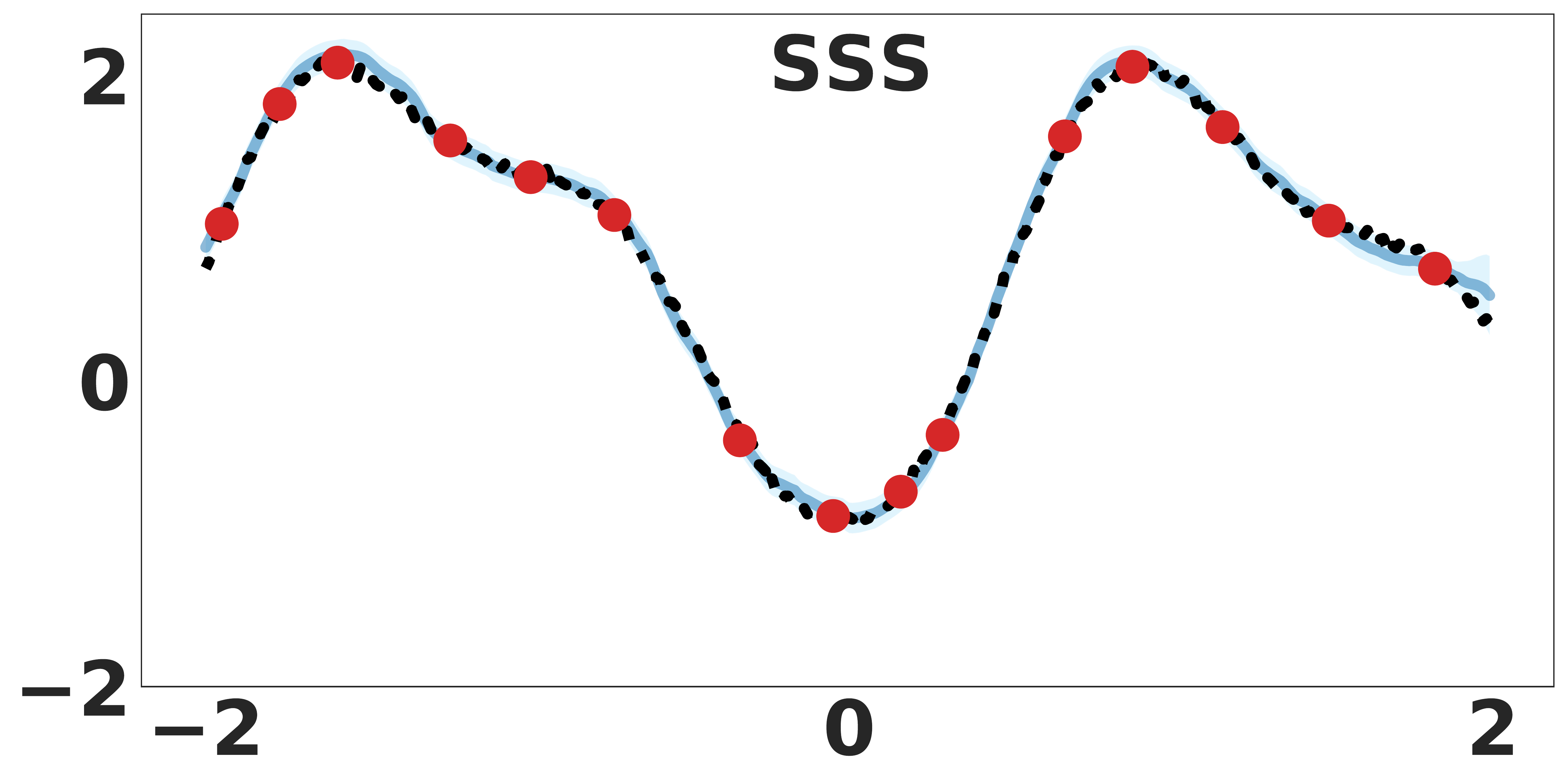}
	\end{subfigure}
    \begin{subfigure}{0.25\textwidth}
		\centering
		\includegraphics[width=\linewidth]{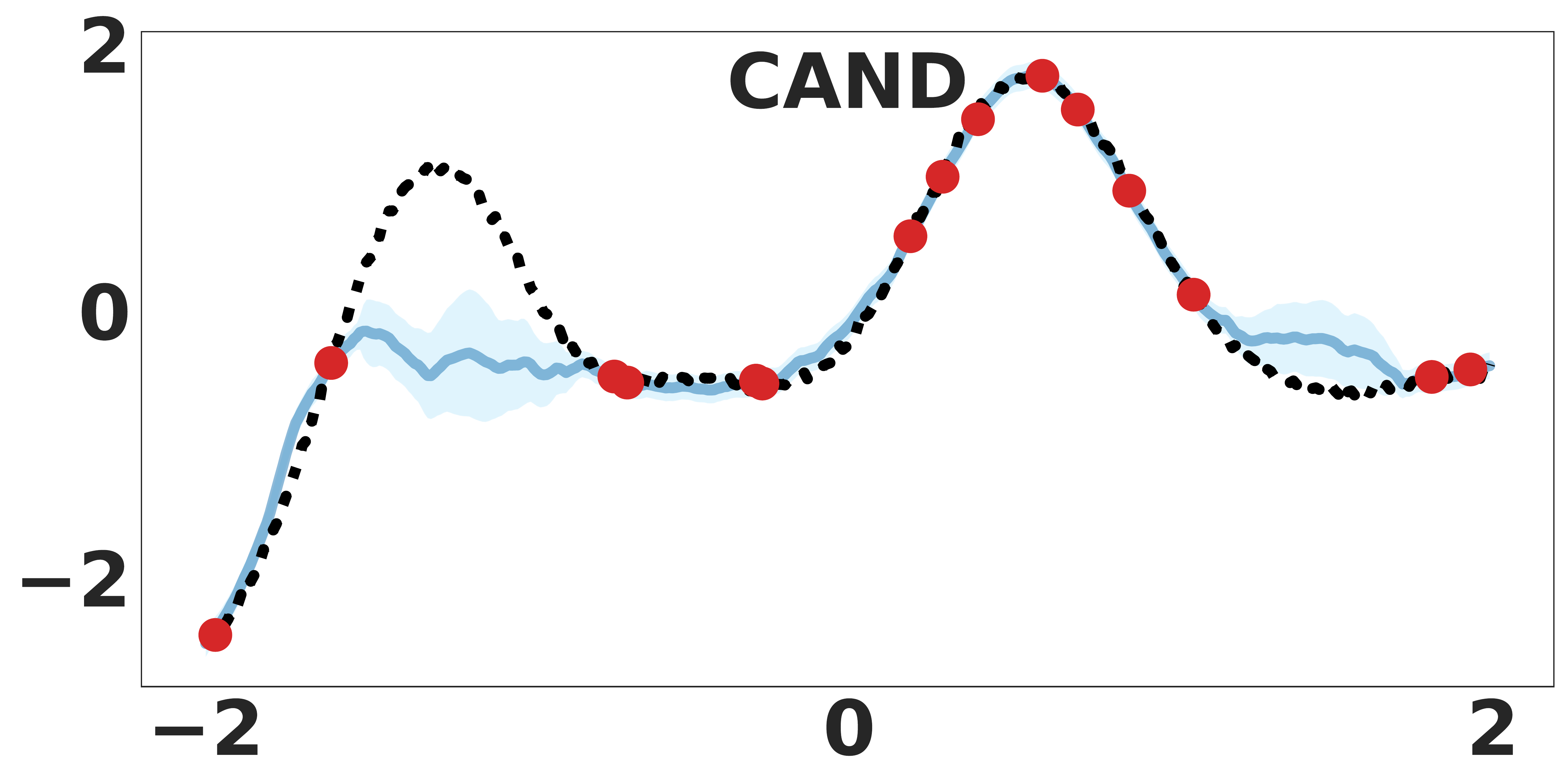}
	\end{subfigure}%
    \begin{subfigure}{0.25\textwidth}
		\centering
		\includegraphics[width=\linewidth]{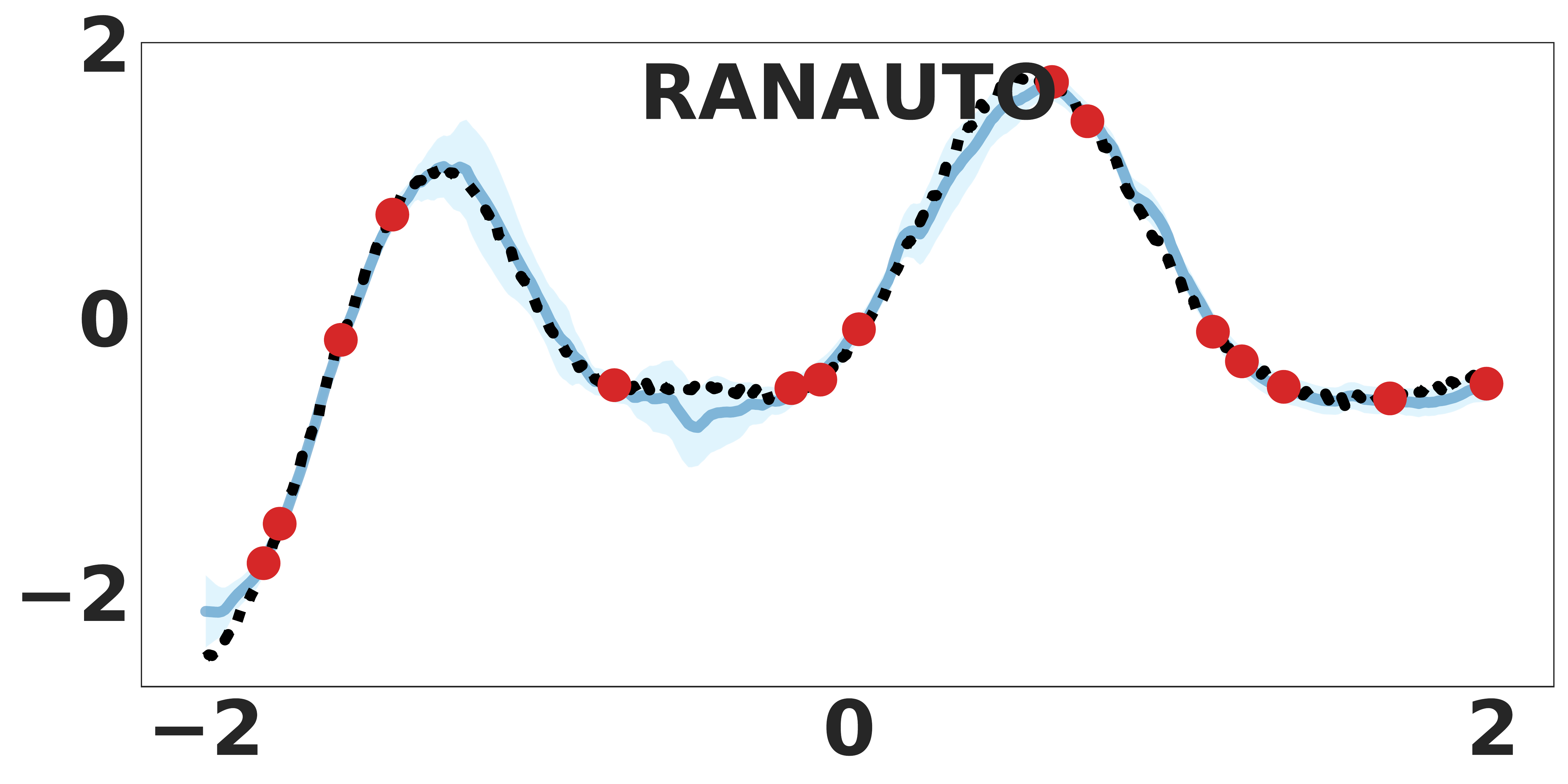}
	\end{subfigure}%
    \begin{subfigure}{0.25\textwidth}
		\centering
		\includegraphics[width=\linewidth]{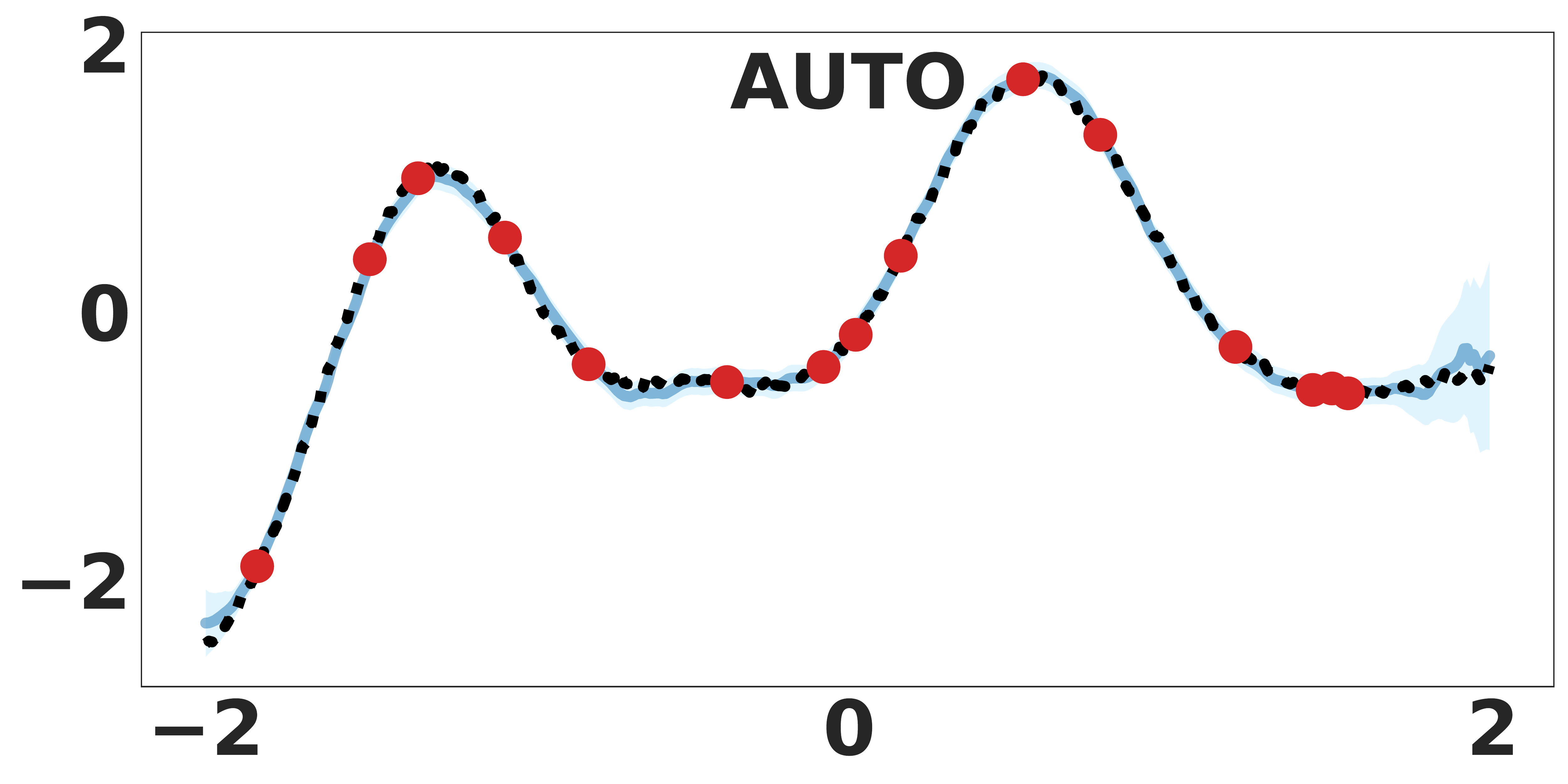}
	\end{subfigure}%
	\begin{subfigure}{0.25\textwidth}
		\centering
		\includegraphics[width=\linewidth]{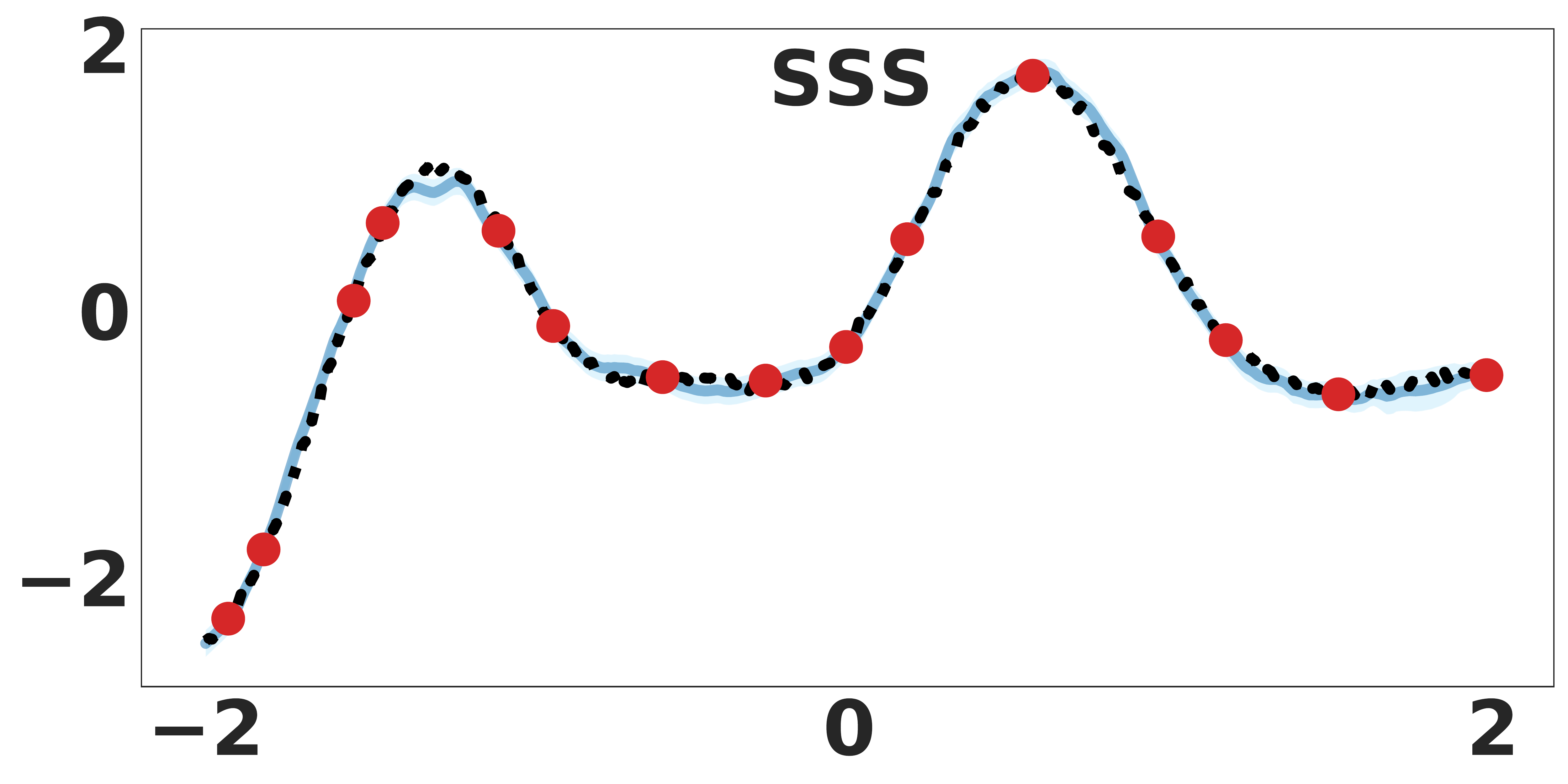}
	\end{subfigure}
	\caption{\small \textbf{Ablation: }Visualization of 1D function reconstruction. In the ablation studies, we compare the first stage
	 (\textbf{CAND}) and the second stage (\textbf{AUTO}) with \textbf{SSS}. Additionally, we replace the Candidate Selection stage (Stage 1) in 
	 SSS with random selection (\textbf{RANAUTO}) and compare the performance of these models. As can be seen from the visualized reconstructed 
	 outputs, the combination of the Candidate Seletion stage with the Autoregressive stage results in the best subset selection for the 
	 reconstruction task.}
	\label{function_reconstruction_ablation}
	\vspace{-0.1in}
\end{figure*}
\paragraph{Cost of Running SSS} We report the cost of running SSS for pixel subsampling, which has the largest set size (38804 pixels).
In this experiment, we select 500 pixels in total with $l=20$, i.e., we select 20 pixels at once in the second stage described in Section~\ref{sec3-4}. We measure the FLOPS and memory requirements for the forward pass of SSS. We find that the computational cost of running SSS is 8.38 GMac (40\% of the FLOPS for the full model
which is 20 GMac) with 217.09k memory requirement (compared to 958.85k for the full model) which shows that SSS is computationally cheap to run.

\section{Proofs}\label{proofs}

\begin{definition}
Let $\mathfrak{S}_n \coloneqq \{g:[n]\rightarrow [n]\mid g\text{ is bijective} \}$ be a set of all permutation on $n$, where $[n]\coloneqq \{1,\ldots,n\}$. We say $f: X^n \rightarrow Y$ is permutation invariant if and only if for any permutation $\pi \in \mathfrak{S}_n$, $f(\pi(\rvx)) = f(\rvx)$ for all $\rvx \in X^n$.
\end{definition}

\begin{definition}
Let $\mathfrak{S}_n \coloneqq \{g:[n]\rightarrow [n]\mid g\text{ is bijective} \}$ be a set of all permutation on $n$, where $[n]\coloneqq \{1,\ldots,n\}$. We say $f: X^n \rightarrow Y^n$ is permutation equivariant if and only if for any permutation $\pi \in \mathfrak{S}_n$, $\pi(f(\rvx)) = f(\pi(\rvx))$ for all $\rvx \in X^n$.
\end{definition}

\begin{proposition}
Let $D=\{d_i\in\mathbb{R}^d\mid i=1,\ldots,n \}$ be a finite set and let $g$ be an affine transformation with non-linearity. Define a function $k(d_1, \ldots, d_n) \coloneqq \frac{1}{n}\sum_{i=1}^n g(d_i)$. Then the set encoding $D_e=k(d_1, \ldots, d_n)$ is permutation invariant.
\end{proposition}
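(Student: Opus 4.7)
The plan is to unpack the definition of permutation invariance and show the claim follows immediately from the commutativity of finite sums, combined with the fact that any $\pi \in \mathfrak{S}_n$ is a bijection on $[n]$. Concretely, I would fix an arbitrary $\pi \in \mathfrak{S}_n$ and an arbitrary tuple $(d_1,\ldots,d_n) \in (\mathbb{R}^d)^n$, then evaluate $k$ on the permuted input and reindex.

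The key computation is the chain
\begin{equation*}
k(\pi(d_1,\ldots,d_n)) \;=\; k(d_{\pi(1)},\ldots,d_{\pi(n)}) \;=\; \frac{1}{n}\sum_{i=1}^{n} g(d_{\pi(i)}) \;=\; \frac{1}{n}\sum_{j=1}^{n} g(d_j) \;=\; k(d_1,\ldots,d_n),
\end{equation*}
where the crucial middle equality uses the substitution $j = \pi(i)$: since $\pi:[n]\to[n]$ is a bijection, the multiset $\{\pi(1),\ldots,\pi(n)\}$ equals $\{1,\ldots,n\}$, so reindexing the sum of the $n$ vectors $g(d_{\pi(i)}) \in \mathbb{R}^d$ by $j=\pi(i)$ leaves the total unchanged (vector addition in $\mathbb{R}^d$ being commutative and associative). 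No properties of $g$ beyond being well-defined on each $d_i \in \mathbb{R}^d$ are actually used — the affine-plus-nonlinearity structure of $g$ is irrelevant because $g$ is applied pointwise to each set element before summation.

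Since $\pi$ and $(d_1,\ldots,d_n)$ were arbitrary, this shows $k(\pi(\rvx)) = k(\rvx)$ for all $\pi \in \mathfrak{S}_n$ and all $\rvx \in (\mathbb{R}^d)^n$, which is precisely permutation invariance of $D_e = k(d_1,\ldots,d_n)$. There is no real obstacle here; the only thing to be careful about is writing out the bijective reindexing cleanly (as opposed to hand-waving "sum is symmetric"), so that the proof explicitly invokes $\pi \in \mathfrak{S}_n$ rather than silently assuming it. The total proof should fit in three or four lines.
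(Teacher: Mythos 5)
Your proof is correct and matches the paper's own argument: both fix an arbitrary $\pi \in \mathfrak{S}_n$ and reindex the sum $\frac{1}{n}\sum_{i=1}^n g(d_{\pi(i)})$ using the bijectivity of $\pi$, with your version merely spelling out the substitution $j = \pi(i)$ slightly more explicitly than the paper does. No differences of substance; the observation that only pointwise application of $g$ and commutativity of vector addition are needed is accurate.
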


\begin{proof}
Let a permutation $\pi \in \mathfrak{S}_n$ be given. 
\begin{align}
\begin{split}
    k(d_1,\ldots, d_n) &= \frac{1}{n} \sum_{i=1}^n g(d_i) \\
    &= \frac{1}{n} \sum_{i=1}^n g(d_{\pi(i)}) \\
    &= k(d_{\pi(1)}, \ldots, d_{\pi(n)}) 
\end{split}
\end{align}
\end{proof}

\begin{proposition}
Let $f:X^n \rightarrow Y^n$ and $g:Y^n\rightarrow Z^n$ be permutation equivarant functions. Then the composition of two function $g\circ f$ is permutation equivariant function.
\label{prop-comp-appendix}
\end{proposition}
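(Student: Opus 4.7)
The plan is a direct chase through the definitions, so the proof will be short and essentially calculational. I do not anticipate any real obstacle here; the only subtlety is making sure the two equivariance relations are applied in the correct order.

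First, I would fix an arbitrary permutation $\pi \in \mathfrak{S}_n$ and an arbitrary $\mathbf{x} \in X^n$. The goal is to show that $(g \circ f)(\pi(\mathbf{x})) = \pi((g \circ f)(\mathbf{x}))$. I would start on the left-hand side, unfold the composition as $g(f(\pi(\mathbf{x})))$, then apply the equivariance of $f$ to rewrite $f(\pi(\mathbf{x}))$ as $\pi(f(\mathbf{x}))$. At this point the expression becomes $g(\pi(f(\mathbf{x})))$, and the equivariance of $g$ (applied with the element $f(\mathbf{x}) \in Y^n$) lets me move $\pi$ outward, yielding $\pi(g(f(\mathbf{x}))) = \pi((g\circ f)(\mathbf{x}))$.

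Since $\pi$ and $\mathbf{x}$ were arbitrary, this establishes equivariance of $g \circ f$. The only thing to be careful about is that the codomain of $f$ matches the domain of $g$ as indexed tuples in $Y^n$, so that the same permutation $\pi$ acts compatibly on both sides; this is guaranteed by the hypothesis that both maps have the same arity $n$. No limiting, measure-theoretic, or topological issues arise, and the argument is purely syntactic from the two equivariance identities.
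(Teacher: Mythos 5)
Your proof is correct and is essentially identical to the paper's: both fix an arbitrary $\pi$ and $\rvx$, rewrite $(g\circ f)(\pi(\rvx))=g(f(\pi(\rvx)))$, and apply the equivariance of $f$ and then of $g$ to pull $\pi$ outward. No gaps or differences worth noting.
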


\begin{proof}
Suppose that $f$ and $g$ are permutation equivariant.  Let $\pi \in \mathfrak{S_n}$ be permutation. We want to show $\pi((g\circ f)(\rvx)) = (g\circ f)(\pi(\rvx))$ for all $\rvx \in X^n$. Let $\rvx \in X^n$ be given.
\begin{align}
\begin{split}
    (g\circ f)(\pi(\rvx)) &= g(f(\pi(\rvx))) \\
    &= g(\pi(f(\rvx))) \\
    &= \pi(g(f(\rvx))) \\
    &= \pi((g\circ f)(\rvx))
\end{split}
\end{align}
The second and third equality holds since $f$ and $g$ are permutation equivariant.
\end{proof}

\begin{proposition}
Let $\zeta: X^n \rightarrow Y^n$ be a function, mapping $(d_1, \ldots, d_n)\mapsto(\rho(\overline{d}_1), \ldots, \rho(\overline{d}_n))$ for candidate selection. Then $\zeta$ is permutation equivariant. 
\label{lemma-1}
\end{proposition}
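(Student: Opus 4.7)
The plan is to verify the equivariance definition directly, leveraging Proposition~\ref{prop:invariance} (the permutation invariance of $D_e$) as the only nontrivial input. Fix an arbitrary permutation $\pi \in \mathfrak{S}_n$ and let $\rvx = (d_1, \ldots, d_n) \in X^n$. I would first write out $\zeta(\pi(\rvx))$ by substituting the permuted tuple $(d_{\pi(1)}, \ldots, d_{\pi(n)})$ into the definition of $\zeta$, and observe that the $i$-th coordinate of $\zeta(\pi(\rvx))$ is $\rho(\overline{d_{\pi(i)}}')$, where the prime indicates that the global summary used in the concatenation is the one built from the permuted inputs.

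Second, I would isolate what changes and what does not under $\pi$. The ``local'' ingredient of $\overline{d_i} = [d_i, D_e]$ is $d_i$ itself, which is simply shuffled by $\pi$, while the ``global'' ingredient is $D_e = \frac{1}{n}\sum_i g(d_i)$. By Proposition~\ref{prop:invariance}, $D_e$ is permutation invariant, so the summary computed from $(d_{\pi(1)}, \ldots, d_{\pi(n)})$ is exactly the same vector $D_e$ as before. Therefore $\overline{d_{\pi(i)}}' = [d_{\pi(i)}, D_e] = \overline{d_{\pi(i)}}$, and applying $\rho$ gives $\rho(\overline{d_{\pi(i)}})$ in slot $i$.

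Third, I would match this against $\pi(\zeta(\rvx))$. By definition $\zeta(\rvx) = (\rho(\overline{d_1}), \ldots, \rho(\overline{d_n}))$, and $\pi$ acts on $Y^n$ by reindexing, so the $i$-th coordinate of $\pi(\zeta(\rvx))$ is exactly $\rho(\overline{d_{\pi(i)}})$. Comparing coordinate-wise yields $\zeta(\pi(\rvx)) = \pi(\zeta(\rvx))$, which is the desired equivariance. Since $\pi$ and $\rvx$ were arbitrary, the proposition follows.

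There is no real obstacle here; the only subtlety worth flagging explicitly in the write-up is that $\overline{d_i}$ is \emph{not} itself a function of $d_i$ alone, so one might initially worry that the global term $D_e$ couples the coordinates in a way that breaks equivariance. The resolution is precisely the invariance of $D_e$: because the global term is the same vector before and after permutation, the map $d_i \mapsto \rho([d_i, D_e])$ acts pointwise with a fixed second argument, which is manifestly equivariant. This is also why the argument would fail if $D_e$ were replaced by a permutation-equivariant (rather than invariant) summary, so flagging this dependency is the main thing to get right.
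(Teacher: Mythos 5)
Your proof is correct and follows essentially the same route as the paper's: a direct coordinate-wise verification that hinges on the permutation invariance of the set summary $D_e$ (so that $\overline{d_{\pi(i)}}$ is unchanged by permutation), with $\rho$ acting pointwise with $D_e$ as a fixed second argument. The paper merely adds the cosmetic step of unfolding $\rho = \sigma \circ h$ and invoking its composition-of-equivariant-maps lemma, while your write-up makes the key invariance dependency more explicit; there is no substantive difference.
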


\begin{proof}
Let a permutation $\pi \in \mathfrak{S}_n$ be given and let $\rvx=(d_1, \ldots, d_n)$ be given. 
\begin{align}
\begin{split}
    \zeta(\pi(\rvx)) &= \left(\rho(\overline{d}_{\pi(1)}), \ldots, \rho(\overline{d}_{\pi(n)})\right) \\
    &= \left(\sigma(h(\overline{d}_{\pi(1)})), \ldots, \sigma(h(\overline{d}_{\pi(n)}))  \right) \\ 
    &= \left(\sigma(h(d_{\pi(1)};D_e)), \ldots, \sigma(h(d_{\pi(n)};D_e))  \right) \\ 
    &= \pi\left(\sigma(h(d_{1};D_e)), \ldots, \sigma(h(d_{n};D_e)) \right) \\
    &= \pi\left(\zeta(\rvx)\right)
\end{split}
\end{align}
where $;$ denotes concatenation of two vectors  and $D_e = \frac{1}{n}\sum_{i=1}^n g(d_i)$. $g(\cdot)$ and $h(\cdot)$ affine transformation followed by non-linear activation. Since element-wise operations $h(\cdot), \sigma(\cdot)$ are permutation equivariant and $(d_1,\ldots, d_n)\mapsto(d_1;D_e, \ldots, d_n;D_e)$ is permutation equivariant, composition of those functions is also permutation equivariant by Proposition~\ref{prop-comp-appendix}. Therefore, fourth equality holds, i.e., $\zeta$ is permutation equivariant.
\end{proof}

\begin{proposition}
The probability $p_\theta(Z|D)$ induced by candidate selection function is exchangeable.
\label{prop-can}
\end{proposition}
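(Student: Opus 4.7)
The plan is to reduce exchangeability of $p_\theta(Z|D)$ to the permutation equivariance of the candidate-selection map already established in Proposition~\ref{lemma-1}, together with the fact that a product of scalar factors is invariant under reordering of its factors.

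First I would expand the joint mask probability in its conditionally independent Bernoulli form
\begin{equation*}
    p_\theta(Z \mid D) \;=\; \prod_{i=1}^{n} p_\theta(z_i \mid d_i, D) \;=\; \prod_{i=1}^{n} \mathrm{Ber}\!\left(z_i;\, \rho(\overline{d_i})\right),
\end{equation*}
noting that each logit $\rho(\overline{d_i}) = \sigma(h(\overline{d_i}))$ depends on $d_i$ \emph{and} on the global set encoding $D_e$ embedded in $\overline{d_i}$. The role of the latter is crucial: by Proposition~\ref{prop:invariance}, $D_e$ is permutation invariant, so replacing $D$ by $\pi(D)$ leaves $D_e$ and hence each $\overline{d_i}$'s second coordinate unchanged, meaning only the index $i$ on $\overline{d_i}$ moves under a permutation.

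Next, fix an arbitrary $\pi \in \mathfrak{S}_n$ and consider the jointly permuted pair $(\pi(Z), \pi(D))$. By Proposition~\ref{lemma-1}, the map $(d_1,\dots,d_n)\mapsto(\rho(\overline{d_1}),\dots,\rho(\overline{d_n}))$ is permutation equivariant, so the $i$-th probability parameter under $\pi(D)$ is precisely $\rho(\overline{d_{\pi(i)}})$. Therefore
\begin{equation*}
    p_\theta(\pi(Z) \mid \pi(D)) \;=\; \prod_{i=1}^{n} \mathrm{Ber}\!\left(z_{\pi(i)};\, \rho(\overline{d_{\pi(i)}})\right).
\end{equation*}
Each factor on the right is exactly one of the original factors $\mathrm{Ber}(z_j;\rho(\overline{d_j}))$ with $j=\pi(i)$, so the product on the right is a reordering of the factors of $p_\theta(Z\mid D)$. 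Since $\pi$ is a bijection on $[n]$ and scalar multiplication is commutative, the reordered product equals the original, giving $p_\theta(\pi(Z)\mid \pi(D)) = p_\theta(Z\mid D)$, which is exactly the exchangeability condition from Definition~\ref{def-exchange}.

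There is no real obstacle here; the only subtlety worth emphasizing is why $\rho(\overline{d_i})$ depends on $d_i$ only through its own identity and not through its position in the input sequence. This is what Propositions~\ref{prop:invariance} and~\ref{lemma-1} together guarantee, so once those are invoked the argument collapses to the commutativity of a finite product. I would therefore keep the proof short: invoke invariance of $D_e$ to justify that $\overline{d_i}$ is unchanged by relabelings, invoke equivariance of $\zeta$ to move the permutation past the Bernoulli logits, and conclude by commutativity of multiplication.
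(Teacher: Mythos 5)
Your proof is correct and follows essentially the same route as the paper's: factorize $p_\theta(Z\mid D)$ into its conditionally independent Bernoulli factors, use the permutation invariance of $D_e$ (equivalently, the element-wise equivariance of the candidate-selection map) so that permuting the input only relabels the factors, and conclude by reindexing the finite product. The only cosmetic difference is that you state the conclusion as $p_\theta(\pi(Z)\mid\pi(D)) = p_\theta(Z\mid D)$ while the paper keeps the conditioning fixed at $D$ and writes $p_\theta(Z\mid D)=p_\theta(\pi(Z)\mid D)$; since $D$ is a set and $D_e$ is invariant, the two formulations coincide, as you yourself point out.
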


\begin{proof}
Let a permutation $\pi \in \mathfrak{S}_n$ be given.
\begin{align}
    p_\theta(Z|D) &= \prod_{i=1}^n p_\theta(z_i|d_i, D) \\
    &= \prod_{i=1}^n \rho(d_i;D_e) \\
    &= \prod_{i=1}^n \rho(d_{\pi(i)};D_e) \label{eq:15} \\
    &= \prod_{i=1}^n p_\theta(z_{\pi(i)}|d_{\pi(i)}, D) \\
    &= p_\theta(\pi(Z)|D)
\end{align}
where $D_e = \frac{1}{n} \sum_{i=1}^n g(d_i)$ and ; denotes concatenation of two vectors. Equality in~\ref{eq:15} holds since $D_e$ is permutation invariant and $\rho$ is element-wise operation. 
\end{proof}

\begin{proposition}
Let $f$ be a stack of multi-head attention blocks from Set Transformer~\citep{lee2018set} and let $\varphi$ be affine transformation. For all time step $t$ in autoregressive selection, the functions $f, \varphi,$ and $\sigma \circ \varphi \circ f$ are permutation equivariant, where $\sigma$ is sigmoid function.
\end{proposition}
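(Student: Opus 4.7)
The plan is to reduce the claim to two ingredients: (i) each multi-head attention block $\mathrm{MAB}(X, Y)$ from Set Transformer is permutation equivariant in its first (query) argument $X$ (and only \emph{invariant}, not equivariant, in $Y$), and (ii) compositions of permutation equivariant maps are equivariant, which is already Proposition~\ref{prop-comp-appendix}. With these in hand, $f$ (a stack of MABs that takes $D_c^{(t)}$ as query and $D_s^{(t-1)}$ as key/value) is equivariant in $D_c^{(t)}$, the row-wise affine map $\varphi$ and the element-wise sigmoid $\sigma$ are trivially equivariant, and a final invocation of the composition lemma yields the claim for $\sigma \circ \varphi \circ f$.

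First, I would unpack a single MAB. Scaled dot-product attention is $\mathrm{Att}(Q,K,V) = \mathrm{softmax}(QK^\top/\sqrt{d})V$. Letting $P_\pi$ be the permutation matrix associated with $\pi \in \mathfrak{S}_{m_t}$, we have $(P_\pi Q) K^\top = P_\pi (QK^\top)$ because $P_\pi$ permutes rows, and row-wise softmax then commutes with $P_\pi$, giving $\mathrm{Att}(P_\pi Q, K, V) = P_\pi\,\mathrm{Att}(Q, K, V)$. Crucially, $K$ and $V$ are not permuted here, which is precisely why equivariance holds only in the query argument. Concatenation across heads, the output projection, and the $\mathrm{LayerNorm}$/residual/feedforward wrapper used in $\mathrm{MAB}$ all act independently on each row, so the row-permutation property propagates through a full MAB and, by iterating Proposition~\ref{prop-comp-appendix}, through a stack of MABs. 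This yields equivariance of $f$ in $D_c^{(t)}$.

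Next, $\varphi$ is an affine map applied independently to each token, and $\sigma$ is the coordinate-wise sigmoid, so both commute with any row permutation and are permutation equivariant; one more application of Proposition~\ref{prop-comp-appendix} then gives that $\sigma \circ \varphi \circ f$ is permutation equivariant in $D_c^{(t)}$ for each time step $t$, which is exactly the statement. The main obstacle is bookkeeping rather than mathematics: one must carefully track that attention is equivariant in the query set $D_c^{(t)}$ but only invariant in the conditioning set $D_s^{(t-1)}$, and verify that every sublayer of $\mathrm{MAB}$ (multi-head projection, softmax, residual, $\mathrm{LayerNorm}$, feedforward) acts row-wise. Once these routine checks are in place, the composition lemma finishes the argument.
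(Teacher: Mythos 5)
Your proof is correct and follows essentially the same route as the paper's: equivariance of each MAB, equivariance of the row-wise affine map $\varphi$ and element-wise $\sigma$, and repeated use of the composition lemma (Proposition~\ref{prop-comp-appendix}). The only difference is that you explicitly verify the query-argument equivariance of a single attention block (which the paper simply asserts, deferring to the Set Transformer), and your careful remark that equivariance holds in $D_c^{(t)}$ while the conditioning set $D_s^{(t-1)}$ is left fixed is a welcome clarification rather than a deviation.
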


\begin{proof}
Since each multi-head attention block in $f$ is permutation equivariant, a stack of the blocks is also permutation equivariant by Proposition~\ref{prop-comp-appendix}. Since we apply $\varphi$ independently to each element in a set, $\varphi$ is permutation equivariant. Similarly, $\sigma$ is permutation equivariant since it is an element-wise operation. As a result, $\sigma \circ \varphi \circ f$ is permutation equivariant again by Proposition~\ref{prop-comp-appendix}. 
\end{proof}

\begin{figure*}[ht]
\centering
	\begin{subfigure}{.2\textwidth}
		\centering
		\includegraphics[width=0.6\linewidth]{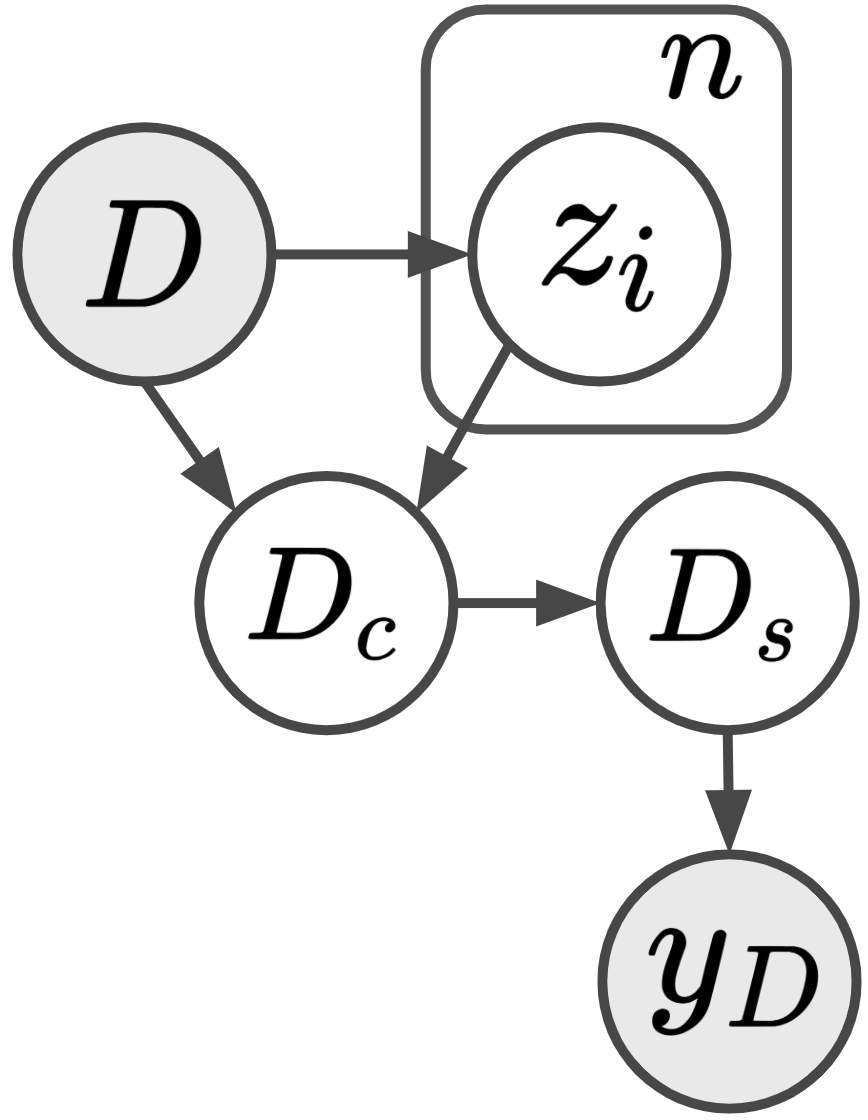}
		\captionsetup{justification=centering,margin=0.5cm}
		\caption{\small}
		\label{gm_pred}
	\end{subfigure}%
	\begin{subfigure}{.2\textwidth}
		\centering
		\includegraphics[width=0.8\linewidth]{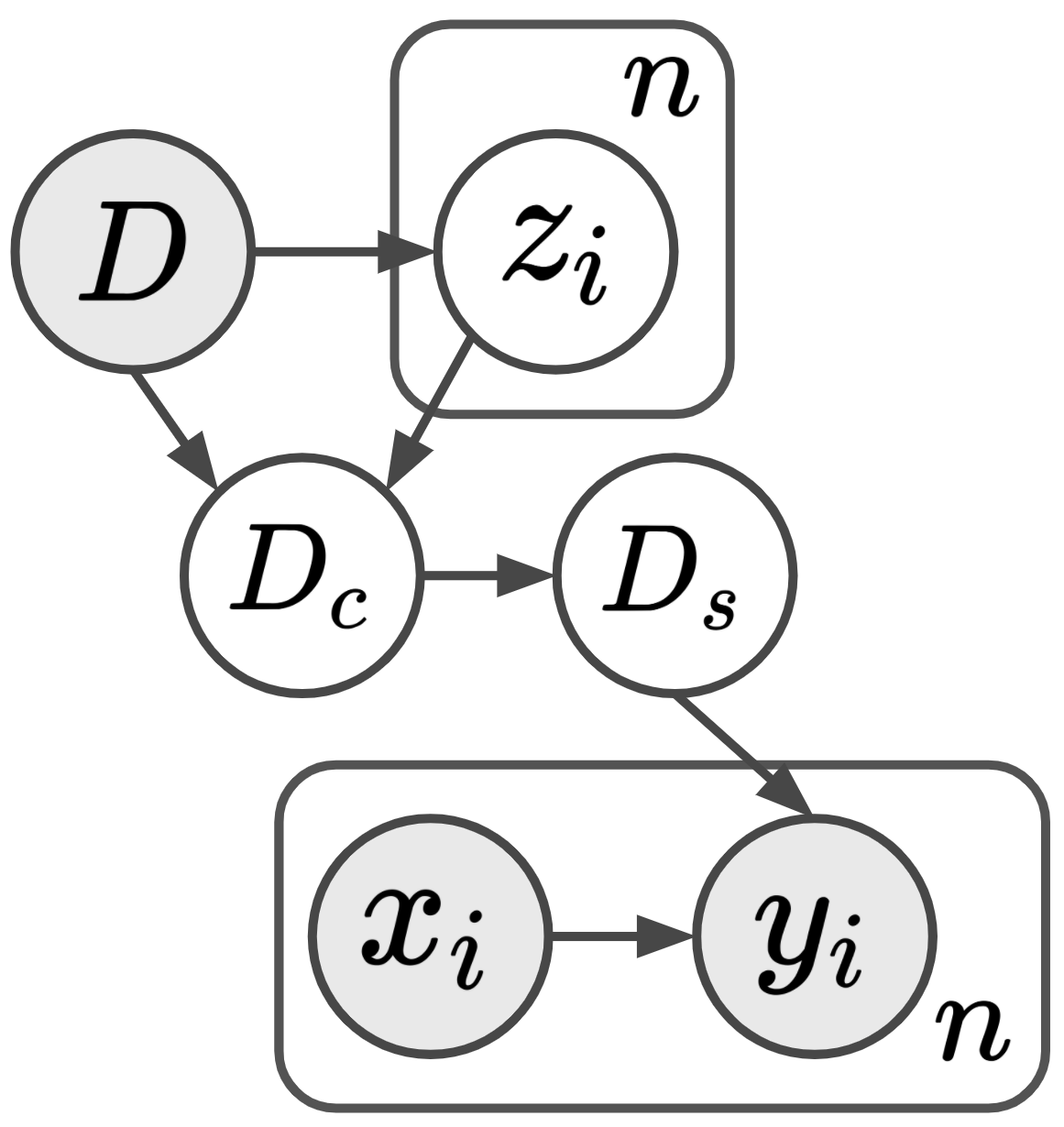}
		\captionsetup{justification=centering,margin=0.5cm}
		\caption{\small}
		\label{gm_rec}
	\end{subfigure}%
	\begin{subfigure}{.3\textwidth}
		\centering
		\includegraphics[width=0.8\linewidth]{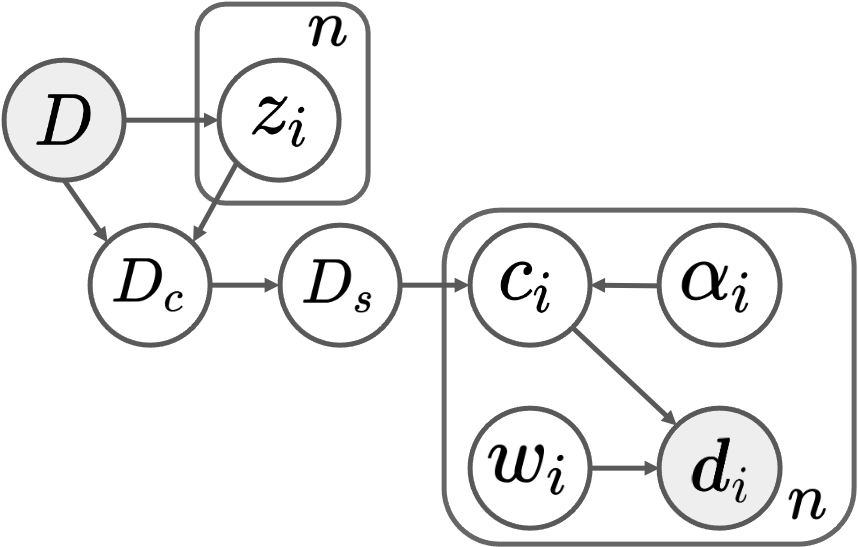}
		\captionsetup{justification=centering,margin=0.5cm}
		\caption{\small}
		\label{celeba-distillation}
	\end{subfigure}
	\begin{subfigure}{.2\textwidth}
		\centering
		\includegraphics[width=0.8\linewidth]{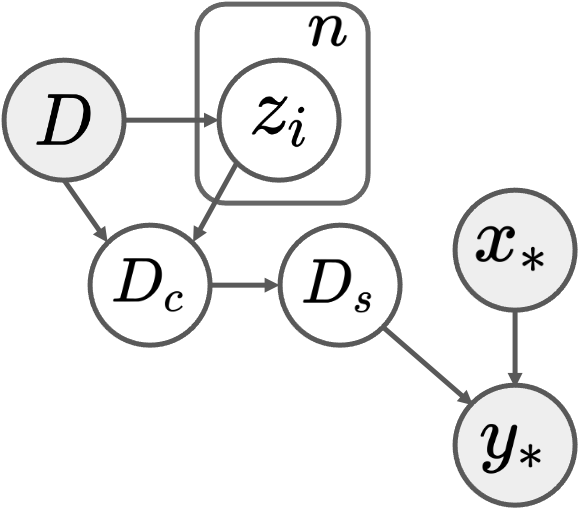}
		\captionsetup{justification=centering,margin=0.5cm}
		\caption{\small}
		\label{mini-distillation}
	\end{subfigure}
	\caption{\small \textbf{Graphical Models:} \textbf{(a)} Feature selection for reconstruction. \textbf{(b)} Feature Selection for prediction
	task. \textbf{(c)} Instance selection for representative data points. \textbf{(d)} Instance selection for few-shot classification.}
	\label{zm}
\end{figure*}

\section{Graphical Model}\label{graphical-model}
In Figure~\ref{zm}, we illustrate the generative process using graphical models for each tasks --- (a) feature selection for set reconstruction, (b) feature selection for prediction (c) Instance selection for representative data points and (d) instance selection for few-shot classification. The shaded circles denote observed variables and the others latent variables.

\section{Instance Selection Samples}
In this section, we show more qualitative examples for the 1D and CelebA experiments on how the models subsamples elements of the given set for the target task.

\subsection{1D Function - Reconstruction}
\begin{figure*}
\vspace{-0.1in}
    \centering
    \begin{subfigure}{0.4\textwidth}
		\centering
		\includegraphics[width=\linewidth]{images/function/plots/legend_func.pdf}
		\vspace{-0.2in}
	\end{subfigure}
    \begin{subfigure}{0.25\textwidth}
		\centering
		\includegraphics[width=\linewidth]{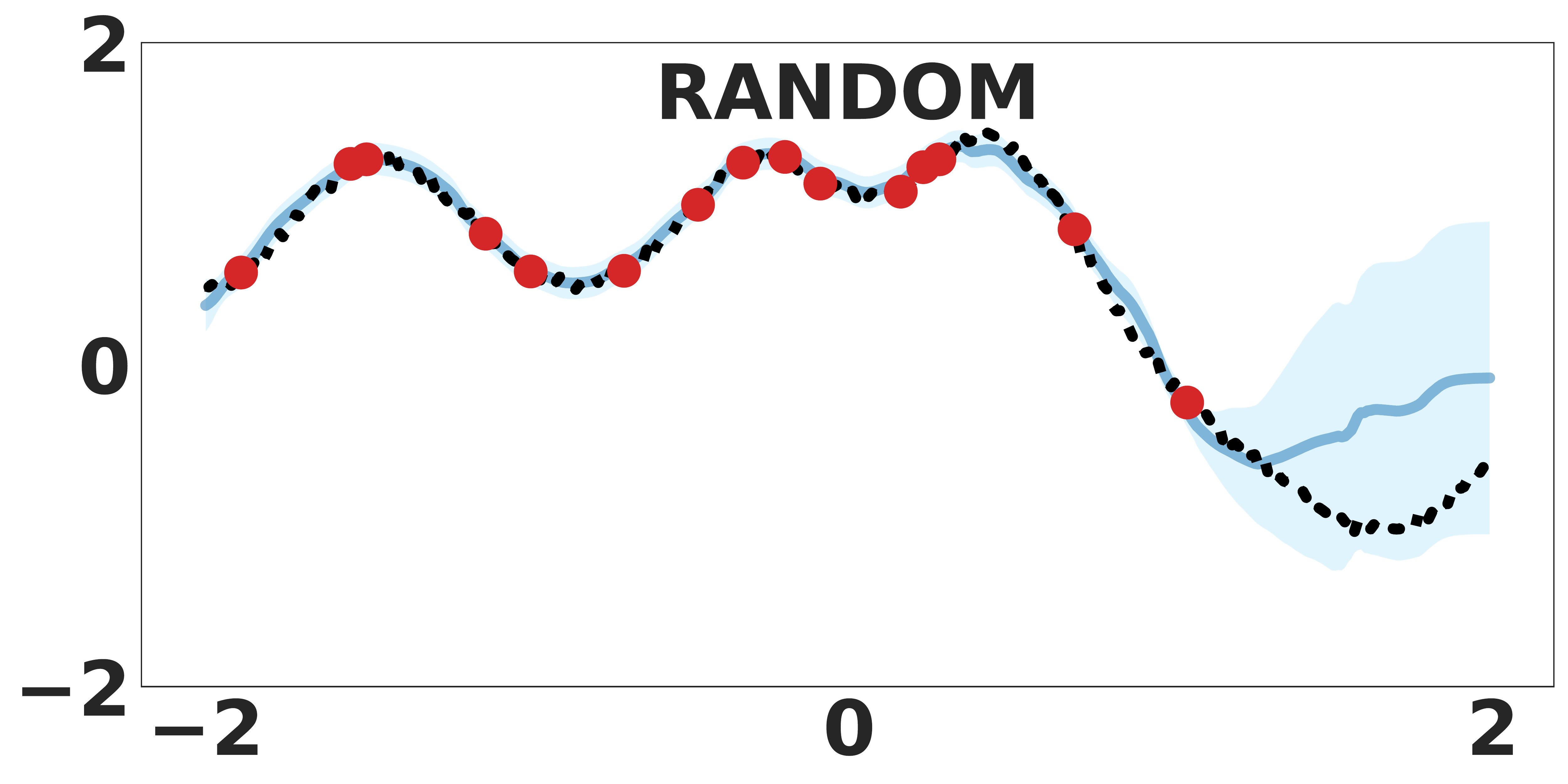}
	\end{subfigure}%
    \begin{subfigure}{0.25\textwidth}
		\centering
		\includegraphics[width=\linewidth]{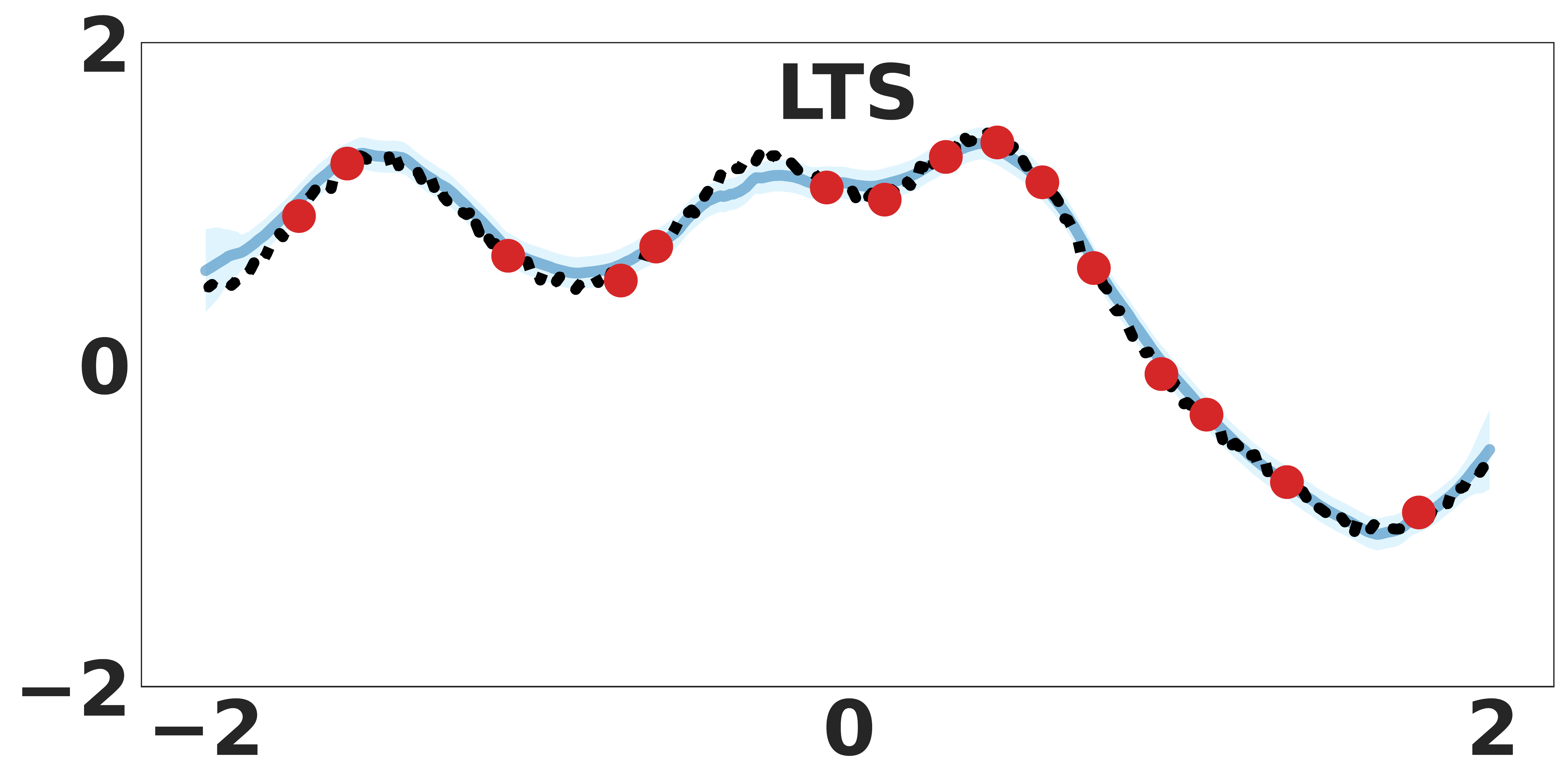}
	\end{subfigure}%
    \begin{subfigure}{0.25\textwidth}
		\centering
		\includegraphics[width=\linewidth]{images/function/appendix/sss1.pdf}
	\end{subfigure}
	    \begin{subfigure}{0.25\textwidth}
		\centering
		\includegraphics[width=\linewidth]{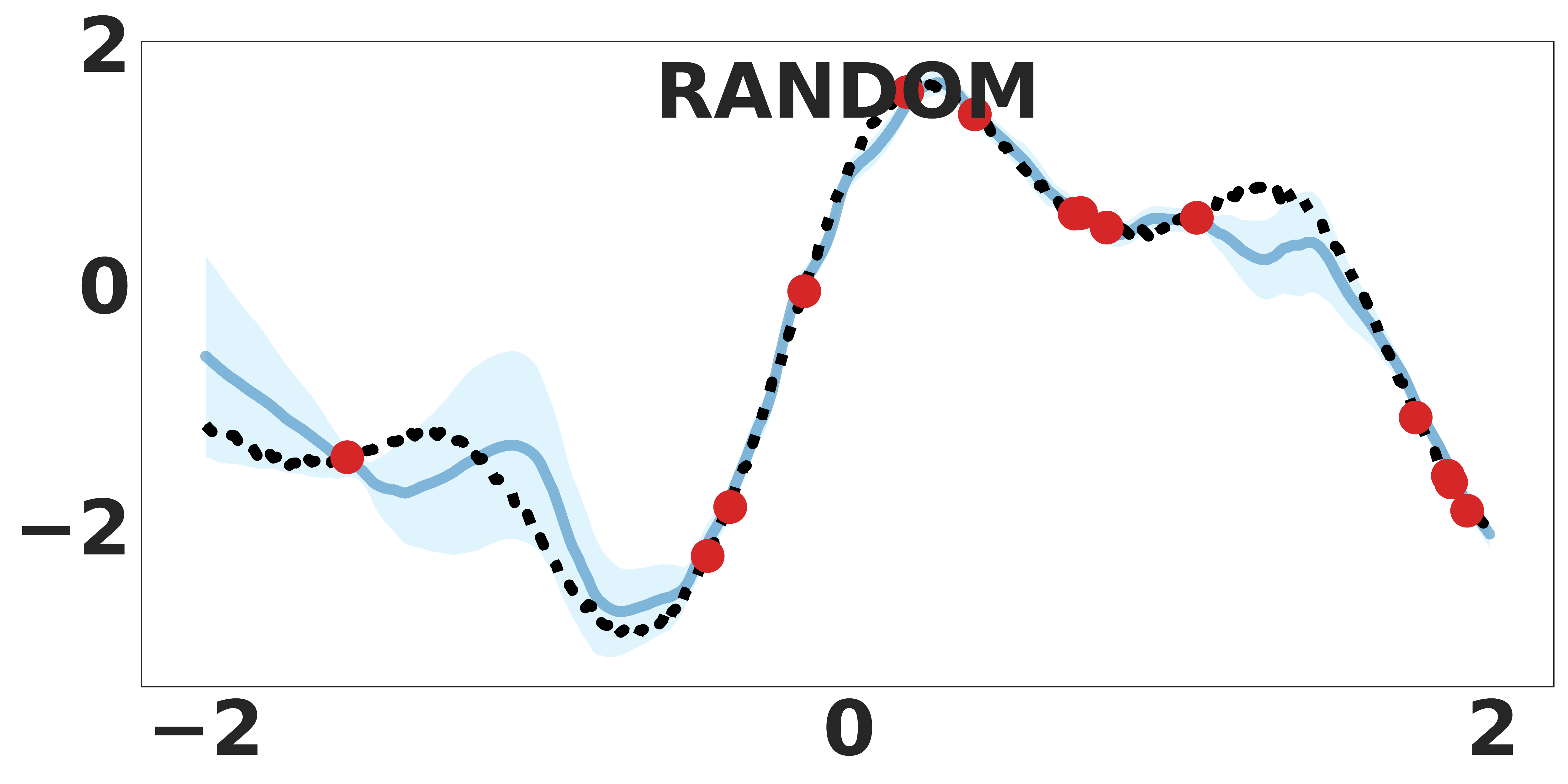}
	\end{subfigure}%
    \begin{subfigure}{0.25\textwidth}
		\centering
		\includegraphics[width=\linewidth]{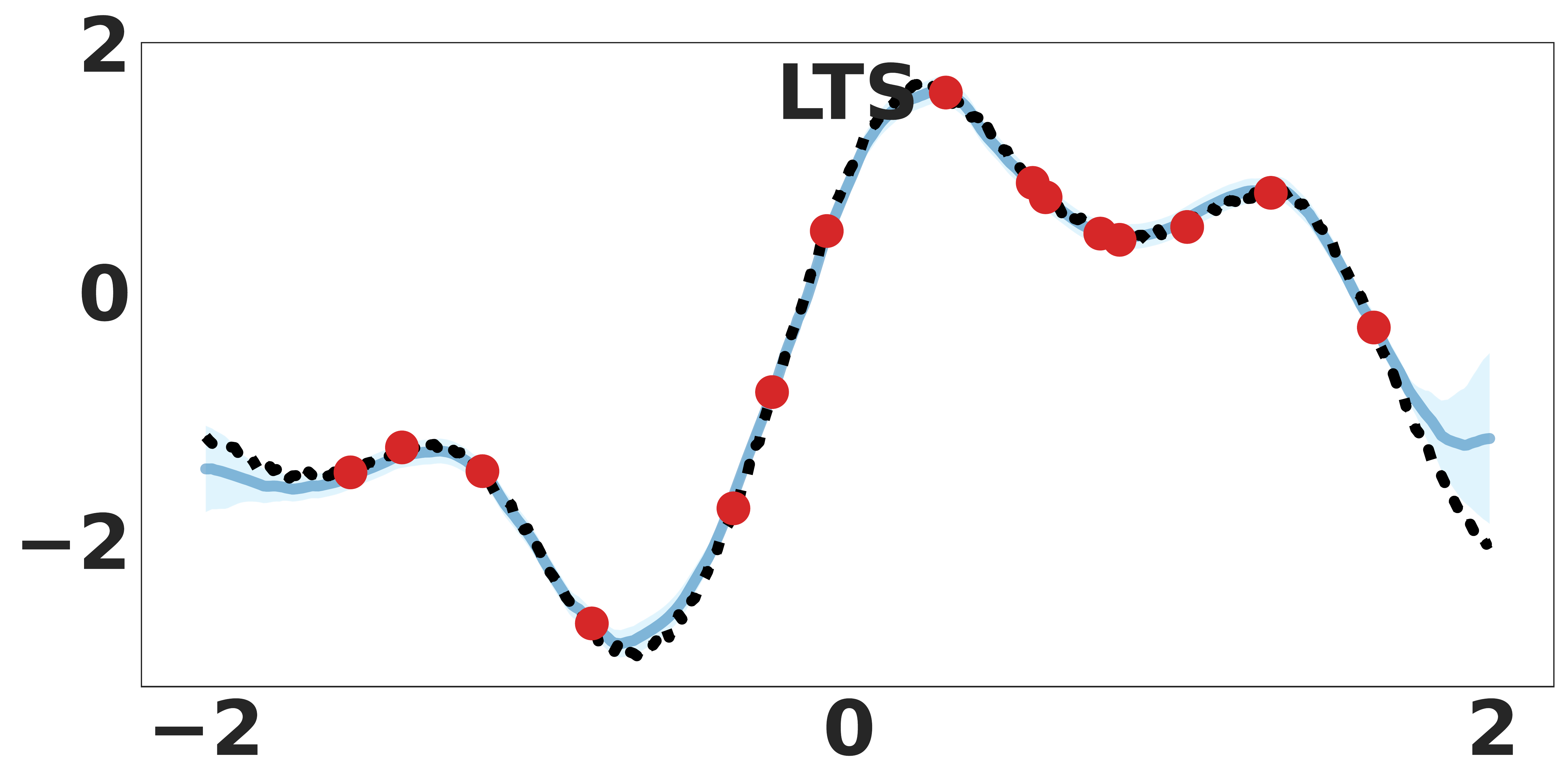}
	\end{subfigure}%
    \begin{subfigure}{0.25\textwidth}
		\centering
		\includegraphics[width=\linewidth]{images/function/appendix/sss2.pdf}
	\end{subfigure}
	    \begin{subfigure}{0.25\textwidth}
		\centering
		\includegraphics[width=\linewidth]{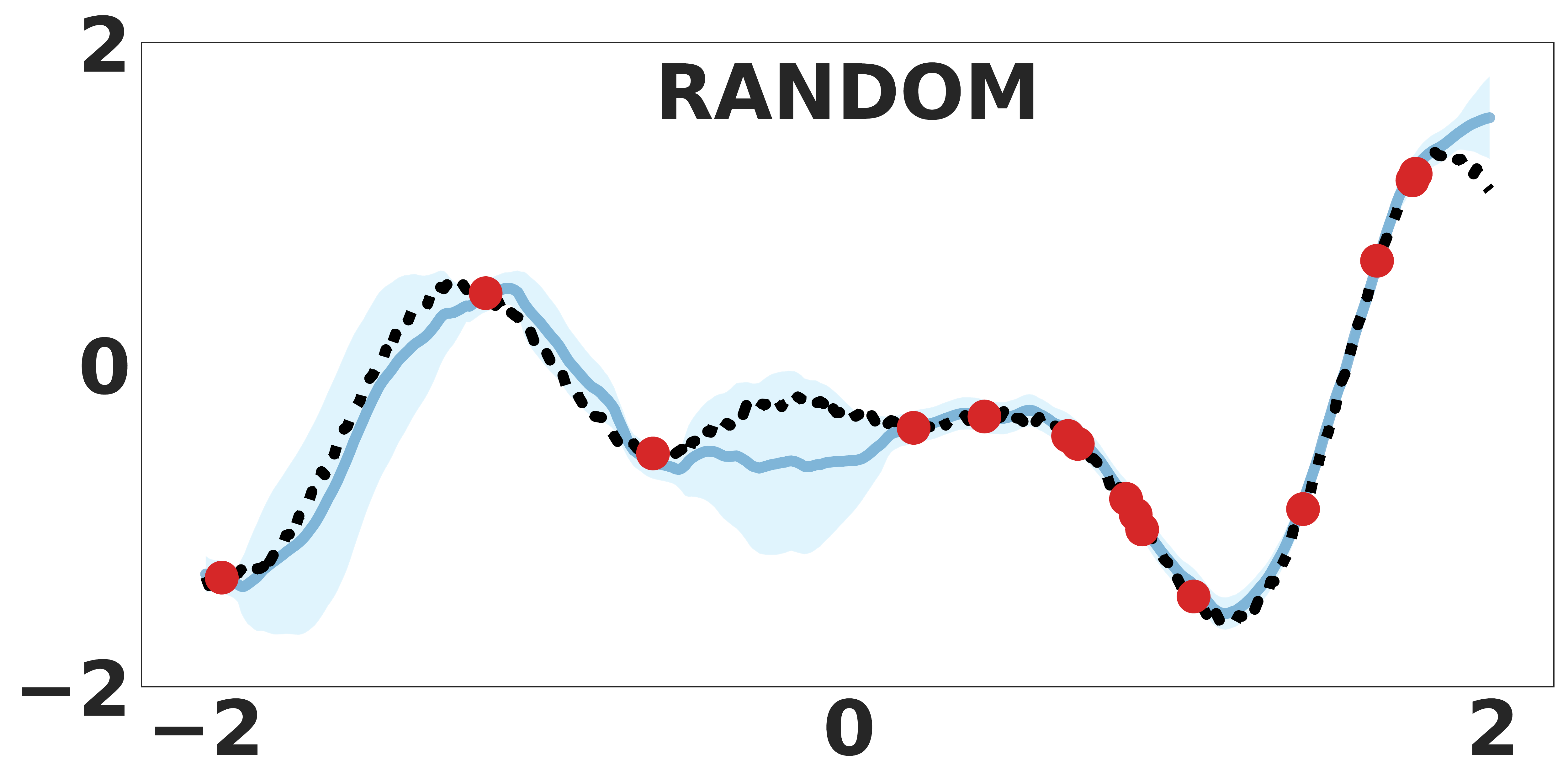}
	\end{subfigure}%
    \begin{subfigure}{0.25\textwidth}
		\centering
		\includegraphics[width=\linewidth]{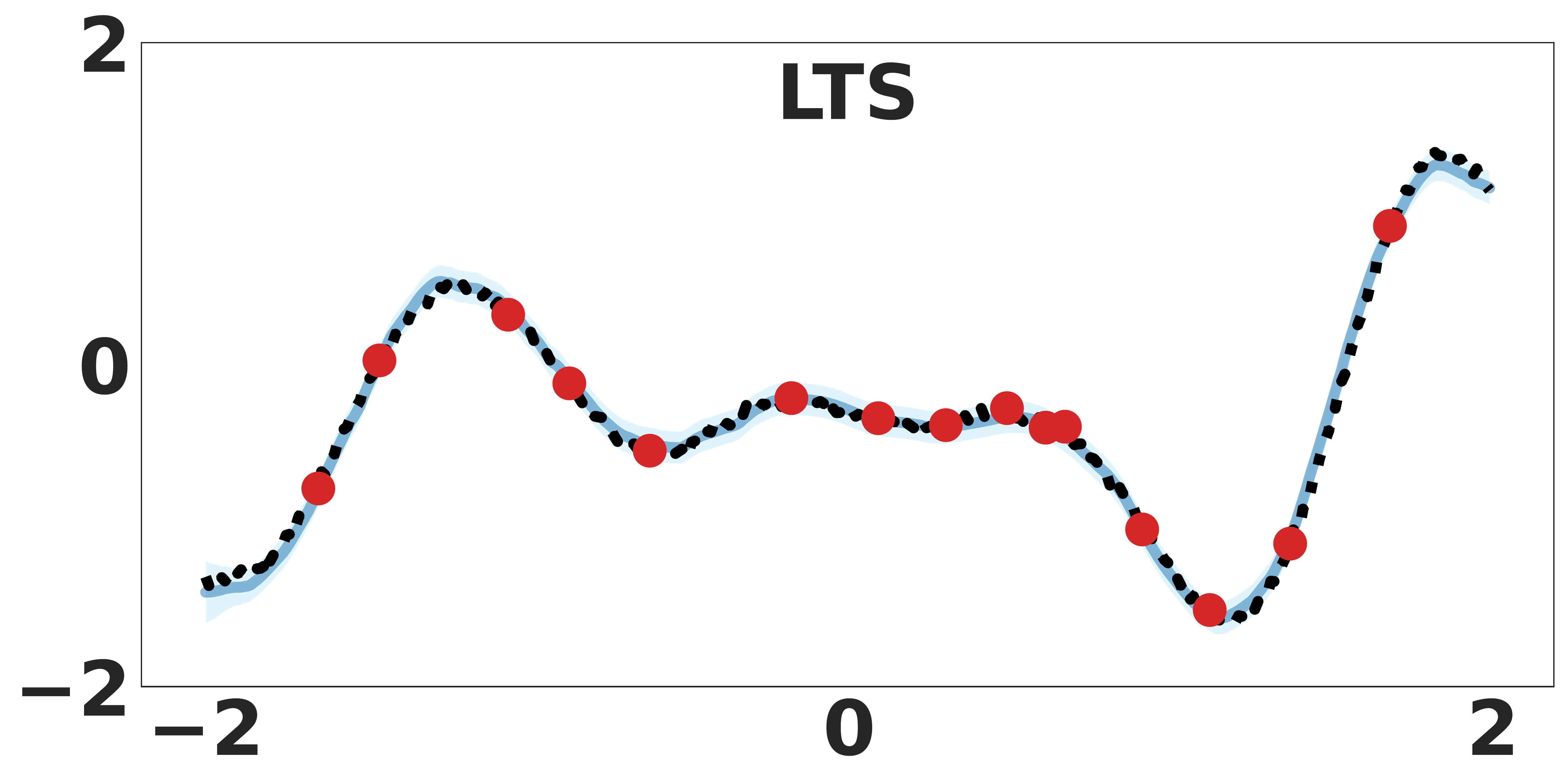}
	\end{subfigure}%
    \begin{subfigure}{0.25\textwidth}
		\centering
		\includegraphics[width=\linewidth]{images/function/appendix/sss3.pdf}
	\end{subfigure}
	    \begin{subfigure}{0.25\textwidth}
		\centering
		\includegraphics[width=\linewidth]{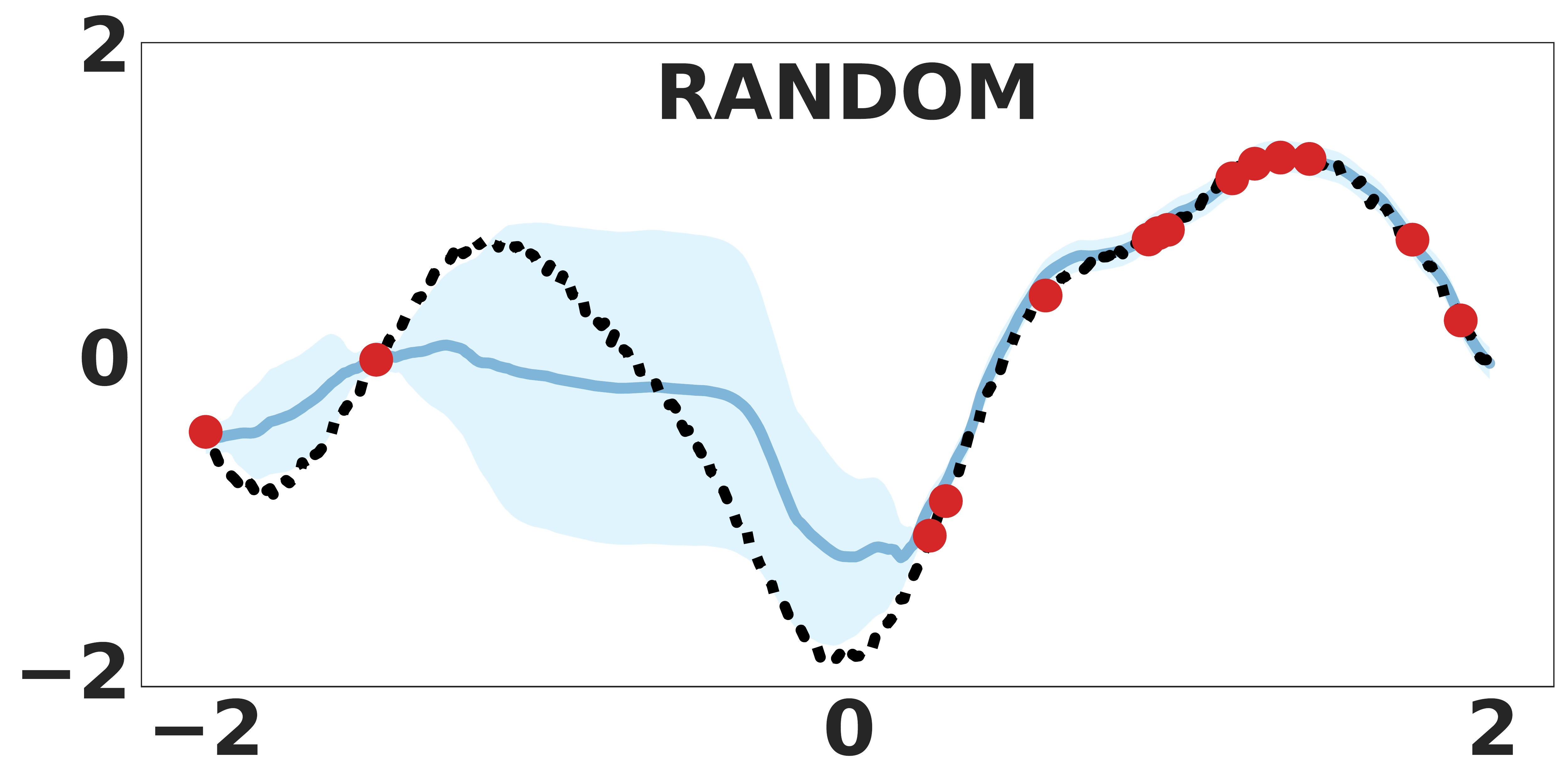}
	\end{subfigure}%
    \begin{subfigure}{0.25\textwidth}
		\centering
		\includegraphics[width=\linewidth]{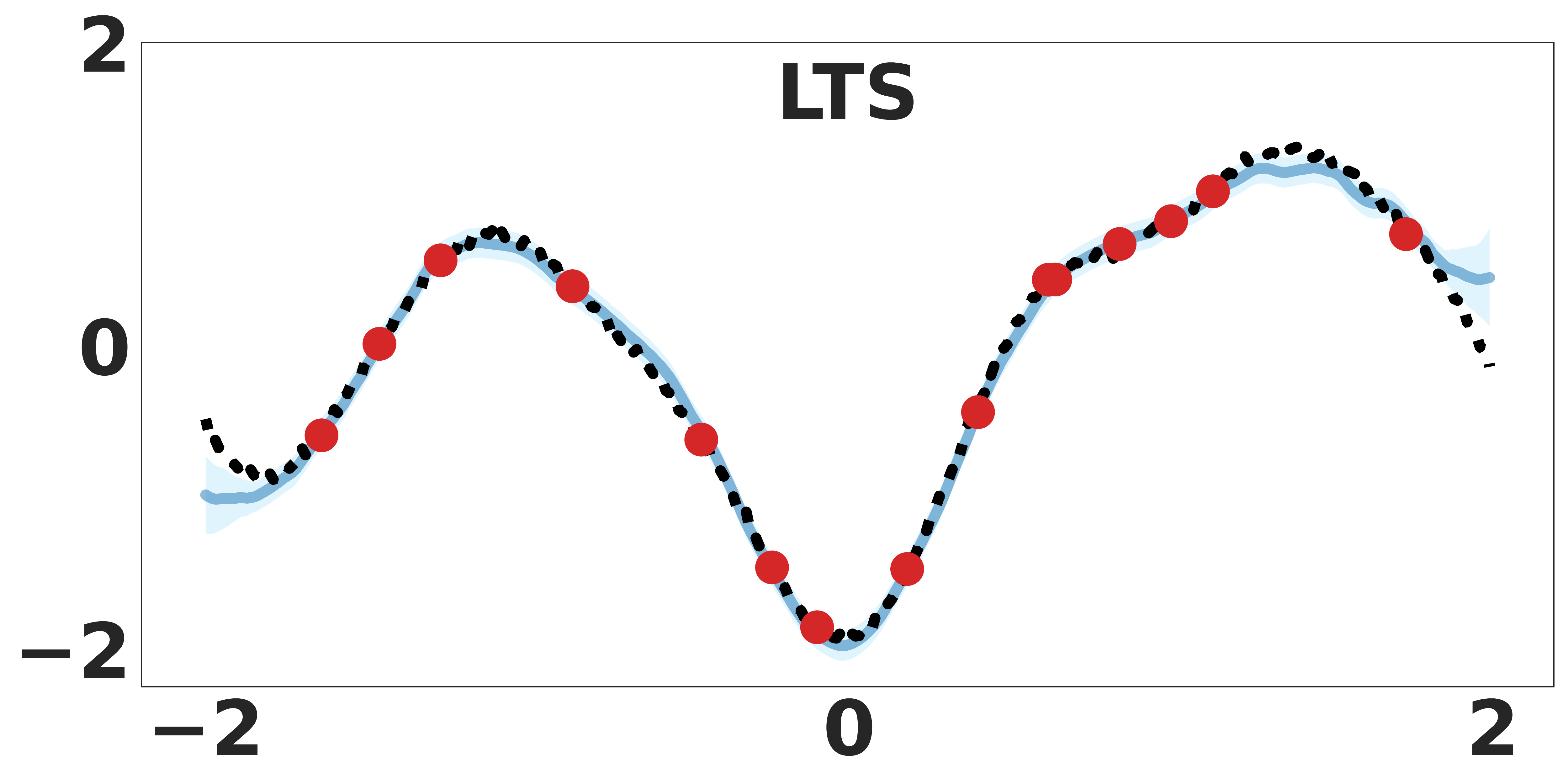}
	\end{subfigure}%
    \begin{subfigure}{0.25\textwidth}
		\centering
		\includegraphics[width=\linewidth]{images/function/appendix/sss4.pdf}
	\end{subfigure}
	\begin{subfigure}{0.25\textwidth}
		\centering
		\includegraphics[width=\linewidth]{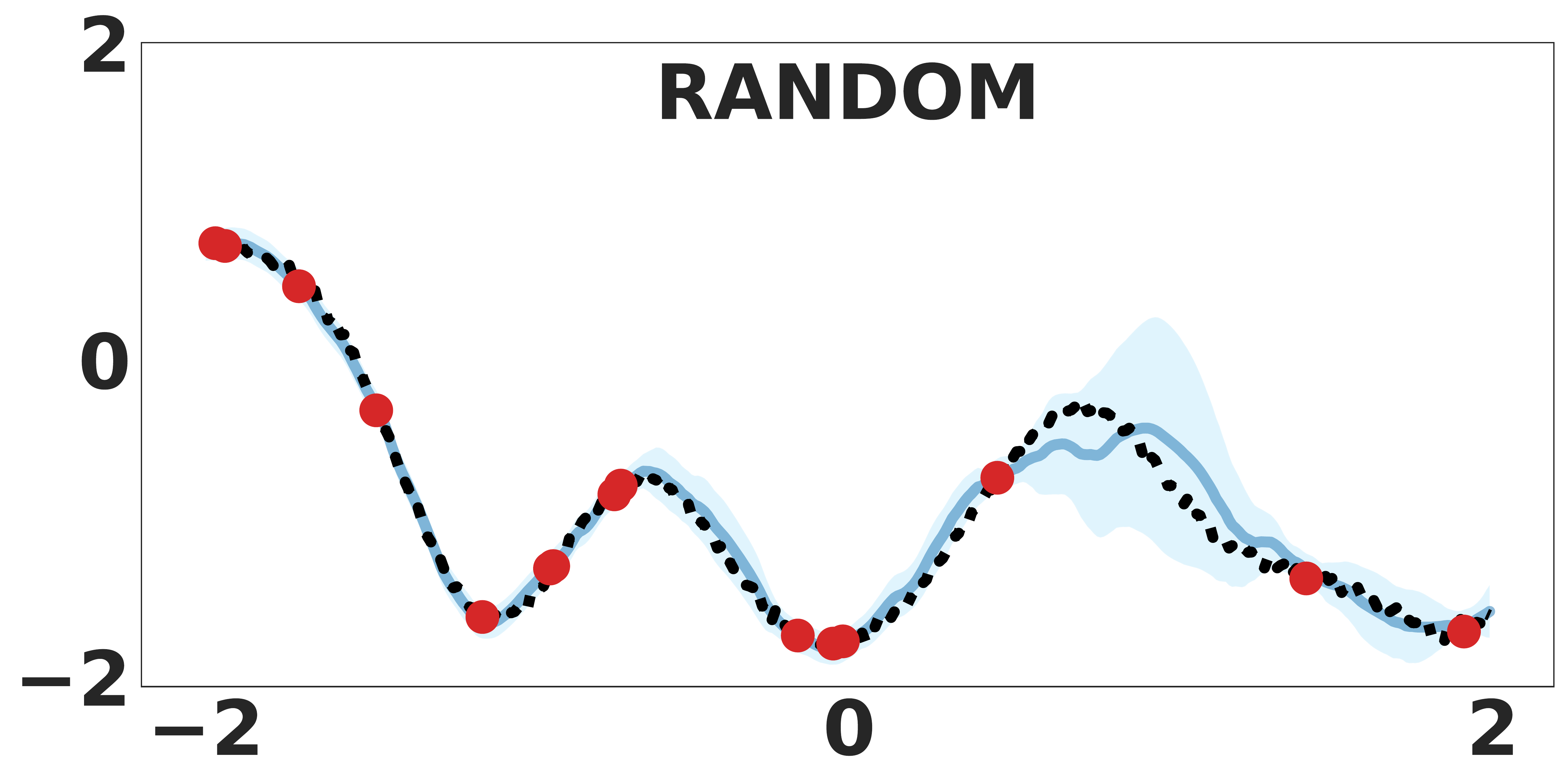}
	\end{subfigure}%
    \begin{subfigure}{0.25\textwidth}
		\centering
		\includegraphics[width=\linewidth]{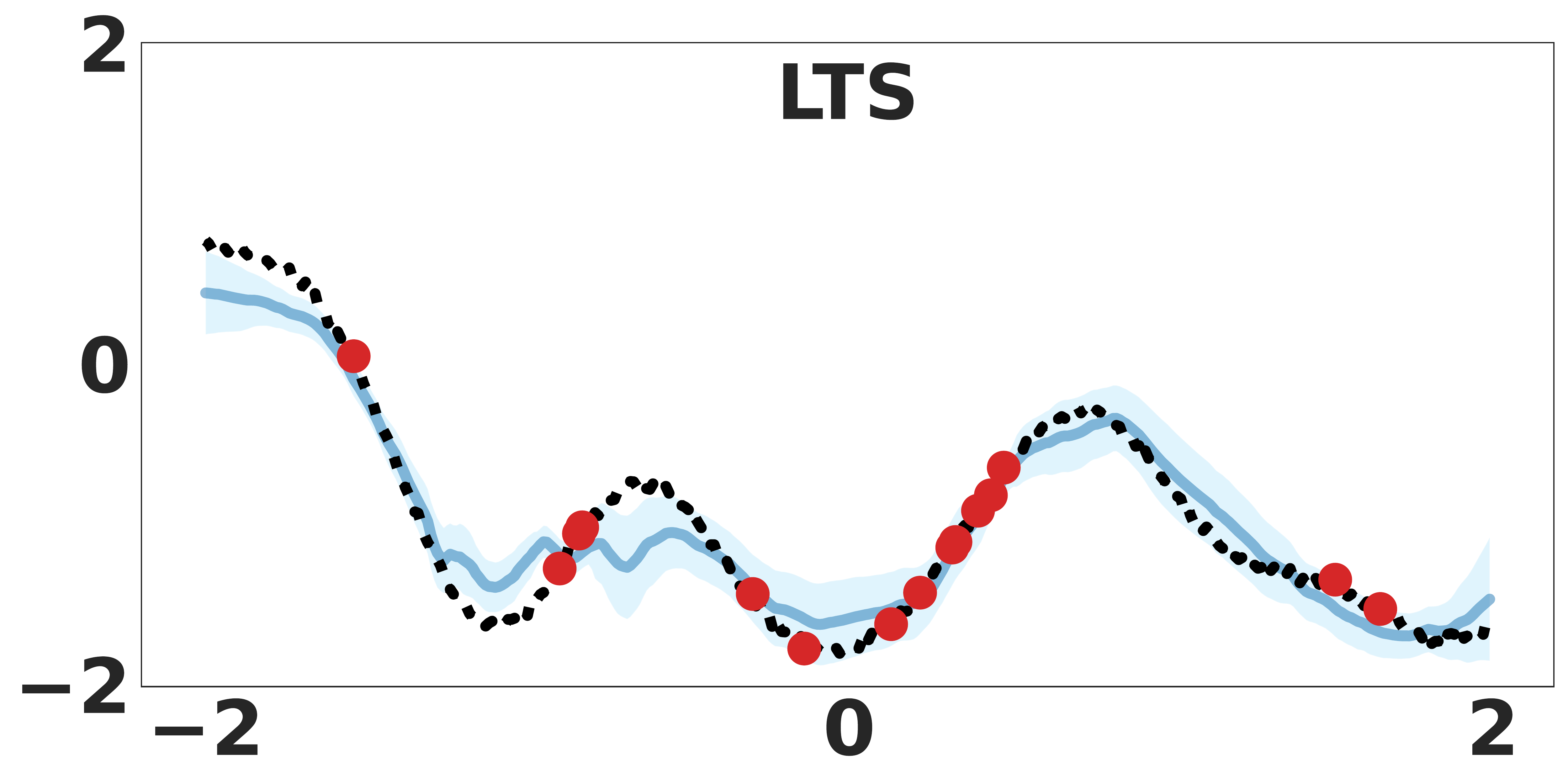}
	\end{subfigure}%
    \begin{subfigure}{0.25\textwidth}
		\centering
		\includegraphics[width=\linewidth]{images/function/appendix/sss5.pdf}
	\end{subfigure}
	\begin{subfigure}{0.25\textwidth}
		\centering
		\includegraphics[width=\linewidth]{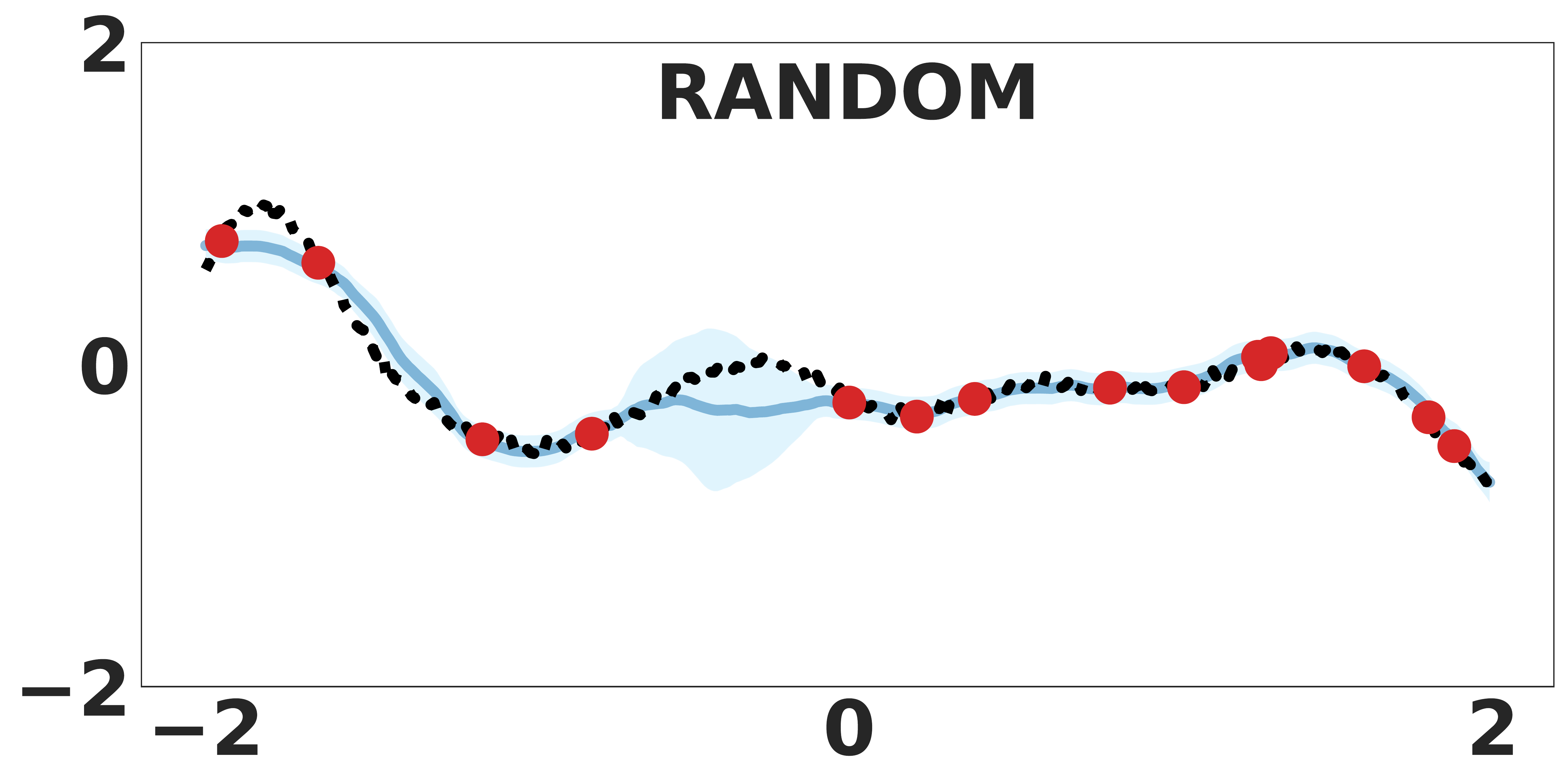}
	\end{subfigure}%
    \begin{subfigure}{0.25\textwidth}
		\centering
		\includegraphics[width=\linewidth]{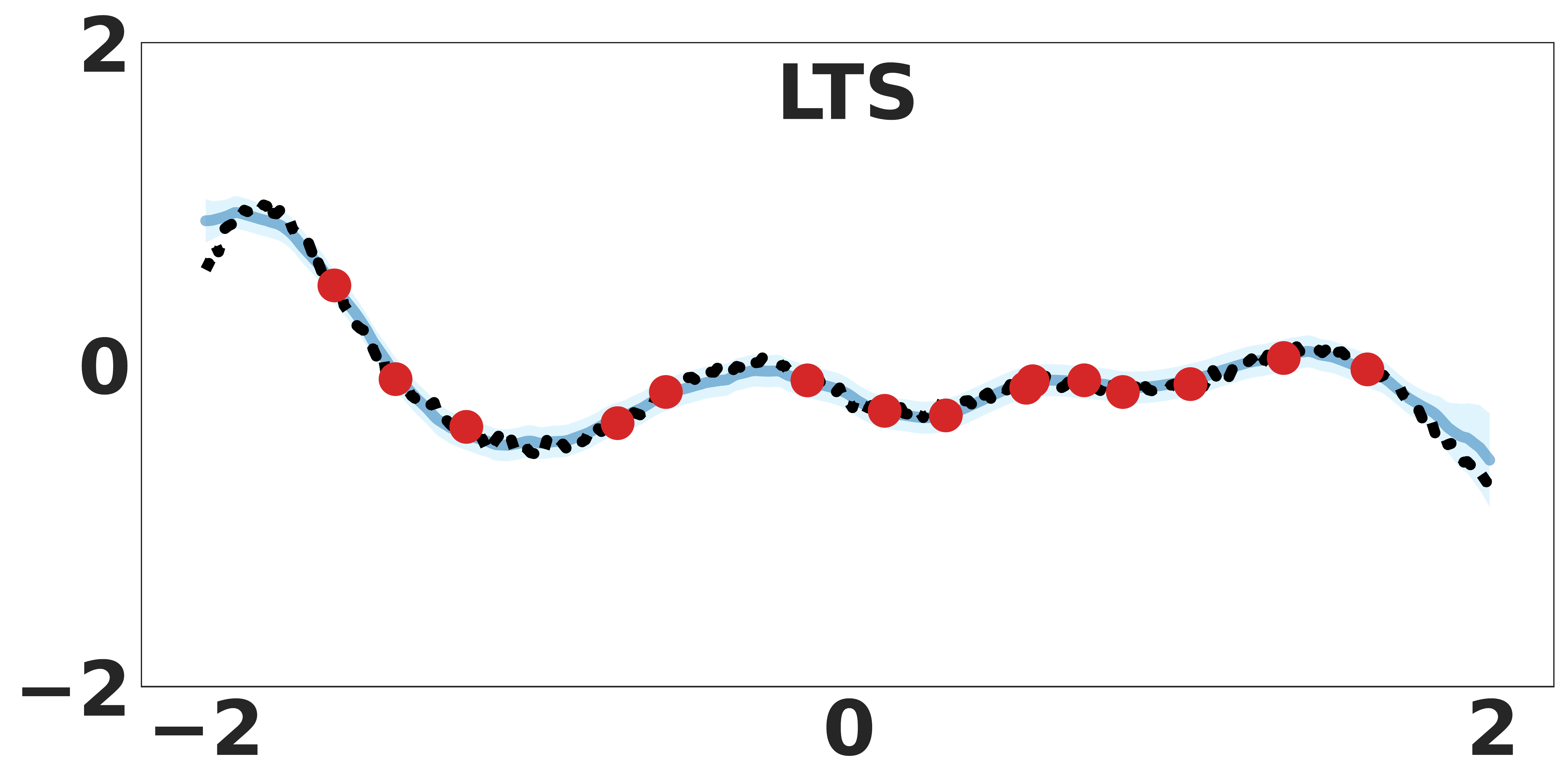}
	\end{subfigure}%
    \begin{subfigure}{0.25\textwidth}
		\centering
		\includegraphics[width=\linewidth]{images/function/appendix/sss6.pdf}
	\end{subfigure}
	\begin{subfigure}{0.25\textwidth}
		\centering
		\includegraphics[width=\linewidth]{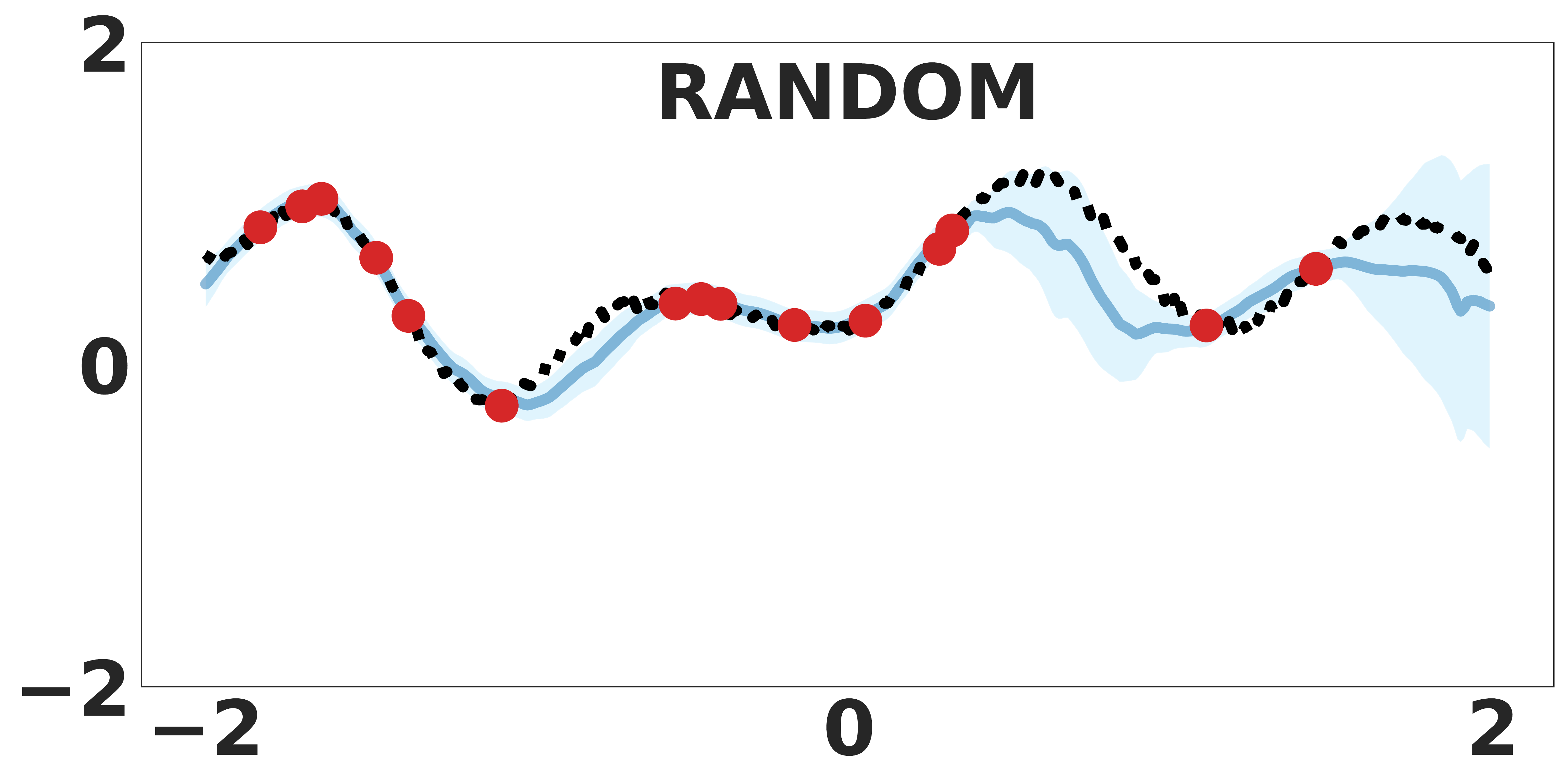}
	\end{subfigure}%
    \begin{subfigure}{0.25\textwidth}
		\centering
		\includegraphics[width=\linewidth]{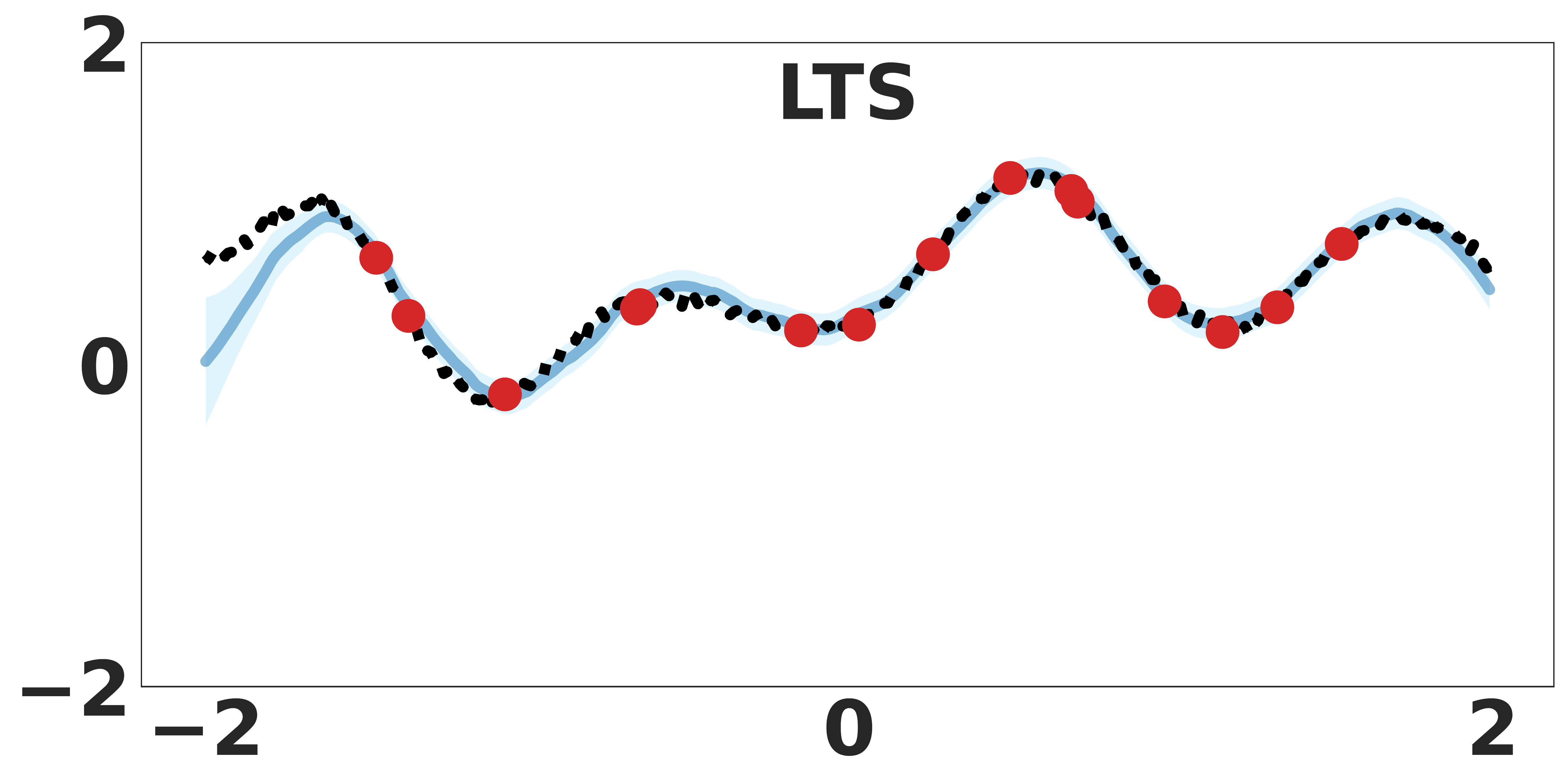}
	\end{subfigure}%
    \begin{subfigure}{0.25\textwidth}
		\centering
		\includegraphics[width=\linewidth]{images/function/appendix/sss7.pdf}
	\end{subfigure}	
	\begin{subfigure}{0.25\textwidth}
		\centering
		\includegraphics[width=\linewidth]{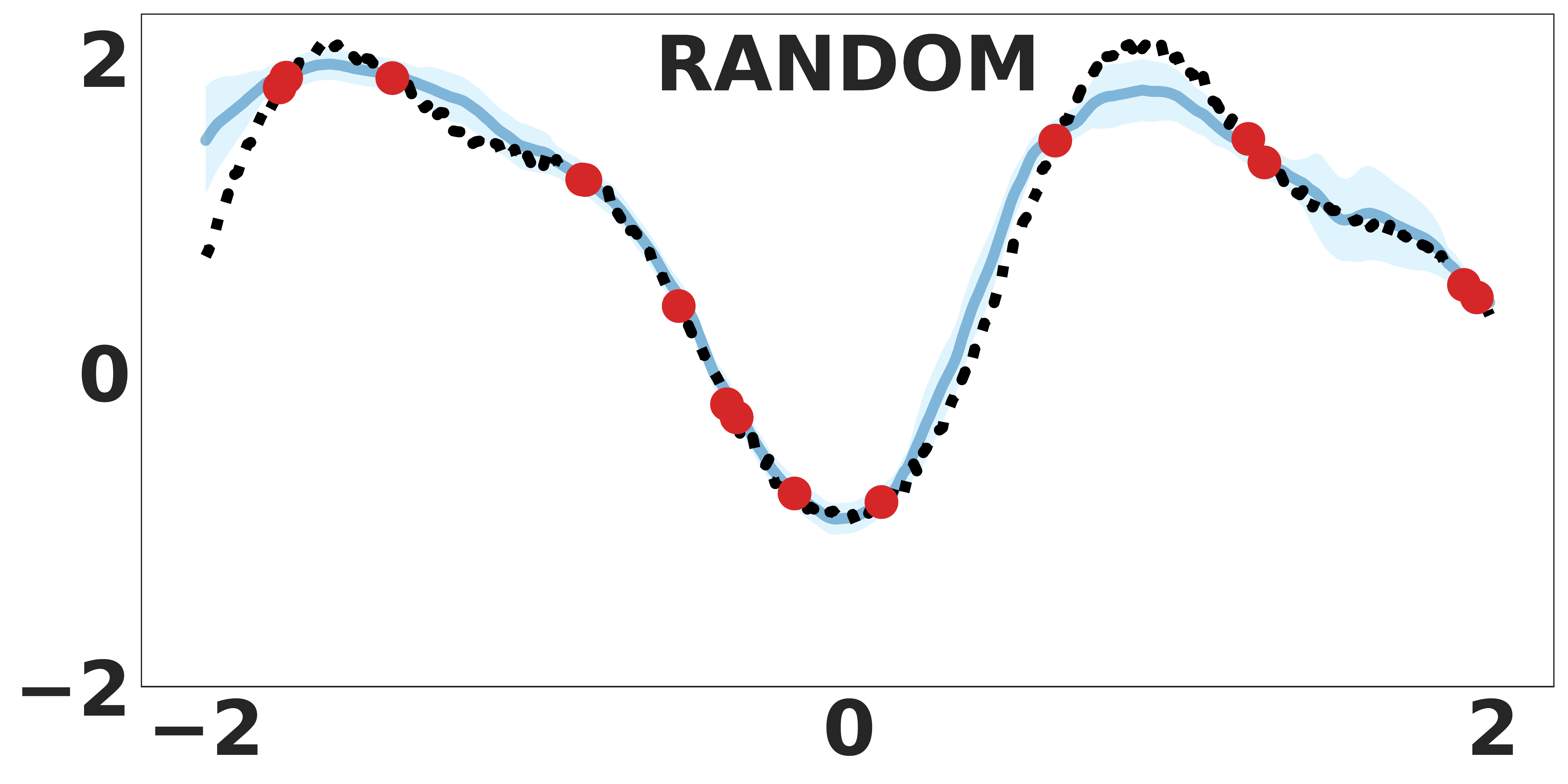}
	\end{subfigure}%
    \begin{subfigure}{0.25\textwidth}
		\includegraphics[width=\linewidth]{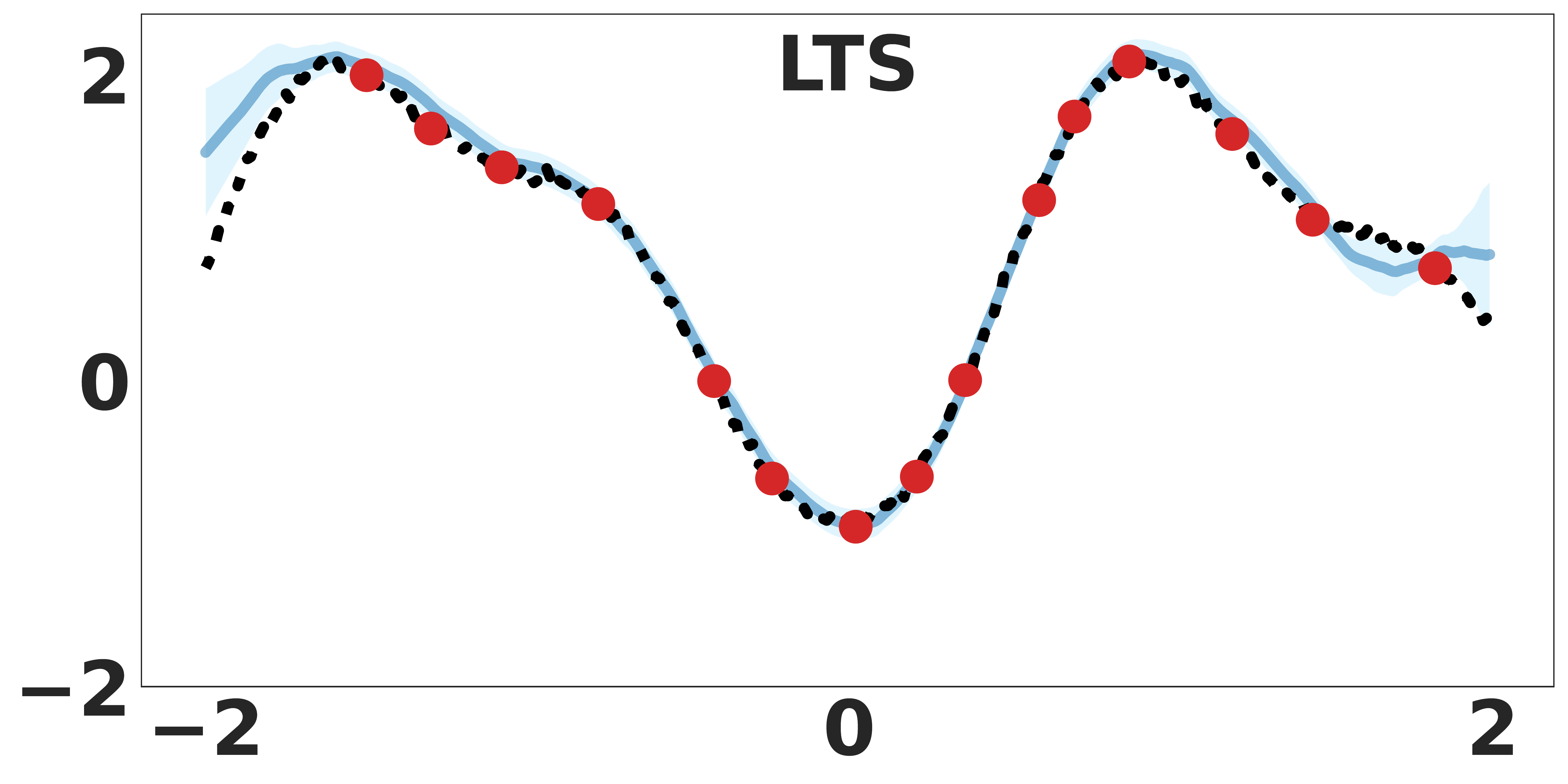}
	\end{subfigure}%
    \begin{subfigure}{0.25\textwidth}
		\centering
		\includegraphics[width=\linewidth]{images/function/appendix/sss8.pdf}
	\end{subfigure}	
	\begin{subfigure}{0.25\textwidth}
		\centering
		\includegraphics[width=\linewidth]{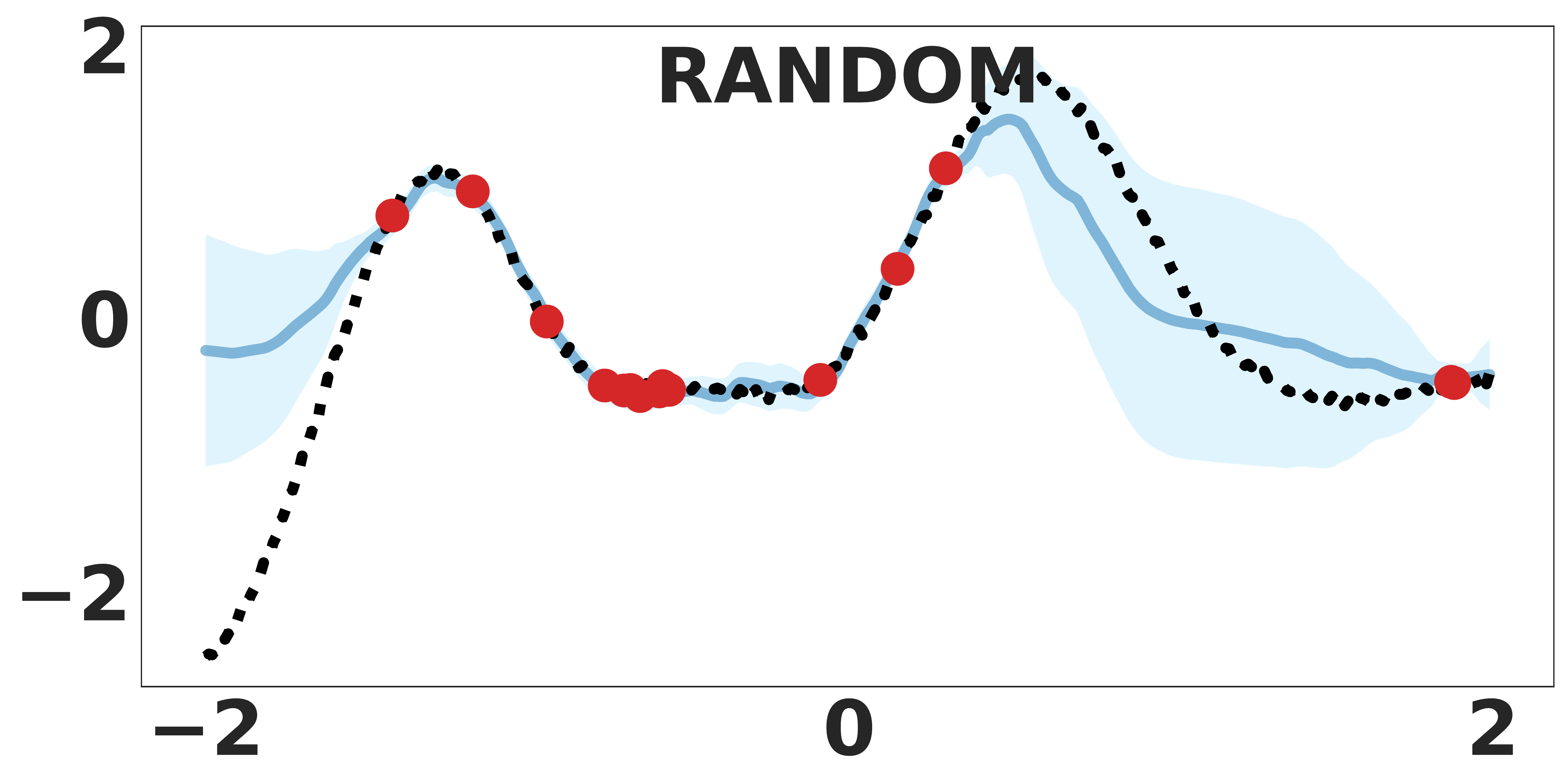}
	\end{subfigure}%
    \begin{subfigure}{0.25\textwidth}
		\centering
		\includegraphics[width=\linewidth]{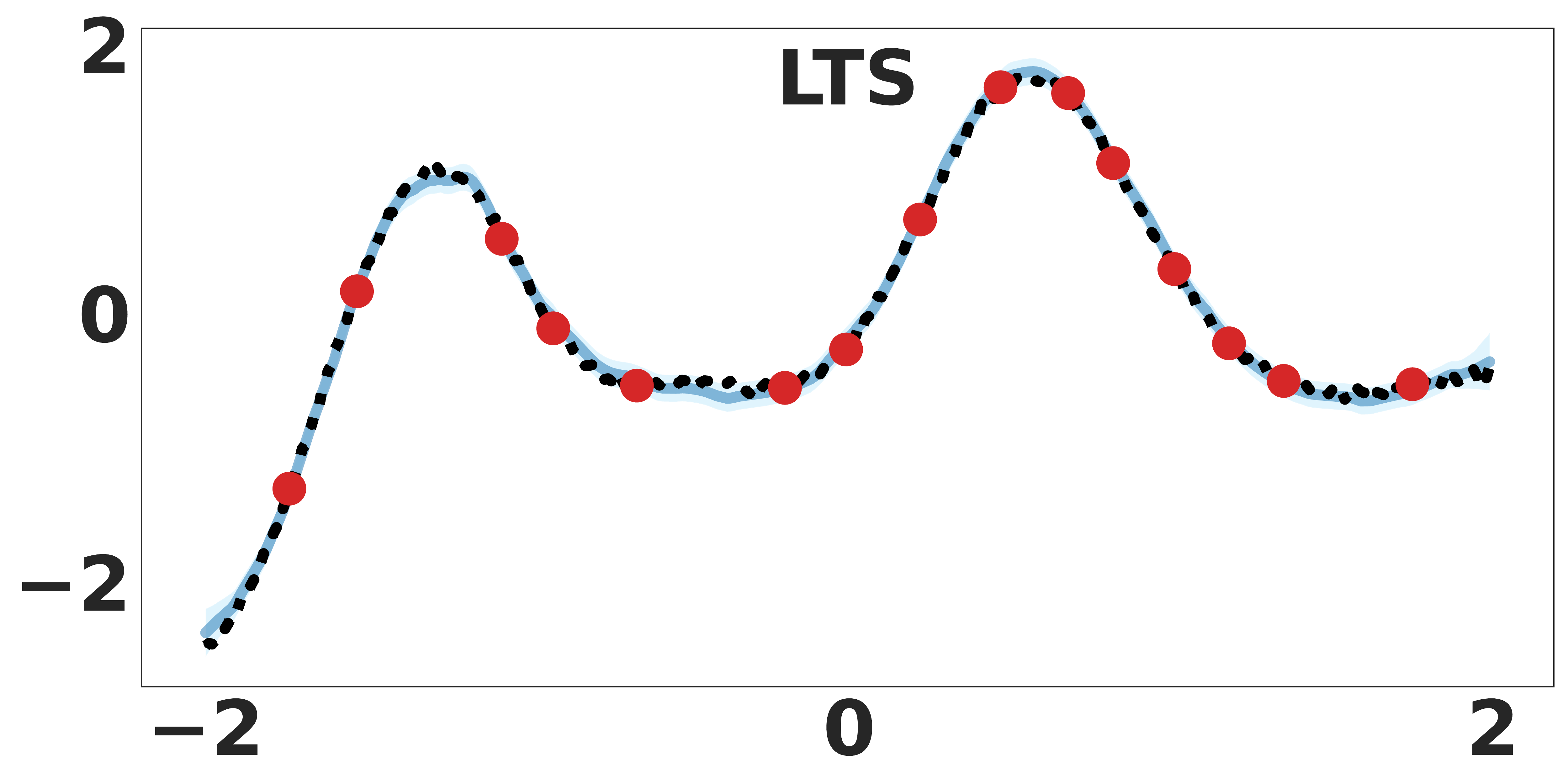}
	\end{subfigure}%
    \begin{subfigure}{0.25\textwidth}
		\centering
		\includegraphics[width=\linewidth]{images/function/appendix/sss9.pdf}
	\end{subfigure}	
	\caption{\small Visualization of 1D function reconstruction with three different subset selection models. Each method selects
	15 out of 400 elements. As can be seen, SSS selects elements that result in better reconstructed functions.}
	\label{reconstruction-functions-appendix}
	\vspace{-0.1in}
\end{figure*}
Figure~\ref{reconstruction-functions-appendix} shows the reconstruction samples of our model on the 1D function dataset, where SSS clearly outperforms Learning to Sample (LTS) and Random Subset (RS). Since RS selects the points randomly, it can leave out important part of the 1D curve leading
to wrong reconstructions. Similarly, LTS also miss some parts of the curves, resulting in suboptimal reconstructions. 

\begin{figure*}
\vspace{-0.1in}
    \centering
    \begin{subfigure}{0.8\textwidth}
		\centering
		\includegraphics[width=\linewidth]{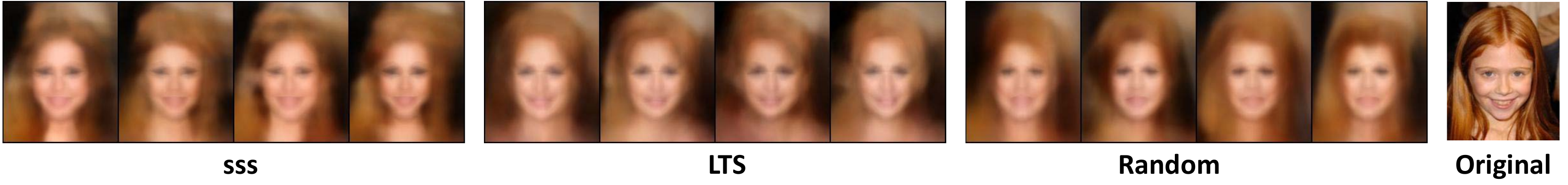}
	\end{subfigure}
    \begin{subfigure}{0.8\textwidth}
		\centering
		\includegraphics[width=\linewidth]{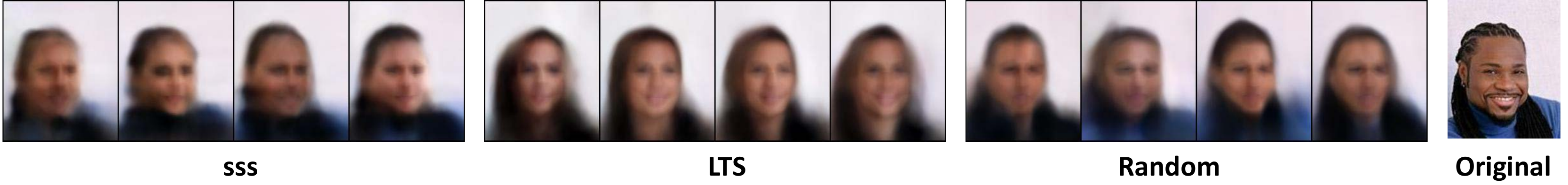}
	\end{subfigure}
    \begin{subfigure}{0.8\textwidth}
		\centering
		\includegraphics[width=\linewidth]{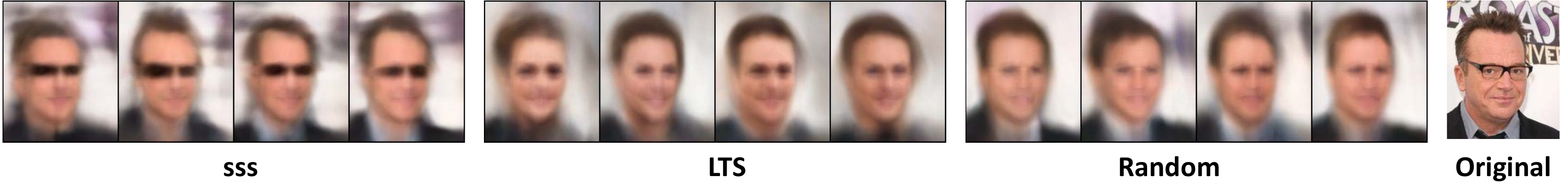}
	\end{subfigure}
    \begin{subfigure}{0.8\textwidth}
		\centering
		\includegraphics[width=\linewidth]{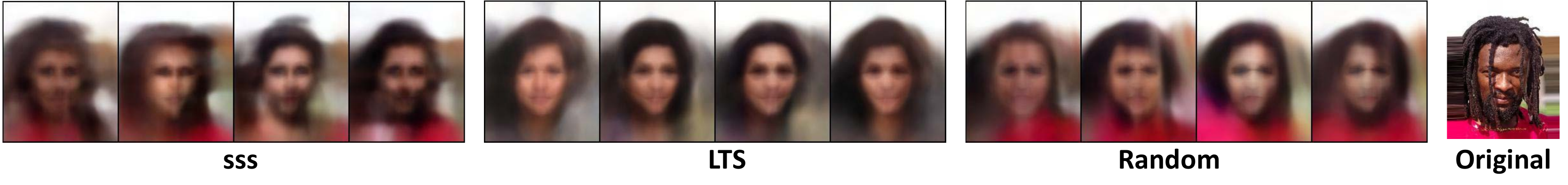}
	\end{subfigure}
    \begin{subfigure}{0.8\textwidth}
		\centering
		\includegraphics[width=\linewidth]{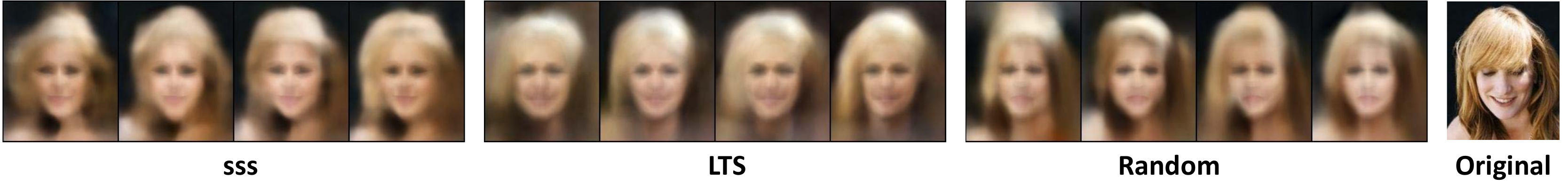}
	\end{subfigure}
    \begin{subfigure}{0.8\textwidth}
		\centering
		\includegraphics[width=\linewidth]{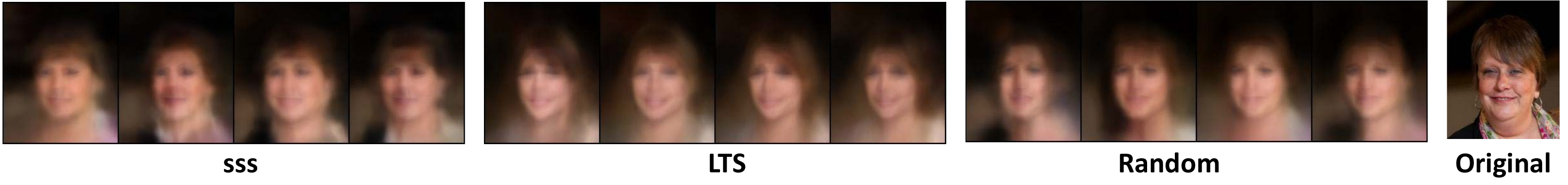}
	\end{subfigure}
    \begin{subfigure}{0.8\textwidth}
		\centering
		\includegraphics[width=\linewidth]{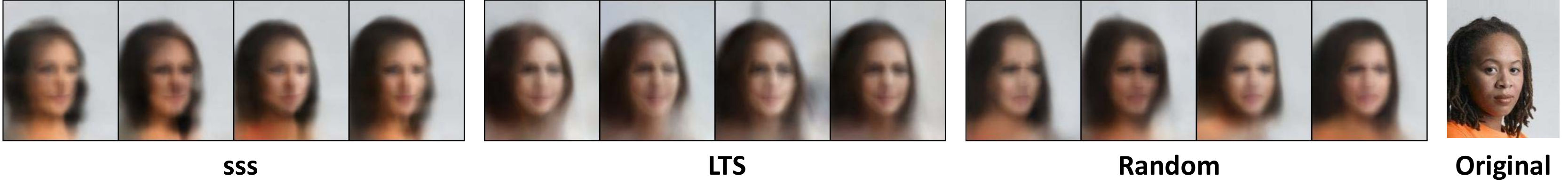}
	\end{subfigure}
    \begin{subfigure}{0.8\textwidth}
		\centering
		\includegraphics[width=\linewidth]{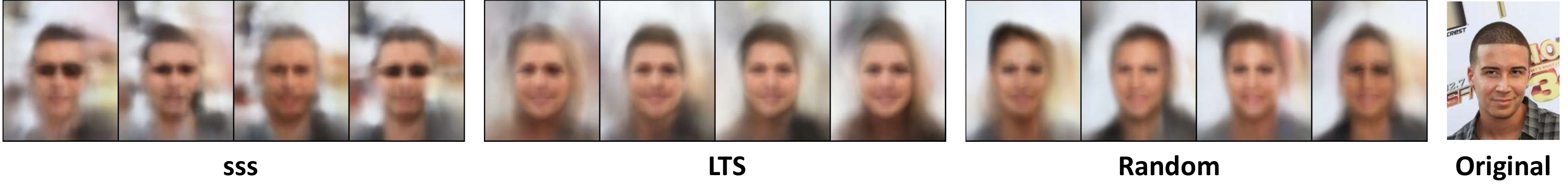}
	\end{subfigure}
	\caption{\small Visualization of reconstructed images for the CelebA dataset. Each model selects 40, 60, 80, and 100 pixels from 
	a $218 \times 178$ image and reconstruct the full image using only the selected pixels.}
	\label{celeba_reconstruction_appendix}
	\vspace{-0.1in}
\end{figure*}
\begin{figure*}
\centering
\includegraphics[width=1.0\linewidth]{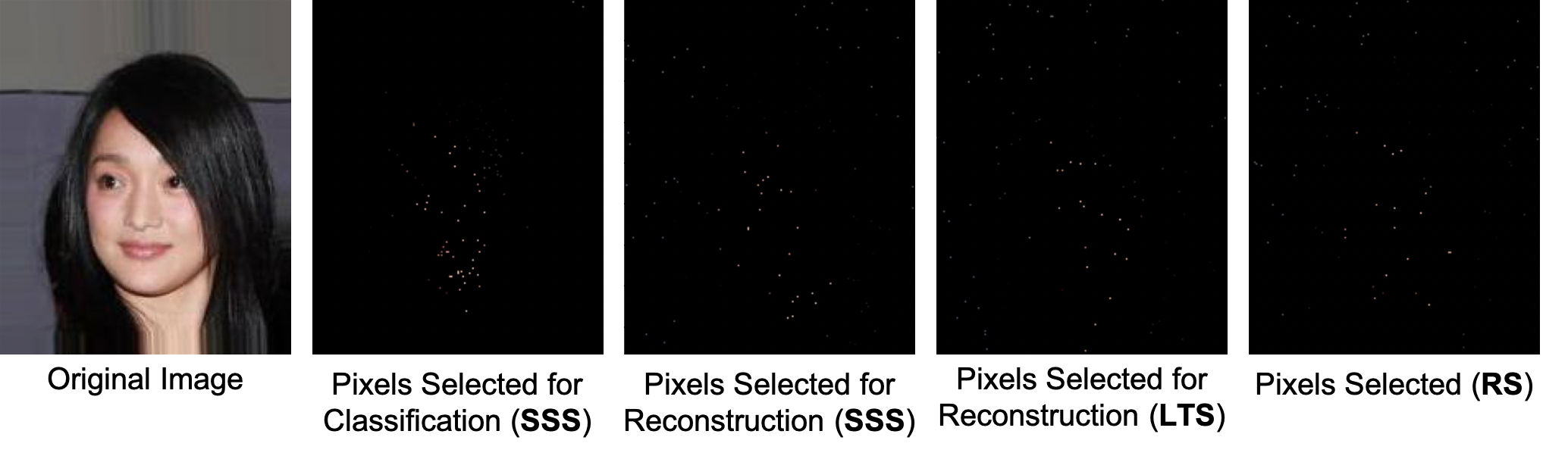}
\vspace{-0.25in}
\caption{\small \textbf{Zoom-In for best view.} Selected pixels for different tasks on CelebA. As can be seen from the selected pixels,
SSS adaptively selects different pixels for both reconstruction and classification. Pixels for reconstruction are more spread out to include the 
background since this contributes to the reconstruction loss. For classification, almost all the pixels are focused on the face since most of the attributes
can be found there.}
\vspace{-0.1in}
\label{celeba_pixels}
\end{figure*}

\subsection{CelebA}\label{celeba-appendix}
Figure~\ref{celeba_reconstruction_appendix} shows samples of reconstructed images while varying the number of selected pixels. Additionally in 
Figure~\ref{celeba_pixels}, we show the selected pixels of our model for both the classification and reconstruction task. For the attribute
classification task, the model tends to select pixels mainly from the face, since the task is to classify characteristics of the person. For
reconstruction, the selected pixels are more evenly distributed, since the background also contributes significantly to the reconstruction loss.

\section{Dataset Distillation: Instance Selection}\label{app:instance_selection} 
For the instance selection experiments, we construct a set by randomly sampling 200 face images from the full dataset. To evaluate the model, we
create multiple such datasets and run the baselines (Random Sampling, K-Center Greedy and FPS) and SSS on the same datasets. We compute the FID metric~\citep{fid} on the instances and averaged on all the randomly constructed datasets. For FPS, we use the open-source implementation in ~\href{FPS}{\url{https://github.com/rusty1s/pytorch\_cluster}}. Further, we provide qualitative results on a single dataset in Figure ~\ref{supp_celeba}
where  our model picks 5 instances from the full set of 200 face images.

\begin{figure*}
  \centering
  \includegraphics[width=0.7\textwidth]{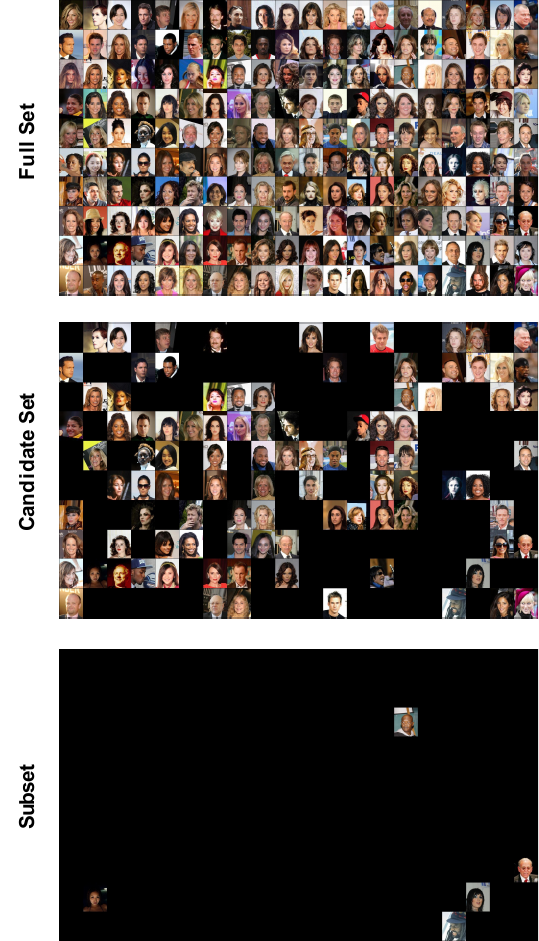}
  \caption{\small Visualization of a set with 200 images for instance selection. The two stage selection method in SSS is visualized as
	  Candidate Set and Subset. A subset of size 5 is visualized.}
  \label{supp_celeba}
\end{figure*}

\subsection{Dataset Distillation: Classification}\label{app:mini_image_net}

In Figure~\ref{miniImagenet} we provide visualizations for the instance selection problem as applied to the few-shot classification task. Here, we go
from a 20-shot to a 1-shot classification problem where the prototype is selected from the support using SSS. The selected subset is then used in place 
of the support set and used to classify new query instances. 
\begin{figure*}
\centering
\includegraphics[width=1.0\linewidth]{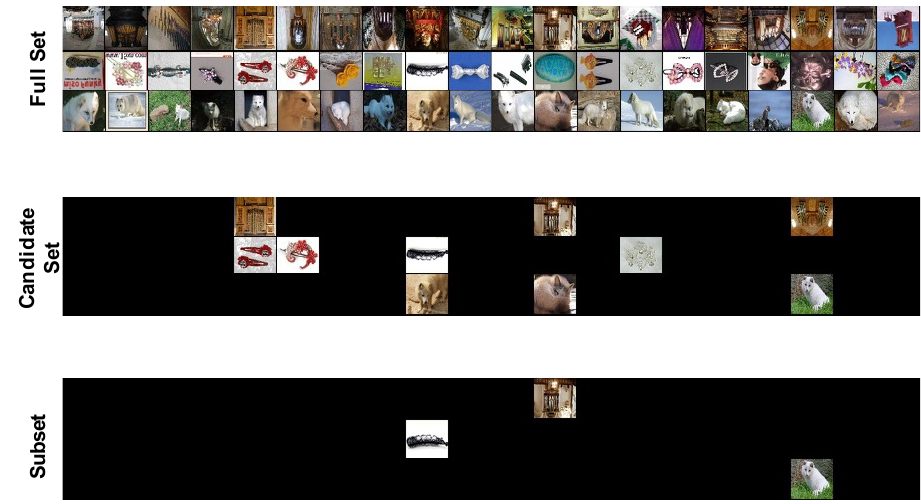}
  \caption{\small Sample visualization of prototype selection for the \textit{mini}Imagenet dataset on the few-shot classification task. Each row
  represents a \textit{set} that corresponds to the \textit{support} from which a prototype is selected for the few-shot classification task.}
\label{miniImagenet}
\end{figure*}

\section{Model Specifications}\label{model-spec}
In this section, we describe the main components of our Set based Stochastic Sumbsampling models --- $g(d), \rho(\overline{d})$ and $\varphi \circ f(D^{(t)}_c, D_s^{(t-1)})$.

For all the experiments, we design $g$ as feedforward neural network with ReLU to project each instance $d$ to lower dimension and average them to obtain the set representation $D_e$, following DeepSets~\citep{zaheer2017deep}. 

We parameterize $\rho(\cdot)$ with a 3 layered feedforward neural network $h$ followed by sigmoid function as described in Equation \ref{can-ber} from Section 3.3. For $f$, we use the function $g$ in Eq.~\ref{set-enc} to extract feature map for each instances in $D^{(t)}_c, D^{(t-1)}_s$ and feed it to set transformer~\citep{lee2018set} for set classification as follows:
\begin{equation}
\begin{gathered}
    f(D^{(t)}_c, D^{(t-1)}_s) = \text{MAB}(D^{(t)}_c, D^{(t-1)}_s) \\
    \text{MAB}(D^{(t)}_c, D^{(t-1)}_s)  = \text{LayerNorm}(H +\text{rFF}(H)) \\
    H=\text{LayerNorm}(D^{(t)}_c + \text{Multihead}(D^{(t)}_c, D^{(t-1)}_s,D^{(t-1)}_s))
\end{gathered}
\end{equation}
where rFF  is a row-wise feedfoward layer which processes each instance independently and Multihead denotes Multihead Attention~\citep{transformer} with each slot of Multihead$(\cdot, \cdot,\cdot)$ representing query, key, and value, respectively.
We use linear layer for $\varphi$ to output logits for each element in $D_c^{(t-1)}$.

\subsection{Attention}
We further elaborate the details of Attention and Multihead Attention for completeness. For a more thorough exposition, we refer the reader to 
\citet{transformer} and \citet{lee2018set}.

An attention module computes the following interactions using the dot product:
\begin{equation}
    \text{Att}(Q, K, V; \omega) = \omega (QK^{\top})V
\end{equation}
where $Q \in \mathbb{R}^{n_q \times d_q}$ are the $n_q$ query vectors each with of dimension $d_q$. $K \in \mathbb{R}^{n_v \times d_q} \text{ and } 
V \in \mathbb{R}^{n_v \times d_q}$ are the keys and values respectively. Interactions are modelled through $QK^{\top}$ and $\omega$ is an 
activation function such as softmax or sigmoid.

Multihead attention projects $Q, K, V$ to $h$ different vectors each with $d_q^M, d_q^M, d_v^M$ dimensions and computes $h$ different 
attention modules according to the following:
\begin{equation}
    \text{Multihead}(Q, K, V; \lambda, \omega) = \text{concat}(O_1, \ldots, O_h)W^{O}
\end{equation}
where
\begin{align}
    O_j &= \text{Att}(QW_j^Q, KW_j^K, VW_j^V; \omega_j)
\end{align}
The leanarble parameters for Multihead Attention are $\lambda = \{W_j^Q, W_j^K, W_j^V\}_{j=1}^{h}$, where  
$W_j^Q, W_j^K \in \mathbb{R}^{d_q \times d_q^M}$, $W_j^V \in \mathbb{R}^{d_v \times d_v^M}$ and $W^O \in \mathbb{R}^{hd_v^M \times d}$. In all 
our experiments, we use sigmoid as the activation function.

\subsection{Architectures for MNIST Classification}\label{app:mnist_architecture}
\textbf{MLP} We use an architecture with 5 layers with outputs 784, 256, 128, 128 and 10 respectively. With the exception of the last layer, all layers are followed by a LeakyReLU activation function. The 3rd linear layer is also followed by a dropout layer with $p=0.2$. This is the same architecture used in ~\citet{dps}. However, we test on the full MNIST test set instead of using half for validation and the remaining for testing as done in ~\citet{dps}.

\textbf{ConvNet} We use two convolutions with output channels 32 and 64 respectively. All convolutions have kernel size of 3 with stride 1 and are followed by ReLU activation. The final convolution layer is followed by a Maxpooling layer with kernel size 2. This is followed by two linear layers with output sizes of 128 and 10 and the first linear layer is followed by the ReLU activation function. 

\subsection{Architecture for CelebA Attribute Classification}\label{app:celeba_cls_architecture}
\textbf{ConvNet} We use 4 convolutions each with outputs 64, 128, 256 and 512 respectively. Each convolution is followed by a batch normalization layer, ReLU activation and a Maxpooling layer with kernel size 2. This is followed by 3 linear layers with outputs 1024, 256 and 40 respectively. The first two linear layers are followed by the ReLU activation function and the last by the Sigmoid function. 

\subsection{Architecture for Regression Problems} For all the regression problems, we use the ANP model of ~\citet{anp}.



\section{Experiments on CIFAR10 Dataset}\label{app:cifar10}
\begin{figure}[H]
\vspace{-0.1in}
\centering
	\begin{subfigure}{.2\textwidth}
		\centering
		\includegraphics[width=\linewidth]{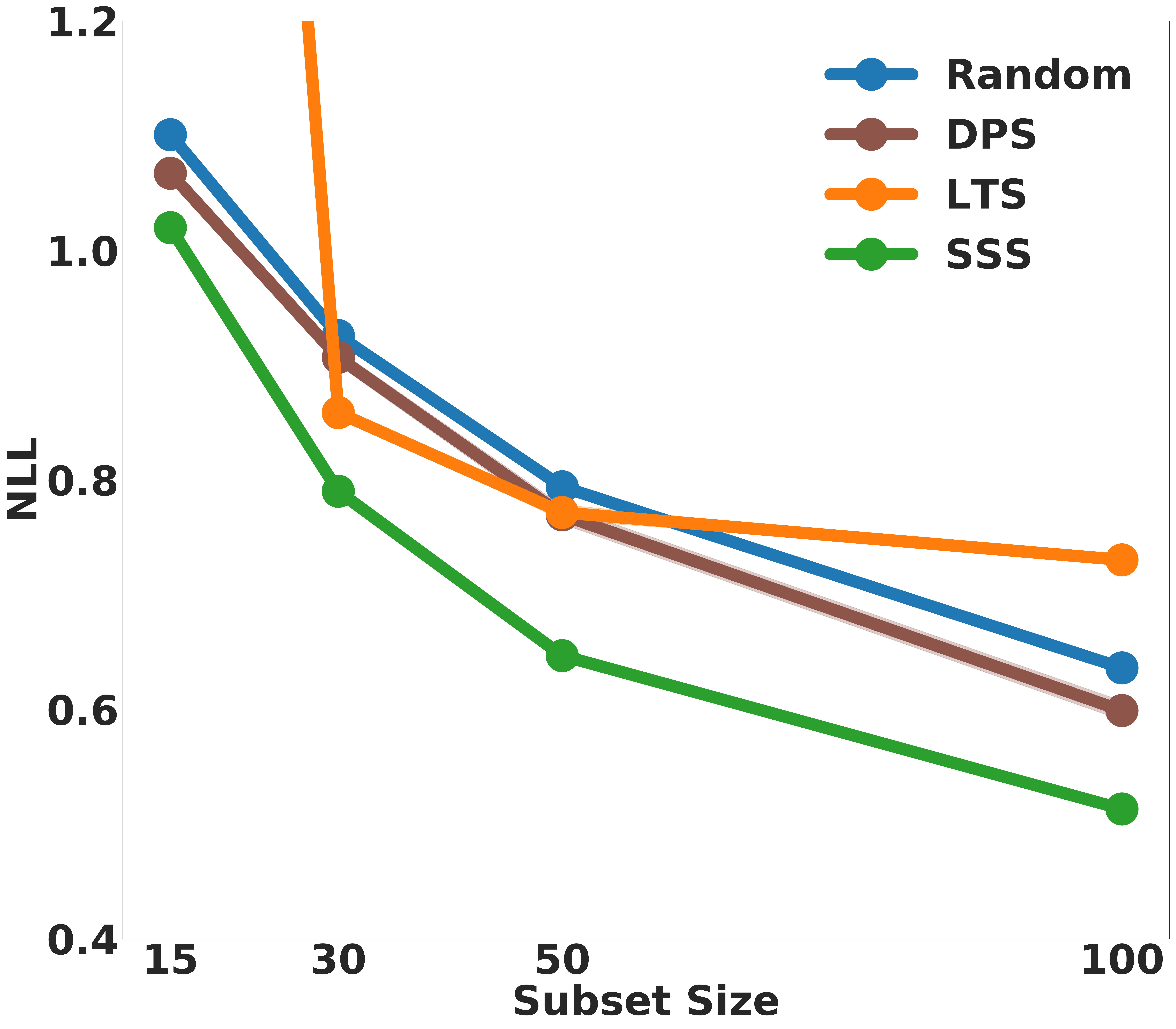}
		 \vspace{-0.25in}
		\caption{\small}
		\label{cifar10_reconstruction}
	\end{subfigure}%
	\begin{subfigure}{.2\textwidth}
		\centering
		\includegraphics[width=\linewidth]{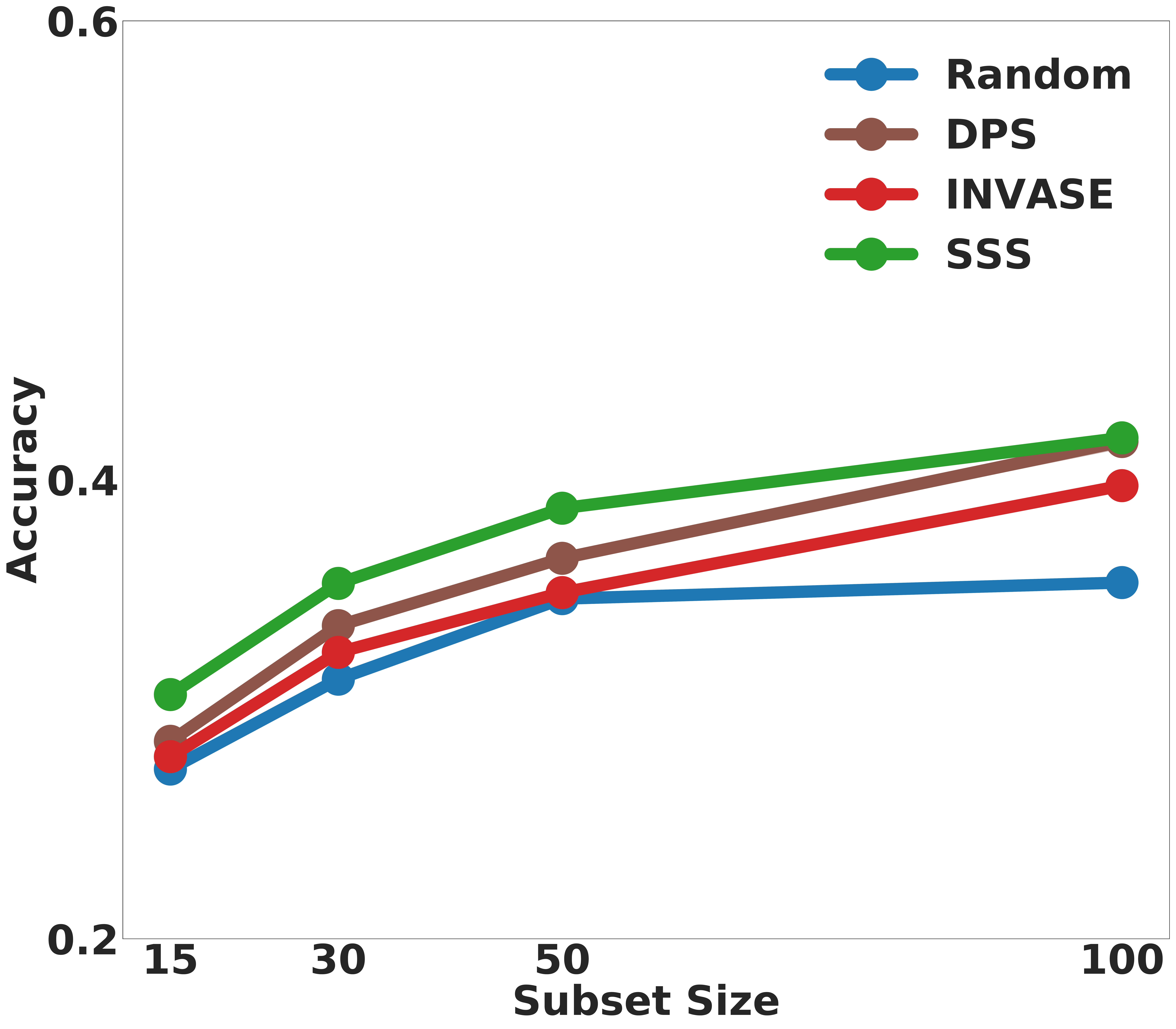}
		\vspace{-0.25in}
		\caption{\small}
		\label{cifar10_classification}
	\end{subfigure}
	\caption[Reconstruction and Classification on CIFAR10]{\small
	\textbf{(a)} CIFAR10 Reconstruction. \textbf{(b)} CIFAR10 Classification.
	}
	\label{fig:cifar10}
\end{figure}
We provide further experimental results on the CIFAR10 dataset in which we subsample pixels for both image reconstruction and image classification. In order to compare with both ~\citet{dps} and ~\citet{invase}, we convert all the images to grayscale, following~\citet{dps} (see Section 4.3). 

The experimental result of the CIFAR10 reconstruction task is presented in Figure~\ref{cifar10_reconstruction}, where for the same subsampling rates, SSS outperforms the competing baselines (DPS, INVASE, LTS and Random Sampling) in terms of the negative log-likelihood. For this task, we use ANP model as we have done in  CelebA and function reconstruction tasks in Section~\ref{sec41}. We do not compare with INVASE since INVASE requires two copies of the reconstruction model and requires more GPU memory.

In Figure~\ref{cifar10_classification}, we present the results for the CIFAR10 classification task. Again we observe that for the same subsampling rate, SSS performs better than DPS, INVASE and Random Sampling and the performance of the same classification model trained on the full input image is $0.70\pm0.02$. Note that we cannot compare with LTS for the same reasons given in Section~\ref{sec:baselines}. 

\section{Experiments on OCT Dataset}
\begin{wrapfigure}{t}{0.2\textwidth}
\small
\centering
\vspace{-0.15in}
\captionof{table}{\small Acc. with \textbf{200} out of 1024 pixels on OCT.}
\resizebox{0.2\textwidth}{!}{
\renewcommand{\arraystretch}{0.75}
\renewcommand{\tabcolsep}{0.5mm}
\begin{tabular}{cccc|c}
\toprule
Random & INVASE & DPS & \textbf{SSS} & Full \\ 
\midrule
38.80 & 41.80 & 76.70 & \textbf{85.90} & 95.50\\
\bottomrule
\end{tabular}
}
\vspace{-0.17in}
\label{tab-exp}
\end{wrapfigure}
We also provide additional results on the \href{https://data.mendeley.com/datasets/rscbjbr9sj/2}{Optical Coherence Tomography} dataset where we need to select the most informative features from tomographic images for prediction and interpretation of diseases in the retina. As shown in Fig.~\ref{fig-vis}, SSS selects \emph{bumpy} regions of the retina crucial for diagnosis of Diabetic Macular Edema. The quantitative result from Table~\ref{tab-exp} further confirms the effectiveness of our method. 

\begin{figure}[H]
\centering
\vspace{-0.14in}
  \includegraphics[width=0.99\linewidth]{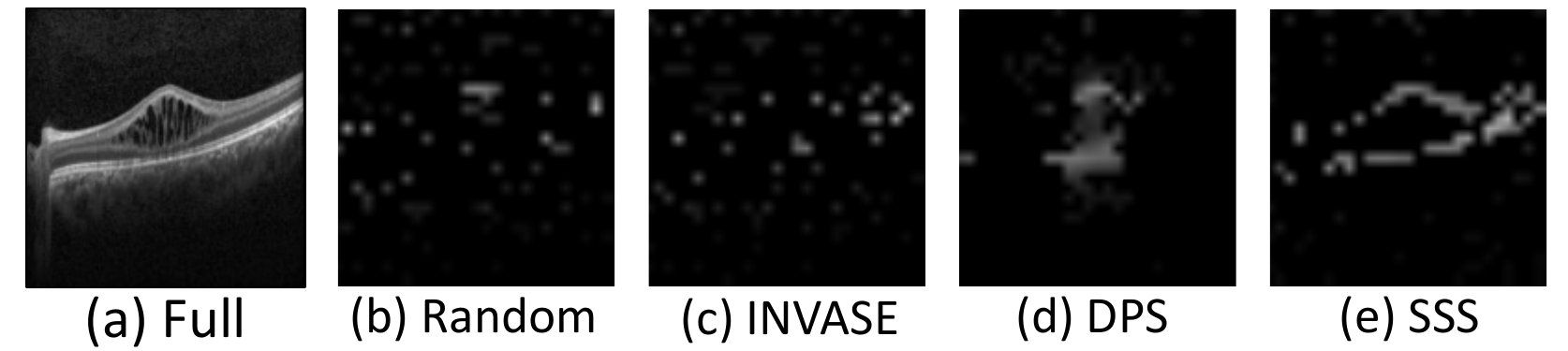}
  \vspace{-0.16in}
  \caption{\small Visualization of selected 100 pixels.}
  \label{fig-vis}
  \vspace{-0.22in}
\end{figure}


\end{document}